\setlist[itemize]{leftmargin=1.5pc}
\renewcommand{\mod}{\delta}
\theoremstyle{definition}
\newtheorem{assumption}{Assumption}
\newtheorem{remark}{Remark}
\newtheorem{definition}{Definition}
\newtheorem{theorem}{Theorem}
\newtheorem{proposition}{Proposition}
\newtheorem{lemma}{Lemma}
\newtheorem{corollary}{Corollary}
\newtheorem{alg}{Algorithm}
\newtheorem{example}{Example}
\colorlet{sgreen}{black!65!green}
\newcommand{\Prate}{\tilde{\varepsilon}_P}
\newcommand{\Abound}{\boldsymbol{\alpha}}
\newcommand{\Aconst}{C_{0}}
\newcommand{\thisHypName}{}
\newtheorem*{genericHyp}{\thisHypName}
\newcommand{\loss}{\ell}
\newcommand{\disc}{\mathrm{disc}}
\newcommand{\semiMetric}{{\rm dist}}
\newcommand{\semiDist}[2]{\semiMetric \!\left( #1 , #2 \right)}
\newcommand{\KLDiv}[2]{\mathcal{D}_{\text{kl}} \!\left(#1 | #2 \right)}
\newcommand{\imod}{\mod^{\scalebox{0.8}{-}\scalebox{0.6}{1}}}
\newcommand{\PQ}{\scalebox{0.6}{P,Q}}
\newcommand{\pivot}{\boldsymbol{\varepsilon}_{\PQ}}
\newcommand{\EPQ}{{\boldsymbol{\mathcal E}}_{\PQ}}
\newcommand{\dist}{\mathrm{dist}}
\newcommand{\X}{\mathcal{X}}
\newcommand{\Y}{\mathcal{Y}}
\newcommand{\Hyp}{\mathcal{H}}
\renewcommand{\H}{\Hyp}
\newcommand{\Xspl}{\mathbf{X}}
\newcommand{\Yspl}{\mathbf{Y}}
\newcommand{\V}{d_\Hyp}
\newcommand{\VV}{d}
\newcommand{\E}{\mathcal{E}}
\newcommand{\EE}{\Expectation}
\newcommand{\real}{\mathbb{R}}
\newcommand{\reals}{\real}
\newcommand{\overbar}[1]{\mkern 1.5mu\overline{\mkern-1.5mu#1\mkern-1.5mu}\mkern 1.5mu}
\newcommand{\exreals}{\overbar{\reals}_{+}}
\newcommand{\expec}{\mathbb{E} }
\newcommand{\norm}[1]{\left\|#1\right\|}
\newcommand{\abs}[1]{\left|#1\right|}
\newcommand{\paren}[1]{\left(#1\right)}
\newcommand{\braces}[1]{\left\{#1\right\}}
\newcommand{\pr}[1]{\mathbb{P}\left(#1\right)}
\newcommand{\prf}[2]{\mathbb{P}_{#1}\left(#2\right)}
\newcommand{\prob}{\mathbb{P}}
\newcommand{\AQ}{\mathcal{\ddot E}_Q}
\DeclareSymbolFont{bbold}{U}{bbold}{m}{n}
\DeclareSymbolFontAlphabet{\mathbbold}{bbold}
\newcommand{\ind}[1]{{\mathbbold 1}\!\left\{#1\right\}}
\renewcommand{\dim}{d}
\DeclareMathOperator*{\argmin}{argmin}
\newcommand{\diam}{\text{diam}}
\DeclareMathOperator*{\Expectation}{\expec}
\DeclareMathOperator*{\Prob}{\prob}
\newcommand{\nats}{\mathbb{N}}
    \newcommand{\hstar}{h^{\!*}}
\newcommand{\hstarP}{\hstar_P}
\newcommand{\hstarQ}{\hstar_Q}
\def\ERM{\mathrm{ERM}}
\def\hhat{\hat{h}}
\def\conf{\hat{\Hyp}}
\newsavebox{\savepar}
\title{Adaptive Sample Aggregation In Transfer Learning}
\author{
 Steve Hanneke \\
  Purdue University\\
  \texttt{steve.hanneke@gmail.com} \\
   \And
 Samory Kpotufe \\
  Columbia University \\
  \texttt{samory@columbia.edu}
}
\begin{document}

\maketitle

\begin{abstract}
\emph{Transfer Learning} aims to optimally aggregate samples from a target distribution, with related samples from a so-called source distribution to improve target risk. Multiple procedures have been proposed over the last decade to address this problem, each driven by one of a multitude of possible divergence measures between source and target distributions. A first question asked in this work is whether there exist unified algorithmic approaches that automatically adapt to many of these divergence measures simultaneously. 

We show that this is indeed the case for a large family of divergences proposed across classification and regression tasks, as they all happen to upper-bound the same \emph{measure of continuity} between source and target risks, which we refer to as a \emph{weak modulus of transfer}. This more unified view allows us, first, to identify algorithmic approaches that are simultaneously adaptive to these various divergence measures via a reduction to particular confidence sets. Second, it allows for a more refined understanding of the statistical limits of transfer under such divergences, and in particular, reveals regimes with faster rates than might be expected under coarser lenses. 

We then turn to situations that are not well captured by the weak modulus and corresponding divergences: these are situations where the aggregate of source and target data can improve target performance \emph{significantly} beyond what's possible with either source or target data alone. We show that common such situations---as may arise, e.g., under certain causal models with spurious correlations---are well described by a so-called \emph{strong modulus of transfer} which supersedes the weak modulus. We finally show that the strong modulus also admits adaptive procedures, admittedly of a more complex nature, which achieve near optimal rates in terms of the unknown strong modulus, and therefore apply in more general settings. 
\end{abstract}

\begin{keywords}{Transfer Learning, Domain adaptation, Spurious Features.}
\end{keywords}

\section{Introduction}
\emph{Domain Adaptation} or \emph{Transfer Learning} refer generally to the problem of how to optimally combine training data drawn from a target distribution $Q$, with related data from a source distribution $P$ to improve target performance. 
This problem has been researched over the last few decades with a recent resurgence in interest driven by modern applications that are often characterized by a scarcity of perfect target data. This in turn has given rise to diverse algorithmic approaches, each driven by one of many proposed measures of divergence between source and target $Q$. A first question asked in this work is whether there exist unified algorithmic principles that can automatically adapt to many of these measures of divergence simultaneously. 

Our first results establish that, unified algorithmic approaches are possible, across classification and regression, for a large family of divergences that intrinsically rely on the relationship between source and target risks. Many such divergences have been proposed over the last couple decades, 
starting with the seminal works of \cite{mansour2009domain, ben2010theory} on refinements of \emph{total-variation} for domain adaptation in classification, to more recent proposals for domain adaptation in regression, e.g., Wasserstein distances \citep{redko2017theoretical, shen2018wasserstein}, or measures relating covariance structures across $P$ and $Q$ as in \citep{mousavi2020minimax, zhang2022class, ge2023maximum}. While these various notions of relatedness appear hard to compare at first glance, they rely on similar intuition that \emph{transfer is easiest if low source risk implies low target risk}. As such, they all happen to upper-bound the same basic structure between source and target risks, which we refer to as a \emph{weak modulus of transfer}. This unified view allows us to derive algorithmic approaches---based on the existence of certain confidence sets---that can simultaneously adapt to a priori unknown such divergences. 

Moreover, this unified view yields immediate insights into the statistical limits of the aforementioned family of divergences, at least in a minimax sense. In particular, we can identify situations that they do not adequately capture: roughly, these these are situations where the aggregate of source and target data can be significantly more predictive (for the target task) than the best of source or target data alone. Our second set of results is to show that these situations are better captured by a refined structure between source and target risks, which we refer to as a \emph{strong modulus of transfer}. As we show, the strong modulus also admits adaptive procedures, i.e., need not be estimated, and results in transfer rates never worse or strictly better than for the weak modulus (and the family of divergences it captures). 

Finally, we provide a full characterization of the \emph{gap} between weak and strong moduli, i.e., for transfer problems where the strong modulus results in strictly better rates: these turn out to depend on a certain notion of \emph{monotonicity}, i.e., whether target risk does indeed decrease with source risk. Interestingly, common convex settings---with convex loss and hypothesis class---always yield \emph{monotone} transfer problems, with no gap between weak and strong moduli. On the other hand, some common feature selection settings in regression or classification, as often motivated via causal assumptions \citep{simon1954spurious, Sagawa*2020Distributionally}, are non-monotonic, and thus display strict gaps.

\paragraph{Formal Overview.} We consider a general prediction setting with joint distributions $P$ and $Q$ on $\X\times\Y$, and a fixed hypothesis class 
$\Hyp$ of functions $h: \X \mapsto \Y$. We consider general risks of the form $R_\mu(h) = \Expectation_\mu \ell(h(X), Y)$ for $\mu \equiv P$ or $Q$; of special interest is the \emph{excess risk} 
$\E_\mu(h) \doteq R_\mu(h) - \inf_{h'\in \Hyp}R_\mu(h')$. Risk minimizers for $P$ and $Q$ need not exist, or be the same. The learner has access to labeled data from $P$ and some or no labeled data from $Q$. 

$\bullet$ The \emph{weak modulus of transfer} is given as the function 
$\mod(\epsilon) \doteq \sup\braces{\E_Q(h): h \in \Hyp, \ \E_P(h) \leq \epsilon}.$
Roughly,
$\mod(\epsilon)$ captures the extent to which decreasing risk under $P$ also decreases risk under $Q$. It is therefore not surprising that it is implicitly tied to many existing notions of relatedness (see Section \ref{sec:weakmod-upper-bounds} for examples such as $\cal Y$-discrepancy, transfer-exponent, Wasserstein, covariance ratios, etc.). While it should be clear that $\mod(\epsilon)$ trivially captures situations with no target $Q$ data, we are interested in how well it captures general settings with a mix of $P$ and $Q$ data, especially given that it is a priori unknown and not easily estimated (as it involves the unknown infimum risks). We show in Theorem \ref{thm:single-delta}, that adaptive procedures, based on reductions to \emph{weak confidence sets}, can achieve \emph{transfer rates} of the form 
\begin{align}
\E_Q(\hat h) \lesssim \min\braces{\epsilon_Q, \mod(\epsilon_P)}, \label{eq:intro-rates}
\end{align}
where $\epsilon_\mu$, for $\mu$ denoting $Q$ or $P$, is roughly the best rate achievable {under $\mu$} given $n_\mu$ samples from $\mu$;
for example in classification, typically $\epsilon_u \approx n_\mu^{-1/2}$ or faster, while in linear regression with squared loss, $\epsilon_u \approx n_\mu^{-1}$. Thus, the rate of \eqref{eq:intro-rates} interpolates between the best of using source or target data, whichever is most beneficial to the target task. More importantly, by instantiating these rates (and confidence sets) for traditional regression and classification settings (Section \ref{sec:weak-confidence-set}), we obtain unified rates that automatically extend to many 
existing relatedness measures simultaneously. {Furthermore, we show in Theorems \ref{thm:mod-class-lower-bound} and \ref{thm:reg-LwBnd}
via a classification and regression lower-bound that the adaptive transfer rates of \eqref{eq:intro-rates} cannot be improved without additional structural assumptions}.

$\bullet$ The \emph{strong modulus of transfer} follows as a natural refinement on the weak modulus, and aims to better capture situations where the aggregate of source and target data might yield strictly better rates than \eqref{eq:intro-rates}, i.e., the best of source or target data alone: for intuition, suppose, e.g., that the source distribution $P$ admits two distinct risk minimizers ${\hstar_{P, 1}}, {\hstar_{P, 2}}$, the first with $\E_Q({\hstar_{P, 1}})\approx 0$, while the second has large {$\E_Q({\hstar_{P, 2}}) \geq \frac{1}{4}$.}
For example, ${\hstar_{P, 2}}$ might focus on spurious features (e.g. background color in object classification) that is predictive for data from $P$, but is harmful for prediction under $Q$. Then, with enough $Q$ data, this might be evident, 
while the weak-modulus is limited by $\hstar_{P, 2}$. With this intuition in mind, a simplified version of the strong modulus roughly takes the form 
$$\mod(\epsilon_1, \epsilon_2) \doteq \sup \braces{\E_Q(h): h \in \Hyp, \ \E_Q(h) \leq \epsilon_1 \text{ and } \E_P(h) \leq \epsilon_2},$$
for a range of values of $\epsilon_1, \epsilon_2$ (see Section \ref{sec:strongConfBounds} for an exact definition based on localization of the wek modulus under $Q$). Importantly, the principles underlying in the strong modulus nor in existing measures it lower bounds. We show in Theorem \ref{thm:double-delta} that the strong modulus admits adaptive procedures, via a reduction to \emph{strong confidence sets}. The rates of transfer are then of the form 
$\E_Q(\hat h) \lesssim \mod(\epsilon_Q, \epsilon_P) \leq \min \braces{\epsilon_Q, \mod(\epsilon_P)}$. As shown in Theorem \ref{thm:strong-modulus-learning-lower-bound}, these rates are tight even for classes of problems where $\mod(\epsilon_Q, \epsilon_P) \ll \min \braces{\epsilon_Q, \mod(\epsilon_P)}$, i.e., when there are gaps between moduli. 

$\bullet$ We derive in Theorem \ref{thm:gap-iff-non-monotonic} a complete characterization of all transfer problems having gaps between weak and strong moduli, yielding new insights into the implicit \emph{monotonicity assumption}---between source and target risks---underlying many divergence measures proposed so far in the literature on transfer. 
Such monotonicity conditions become more nuanced when one considers target risks below that of the best source predictors, indeed requiring reversing the direction of monotonicity between source and target risks.

{While the analysis presented here remains of a theoretical nature, the resulting design principles have many desirable practical implications: for one, they allow for adaptivity to many measures of relatedness at once, when the source is informative, while also avoiding \emph{negative transfer} \citep{zhang2022survey}, by automatically biasing towards the target data when the source is uninformative, as evidenced e.g., by the rates of \eqref{eq:intro-rates}. We hope that these design principles may yield insights into more practical procedures, e.g., via efficient approximations of confidence sets---which should be viewed as standing for the learner's ability to \emph{identify and retain good prediction candidates}, given limited information about the strength of the relation between source and target distributions.}
For instance, it is easily seen that in linear regression with squared loss, our generic approaches result in efficiently solvable convex programs (see Remark \ref{rem:convex-program}). 
{This is potentially the case with other tractable losses on favorable domains, which is left as an open direction.}

\paragraph{Other Related Works.} 
A recent work of \citet*{zhang2021quantifying} discusses a notion of $(\epsilon_Q, \epsilon_P)$-\emph{transferability} which is closely related, as it may be redefined 
as holding for any $\epsilon_Q \geq \mod(\epsilon_P)$. Their focus however is on defining localized notions of discrepancy that may be estimated from data (e.g., towards re-weighting the source data); in particular they rediscover a localized notion of $\cal Y$-discrepancy analyzed in \citep{hanneke2019value} which they relate to \emph{transferability} and show how to estimate. As described above, our focus instead is on adapting---minimax optimally---to the weak modulus $\mod(\epsilon)$ itself, and understanding situations that allow even faster rates via adaptation to the strong modulus. 

As alluded to so far, the strong modulus appears related in spirit to recent works on robustness against spurious features and invariant risk minimization \cite[see e.g.,][]{baktashmotlagh2013unsupervised, gong2016domain, arjovsky2019invariant, wu2023prominent, heinze2021conditional, kostin2024achievable}, as it similarly aims to distinguish between good hypotheses under the source in terms of how much their risks vary under the target distribution. However, a key distinction is that, by considering a continuum of possible excess risks, the strong modulus allows us to integrate in the statistical uncertainty inherent in finite-sample regimes.

A recent line of work on \emph{testing} for transferability is related to our aim of \emph{adaptivity} in that both address the problem of \emph{negative transfer} by automatically biasing towards target data if the source data can be hurtful. For instance, \cite{klivans2024testable} introduces a testing paradigm where one is to reject the source if it leads to large target error, but however do not aim to adapt to every situation where the source is useful. On the other hand, for the problem of mean estimation \cite{fermanian2024high} tests for source distributions whose first and second moments are sufficiently close to that of the target to help improve performance. As such they aim to be adaptive to a given discrepancy measure encoded by difference in moments. It remains unclear when such discrepancies are appropriate, beyond the mean estimation problem.

Various works on transfer learning consider instead nonparametric regression or classification settings \citep[see, e.g., ][]{kpotufe2021marginal, cai2021transfer, scott2019generalized, reeve2021adaptive, pathak2022new}, e.g., with smoothness assumptions leading to relevant notions of relatedness which are not addressed in the present work. In particular, these works consider relations between functionals of the distributions $P, Q$ that serve to upper bound excess risk, as opposed to direct relationships between the excess risks under $P$ and $Q$ themselves. 

Also, many works consider more structural assumptions relating source and target \citep[see, e.g.][]{blitzer2011domain, balcan2019provable, du2020few, tripuraneni2020theory,ben-david:03,ando2005framework,Baxter-model,Baxter-Bayesian,thrun:98,yang2013theory,aliakbarpour2024metalearning,alon2024erm,meunier2023nonlinear}, e.g., regression functions might share the same underlying index subspace (as related to \emph{representation learning}), leading to decreases in problem complexity which are not captured by this work.

\section{Preliminaries}

 \paragraph{Basic Definitions.} 
Let $X, Y$ be jointly distributed according to some measure $\mu$ (later $P$ or $Q$), where $X$ is in some domain $\X$ and $Y\in \Y$. A \emph{hypothesis class} is a set $\Hyp$ of measurable functions $\X \mapsto \Y$. Given a 
loss function $\loss: \Y^2 \mapsto \real_+$, we consider risks 
$R_\mu(h) \doteq \expec_\mu \ \loss(h(X), Y)$, as measured under a given $\mu$. 

\begin{assumption}[Finiteness]
\label{asn:finiteness}
We assume throughout that $R_\mu(h) < \infty, \forall h \in \Hyp$, and for measures $\mu$ considered.
\end{assumption}

Note that the above is a mild assumption, as it does not require the loss to be uniformly bounded.

For example, the case of binary classification corresponds to $\Y = \{\pm 1\}$, where we often choose 
$\loss(y,y') = \ind{ y \neq y' }$, 
and results below will depend on the VC dimension or Rademacher complexity of the class $\Hyp$ (see, e.g., \cite{VC:72,koltchinskii:06}).
As another example, the case of regression corresponds to $\Y \subset \reals$, 
and we may choose $\loss(y,y') = ( y - y' )^2$, 
and results below may depend, for instance, on the covering numbers or pseudo-dimension of the class $\Hyp$ (see, e.g. \cite{anthony1999learning}).

\begin{remark}
\label{rem:abstractness}
Our general results will in fact capture quite abstract dependences on $\Hyp$, 
via an abstractly-defined notion of confidence sets,
and any notion of complexity that allows for such confidence sets are therefore admissible.
\end{remark}

\begin{definition} 
The {\bf excess risk w.r.t. a (non-empty) subclass $\Hyp_0 \subset \Hyp$}, and a joint distribution $\mu$, is defined as 
$$\E_\mu(h; \Hyp_0) \doteq R_\mu(h) - \inf_{h' \in \Hyp_0} R_\mu(h').$$
I particular, we will just refer to $\E_\mu(h; \Hyp)$ as \emph{excess risk}, and often write 
$\E_\mu(h)$ in this case for simplicity. 
\end{definition} 

We will need the following useful definition \citep{koltchinskii:06}. Let $\exreals = (0,\infty]$ denote the extended positive reals. 

\begin{definition}[$\epsilon$-Minimal Set]
   For any distribution $\mu$, and $\epsilon \in \exreals$, define 
   $\Hyp_\mu(\epsilon) \doteq \braces{ h \in \Hyp : \E_\mu(h) \leq \epsilon }$. 
\end{definition}

Clearly, $\Hyp_\mu(\infty)$ is just $\Hyp$, so the inclusion of $\infty$ is just for convenience as we will see later.  

\paragraph{Transfer Setting.} 
We consider \emph{source} and \emph{target} distributions $P$ and $Q$ on $(X, Y)$, where we let $\E_P, \E_Q$ denote excess-risks under $P$ and $Q$. We are interested in excess risk $\E_Q(\hat h)$ of classifiers trained jointly on $n_P$ i.i.d samples $S_P$ from $P$, and $n_Q$ i.i.d. samples $S_Q$ from $Q$. Which $\E_Q$ is achievable necessarily depends on the \emph{discrepancy} $P\to Q$ appropriately formalized.

\section{Weak Modulus of Transfer} \label{sec:weak}

\begin{wrapfigure}{r}{0.17\textwidth}
\vspace{-0.5cm}
  \begin{center}
    \includegraphics[width=0.17\textwidth]{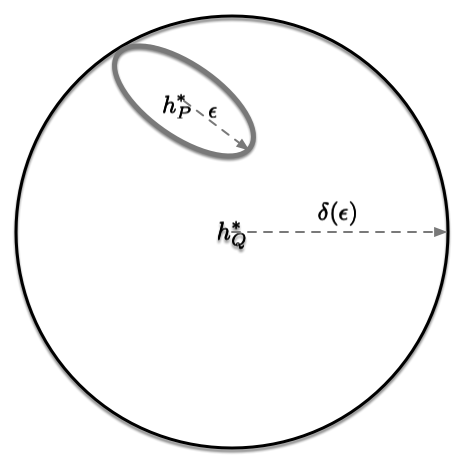}
  \end{center}
  \caption{\small We assume no risk minimizers in the analysis, however the reader might find the geometric illustration  more intuitive. The ellipsoid centered at $\hstar_P$ depicts $\Hyp_P(\epsilon)$, while $\mod(\epsilon)$ is the smallest $\epsilon_Q$ s.t. $\Hyp_Q(\epsilon_Q)$ (illustrated as a ball cantered at $\hstar_Q$) contains $\Hyp_P(\epsilon)$.}
    \label{fig:weakModulus}
  \vspace{-1.5cm}
\end{wrapfigure}
The first notion considered below serves to capture the reduction in target $Q$-risk induced by small source $P$-risk, i.e., by a potentially large amount of $P$ data. In particular, as we will see, $\epsilon$ is to stand for the best risk achievable under $P$ given a fixed amount of data from $P$ (see illustration of Figure \ref{fig:weakModulus}).

\begin{definition}[Weak Modulus] \label{def:weak} 
Let $\epsilon > 0$.
Define the modulus
$$\mod(\epsilon) \doteq \sup \braces{\E_Q(h): h\in \Hyp_P (\epsilon)},$$
or written as $\mod_{\PQ}(\epsilon)$ when the dependence on distributions is to be made explicit.
\end{definition}

As we will argue in this section, the above simple definition already captures the bulk of notions of discrepancies proposed in the literature on transfer learning.

The following quantity will be of general interest in this work, and is defined for emphasis. In particular, it will already first be discussed in Section \ref{sec:weakmod-upper-bounds} below when considering relations to measures of discrepancy, and will also turn out crucial in Section \ref{sec:strongConfBounds} when discussing improvements under the \emph{strong modulus}. 

\begin{definition}[Upper Pivot]
We denote $\pivot^\sharp \doteq \lim_{\epsilon\to 0}\mod(\epsilon)$. 
\end{definition}

This quantity may be viewed losely as the worse $Q$ excess-risk of $P$-risk minimizers if they exist. We will be considering a counterpart to this quantity later in Section \ref{sec:strongConfBounds}. 

Finally, the proposition below is trivial since $\Hyp_P(\epsilon)\subset \Hyp_P(\epsilon')$, for $\epsilon\leq \epsilon'$, but is given for emphasis. 

\begin{proposition}
The weak modulus $\mod(\epsilon)$ is non-decreasing in $\epsilon$, i.e., for all $\epsilon \leq \epsilon'$ it holds that $\mod(\epsilon) \leq \mod(\epsilon')$. 
\end{proposition}

\subsection{Some Existing Discrepancies vs Weak Modulus}\label{sec:weakmod-upper-bounds}

Here we consider a few examples of existing notions of discrepancy between source and target $P, Q$, and illustrate the types of bounds they imply on $\mod(\epsilon)$. These bounds were already implicit in past work, even while the weak modulus $\mod(\epsilon)$ was never explicitly defined as the main object of study, or as the implied notion of discrepancy. For intuition, whenever we can establish for a measure $\text{dist}(P, Q)$ that $\E_Q(h) \leq F(\E_P(h), \text{dist}(P, Q))$, for some non-decreasing $F$, it certainly follows that 
$\mod(\epsilon) \leq F(\epsilon, \text{dist}(P, Q))$. Details are given in Appendix \ref{app:relation2otherMeasures}. 

\paragraph{Some Discrepancies in Classification.} We first remark that some of the notions below can be stated generally beyond classification, e.g., the \emph{$\cal Y$-discrepancy} and the \emph{transfer-exponent}, however they usually appear in works on classification. In what follows assume $\loss(a, b) = \ind{a\neq b}$ and for simplicity suppose $\Y = \{-1,1\}$ (though the claims are valid for other choices of $\Y$). 
We start with the following main examples which imply others. 

$\bullet$ {\bf $\cal Y$-discrepancy} \citep*{mohri2012new}:  $\disc_{\cal Y}(P, Q) \doteq \sup_{h \in \Hyp}\abs{R_P(h) - R_Q(h)}$. For any $\epsilon\in (0, 1]$, 
\begin{align}
    \mod(\epsilon) \leq \epsilon + \disc_{\cal Y}(P, Q) +\paren{\inf_{h \in \Hyp} R_P(h) - \inf_{h \in \Hyp} R_Q(h)}. \label{eq:Y-disc-upper-bound}
\end{align}

$\bullet$ {\bf Localized $\cal Y$-discrepancy}: \citet*{hanneke2019value} instead defines $\disc_{\cal Y}'(P,Q) = \sup_{h \in \Hyp} |\E_P(h)-\E_Q(h)|, $
  which can be further \emph{localized} as 
 $\disc_{\cal Y}^*(P,Q; \epsilon_0) = \sup_{h \in \Hyp_P(\epsilon_0)} |\E_P(h)-\E_Q(h)|, $  
for any fixed $\epsilon_0> 0$ (see Example 4 of \citet*{hanneke2019value}, which was later rediscovered in \cite[Proposition 5]{zhang2021quantifying}). Clearly,  
$$\mod(\epsilon) \leq \epsilon + \disc_{\cal Y}^*(P,Q; \epsilon_0) \text{ for any } \epsilon \leq \epsilon_0.$$

$\bullet$ {\bf $\cal A$-discrepancy} \citep{ben2010theory}: $\disc_{\cal A}(P, Q) \doteq \sup_{h,h' \in \Hyp} \abs{ P_X( h \neq h' ) - Q_X( h \neq h' ) }$. For $\epsilon\in (0, 1]$, 
\begin{align} 
\mod(\epsilon) \leq \epsilon + \disc_{\cal A}(P,Q) + (R_P(\hstar_P)-R_Q(\hstar_Q)) + R_Q(\hstar)+R_P(\hstar).
\label{eq:A-disc-upper-bound}
\end{align} 

 $\bullet$ {\bf Transfer exponent} \citep{hanneke2019value}: a value $\rho \geq 0$ is called a transfer exponent if $\exists C_{\rho} > 0$ satisfying:
$\forall h \in \Hyp$, $\E_Q(h) \leq C_{\rho} \cdot \E_P^{1/\rho}(h) + \pivot^{\sharp}$.
For any $\epsilon \in (0,1]$, it's immediate that 
$$\mod(\epsilon) \leq C_{\rho} \cdot \epsilon^{1/\rho} + \pivot^{\sharp}.$$

We also discuss TV,  
KL divergence, and bounded density ratios \citep{sugiyama2008direct} in Appendix~\ref{app:relation2otherMeasures}.

\begin{remark}[Tightness]\label{rem:tightness}
These various notions appearing in the literature on classification are not directly comparable, but as we see here, many offer upper-bounds on $\mod(\epsilon)$ with varying degrees of tightness in various situations. \emph{Thus, any transfer rate in terms of $\mod(\epsilon)$ immediately yields a bound in terms of these existing notions.} As it turns out, our bounds in terms of $\mod(\epsilon)$ recover many proved so far in terms of these various quantities, and can often be tighter (Theorem \ref{thm:single-delta}).

The transfer exponent attempts to capture some desired properties of $\mod(\epsilon)$. For one, it is \emph{asymmetric} just as $\mod$ is, and thus does not deteriorate in situations where transfer is asymmetric---for instance $P_X$ covers the decision boundary under $Q$ but not vice versa. Second, the resulting upper-bound yields the same limit $\pivot^\sharp$ as $\mod(\epsilon)$, which captures large source sample regimes. 
However, given the arbitrariness of the polynomial form of the transfer-exponent, $\mod(\epsilon)$ can naturally be smaller. Similarly, the $\cal Y$-discrepancy and $\cal A$-discrepancy can be loose: we may have, e.g., that $\pivot^\sharp = 0$ while these discrepancies are not $0$, implying that the corresponding upper-bounds
are loose for small $\epsilon$. 

The localized ${\cal Y}$-discrepancy $\disc_{\cal Y}^*(P,Q;\epsilon_0)$ avoids this problem (taking $\epsilon_0=\epsilon$), asymptoting to $\pivot^\sharp$ similarly to the transfer exponent. 
In fact, notice that 
$\disc_{\cal Y}^*(P,Q;\epsilon_0) \leq \sup_{h \in \Hyp_P(\epsilon_0)} \max\{\E_Q(h),\E_P(h)\} \leq \max\{\mod(\epsilon_0), \epsilon_0\}$.
Thus, dropping $\epsilon_0$ for $\epsilon$, $\disc_{\cal Y}^*(P,Q;\epsilon)$ essentially captures 
the same quantification of $P \to Q$ transfer as $\mod(\epsilon)$ in situations where 
$\mod(\epsilon) \geq \epsilon$ for all $\epsilon$. 
Cases where $\mod(\epsilon) \ll \epsilon$ (e.g., cases with \emph{transfer exponent} $\rho < 1$)
are referred to as \emph{super-transfer} by \citet*{hanneke2019value}, who also argue that $\disc_{\cal Y}^*(P,Q;\epsilon)$ does not capture such cases.

\end{remark}

\paragraph{Some Discrepancies in Regression.}
The discussion here may be extended to classification by viewing $\ell(a, b)$ as a \emph{surrogate loss} (and the classifier as $\text{sign}(h)$ for example). In what follows, let $\loss(a,b)=(a-b)^2$ denote the squared loss, and $\cal Y \subset \real$. The discussion inherently assumes that $\expec \ Y^2 < \infty$ under $P$ or $Q$.

$\bullet$ {\bf Wasserstein Distance} \cite{redko2017theoretical, shen2018wasserstein}: Let $\cal L$ denote the set of $1$-Lipschitz functions on $\X$ w.r.t some metric. Then consider the \emph{integral probability metric} 
$W_1(P, Q) \doteq \sup_{f \in \cal L}\abs{\EE_{P_X} (f) - \EE_{Q_X}(f)}$. If $\Hyp \subset \lambda \cdot {\cal L}, \lambda > 0$, and is a set of bounded functions $\X \to [-M,M]$. Then $\forall \epsilon > 0$,
\begin{align}
\mod(\epsilon) \leq \epsilon + 8 M \lambda \cdot W_1(P,Q). \label{eq:W1-Upperbound}
\end{align}

$\bullet$ {\bf Covariance Ratios} \cite{zhang2022class, ge2023maximum}: consider linear regression, i.e., $X\in \real^d, Y \in \real$, and where
$\Hyp\doteq \braces{h_w(x) \doteq w^\top x: w\in \real^d}$, and let $\Sigma_\mu \doteq \expec_\mu XX^\top$, for $\mu \equiv$ $P$ or $Q$. Assume $\Sigma_P$ is full rank, and let $\hstar_P = \arg\min_{h} R_P(h)$. Then if we have for any $\epsilon > 0$, 
\begin{align}
& \text{if } \E_Q(\hstar_P) = 0 \text{ (relaxed covariate-shift condition) } \ \mod(\epsilon) \leq \lambda_{\text{max}}\paren{\Sigma_P^{-1} \Sigma_Q} \cdot \epsilon. 
\label{eq:cov-Ratio1}
\\ &  \text{otherwise, } \ \mod(\epsilon) \leq 2\lambda_{\text{max}}\paren{\Sigma_P^{-1} \Sigma_Q}\cdot\epsilon + 2\E_Q(\hstar_P). \label{eq:cov-Ratio2}
\end{align}

Similarly, certain variants of the 
MMD distance \citep{huang2007correcting}, another so-called \emph{integral probability metric}, 
may be viewed as special cases of ${\cal Y}$-discrepancy or extensions of the ${\cal A}$-discrepancy to the regression setting; 
see e.g., \citet{redko2017theoretical}. The implied bounds on $\mod(\epsilon)$ are analogous to those given in Examples \ref{ex:Y-disc} and \ref{ex:A-disc} above.

\begin{remark}[Tightness]\label{rem:tightness-reg}
We emphasize that $\mod(\epsilon)$ may be strictly smaller than the above bounds since \eqref{eq:W1-Upperbound} remains of a worst-case nature over \emph{all} Lipchitz functions rather than localized to the optimal functions in the class. 
\end{remark}

Upper-bounds similar to the above have been shown in various recent results of \cite{mousavi2020minimax, zhang2022class, ge2023maximum}. 
We note that a number of recent results, e.g. \cite{zhang2022class, pathak2023noisy, chen2024high}, consider more general linear cases with different risk minimizers under $P$ and $Q$, along with refinements on $\lambda_{\text{max}}\paren{\Sigma_P^{-1} \Sigma_Q}$ in their upper-bounds.

\subsection{Adaptive Transfer Upper-Bounds}
\begin{wrapfigure}{r}{0.17\textwidth}
\vspace{-0.5cm}
  \begin{center}
    \includegraphics[width=0.17\textwidth]{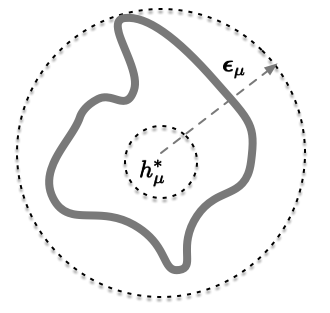}
  \end{center}
  \caption{\small Let $\mu$ denote $P$ or $Q$; an $\epsilon_\mu$-Confidence set $\conf_\mu$ is illustrated here with solid gray boundary; the \emph{ball} of radius $\epsilon_\mu$ centered at $\hstar_\mu$ represents $\Hyp_\mu(\epsilon_\mu)$. Access to such sets $\conf_\mu$ are sufficient for adapting to unknown modulus $\mod_{\PQ}(\cdot)$, implying adaptation to any discrepancy measure that $\mod_{\PQ}(\cdot)$ lower-bounds.}
  \label{fig:ConfSet}
  \vspace{-1.8cm}
\end{wrapfigure}

We now turn to the question of adaptivity to unknown $\mod(\cdot)$, which naturally implies adaptivity to any of the measures of discrepancy it lower-bounds. In particular, we will show in this section that such adaptivity is possible whenever the learning problem admits certain \emph{weak confidence sets} $\hat \Hyp_\mu$, where $\mu$ stands for $P$ and $Q$: these are sets estimated from data which, with high probability, (a) only contain good predictors, i.e., $\hat \Hyp_\mu$ is contained in some $\Hyp_\mu(\epsilon_\mu)$, for small $\epsilon_\mu$, and (b) also contain \emph{all} predictors with very low excess risk, i.e., it contains $\Hyp_\mu(\epsilon)$ for some $\epsilon \leq \epsilon_\mu$. Adaptive transfer rates will then be inherited from such confidence sets, i.e., as determined by $\epsilon_\mu$, for $\mu\equiv P$ or $Q$. 

Importantly, the procedures considered here need access to an element $\hat h_\mu$ of $\Hyp_\mu(\epsilon)$: for simplicity, in the present work we will let this element be an \emph{empirical risk minimizer} $\hat h_\mu$ as defined below. \emph{This choice restricts us to settings where the ERM has low excess risk}  since we want $\epsilon_\mu$ to be small as a function of sample size $n_\mu$. This will be the case, e.g., for classification with VC classes, or parametric regression over finite-dimensional spaces which all admit optimal ERM. 

Nonetheless, we remark that the ideas presented here extend to more general settings, e.g., infinite or high-dimensional prediction, by simply providing confidence sets which relies on a good estimator $\hat h_\mu$ other than the ERM (see, e.g., Remark \ref{rem:strongModwithoutERM} of Section \ref{sec:regression-strongConf}). 

We start with the following general definition of ERM which admits situations where the infimum empirical risk might not be attained (as we admit general $\ell$ and $\Hyp$).

\begin{definition} Let $\mu$ denote either $P$ or $Q$, and let $\hat{R}_\mu(h) \doteq \frac{1}{n_\mu} \sum_{(x,y) \in S_\mu} \loss(h(x), y)$ denote empirical risk over $S_\mu$. 
For any $\Hyp' \subset \Hyp$, 
we let $\hhat_\mu(\Hyp')$ denote the function returned by an {\bf empirical risk minimization} function 
$\ERM_\mu(\Hyp')$: 
namely, a function mapping the data set $S_\mu$ to an $h \in \Hyp'$ 
satisfying $\hat{R}_\mu(h) < \inf_{h' \in \Hyp'} \hat{R}_\mu(h') + e^{-n_{\mu}}$.
In particular, when $\Hyp' = \Hyp$, 
we simply write $\ERM_\mu = \ERM_\mu (\Hyp)$, and $\hhat_\mu \doteq \hhat_\mu(\Hyp)$.

\begin{remark} 
In common settings such as classification with $0$-$1$ loss, or linear regression with squared loss, we can let $
\ERM_\mu(\Hyp)$ denote any $h \in \Hyp$ achieving $\inf_{h' \in \Hyp} \hat{R}_\mu(h')$.  
\end{remark}

\end{definition}

\begin{definition}\label{def:weakconf} 
Let $\mu$ denote either $P$ or $Q$. 
Let $0< \tau \leq 1$ and $\epsilon_\mu>0$. We call a random set $\conf_\mu \doteq \conf_\mu(S_\mu)$ an {\bf $(\epsilon_\mu, \tau)$-weak confidence set} (under $\mu$) if the following conditions are met with probability at least $1-\tau$: 
$$\  {\rm (i)} \ \exists \hat{\epsilon}_\mu > \E_\mu(\hhat_\mu) \text{ such that } \Hyp_\mu(\hat{\epsilon}_\mu)\subset \conf_\mu,
\quad \text{ and }
\quad {\rm (ii)}\  \conf_\mu \subset \Hyp_\mu(\epsilon_\mu). 
$$
\end{definition}

We discuss construction of such weak confidence sets in Section~\ref{sec:weak-confidence-set}

In the following algorithm, the intent is that the input sets 
$\conf_Q$, $\conf_P$ are weak confidence sets
based on the $Q$-data $S_Q$ and $P$-data $S_P$, respectively. Relevant confidence parameters are stated in the theorem below. 

{\vskip 4mm}\begin{alg}\label{alg:weakupper}
Input: $\conf_Q$, $\conf_P \subset \Hyp$, and $\hhat_Q = \ERM_Q$
\begin{quote}
If $\conf_Q \cap \conf_P \neq \emptyset$, return any $h \in \conf_Q \cap \conf_P$, otherwise return $\hhat_Q$. 
\end{quote}
\end{alg}

The following result easily follows. 

\begin{theorem}
\label{thm:single-delta}
Let $0 < \tau \leq 1$, and suppose $\conf_Q$ is an $(\epsilon_Q, \tau)$-weak confidence set under $Q$, and $\conf_P$ is an $(\epsilon_P, \tau)$-weak confidence set under P, for some $ \epsilon_Q,\epsilon_P > 0$ depending respectively on $n_Q$ and $n_P$. Let $\hat h$ denote the hypothesis returned by Algorithm \ref{alg:weakupper}. We then have that, with probability at least $1-2\tau$: 
$$ \E_Q(\hat h) \leq \min \braces{\epsilon_Q, \mod(\epsilon_P)}.$$
\end{theorem}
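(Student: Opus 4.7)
The plan is to apply a union bound over the two high-probability events (one for each weak confidence set) so that, with probability at least $1-2\tau$, both $\conf_Q$ and $\conf_P$ simultaneously satisfy conditions (i) and (ii) of Definition~\ref{def:weakconf}. On this good event I will split into the two cases corresponding to the two branches of Algorithm~\ref{alg:weakupper}, namely whether $\conf_Q \cap \conf_P$ is empty or not, and in each case bound $\E_Q(\hat h)$ against both $\epsilon_Q$ and $\mod(\epsilon_P)$.

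Case 1 ($\conf_Q \cap \conf_P \neq \emptyset$) should be immediate. Any returned $\hat h$ lies in $\conf_Q \subset \Hyp_Q(\epsilon_Q)$ and in $\conf_P \subset \Hyp_P(\epsilon_P)$. The first inclusion gives $\E_Q(\hat h)\leq \epsilon_Q$; the second, combined with the supremum defining $\mod(\cdot)$ in Definition~\ref{def:weak}, gives $\E_Q(\hat h)\leq \mod(\epsilon_P)$. Taking the minimum finishes this case.

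Case 2 ($\conf_Q \cap \conf_P = \emptyset$) is where the real content lies; here $\hat h = \hhat_Q$. The $\epsilon_Q$ bound comes almost for free: by condition~(i) on $\conf_Q$ there is some $\hat\epsilon_Q > \E_Q(\hhat_Q)$ with $\Hyp_Q(\hat\epsilon_Q)\subset \conf_Q$, so $\hhat_Q \in \conf_Q \subset \Hyp_Q(\epsilon_Q)$. The tricky half is obtaining $\E_Q(\hhat_Q)\leq \mod(\epsilon_P)$ when the intersection is empty, and the trick I plan to use is to take $\hhat_P$ as a witness. By condition~(i) on $\conf_P$ we have $\hhat_P \in \Hyp_P(\hat\epsilon_P) \subset \conf_P$; emptiness of the intersection then forces $\hhat_P \notin \conf_Q$, hence $\hhat_P \notin \Hyp_Q(\hat\epsilon_Q)$, yielding the strict inequality $\E_Q(\hhat_P) > \hat\epsilon_Q > \E_Q(\hhat_Q)$. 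On the other hand, condition~(ii) on $\conf_P$ gives $\hhat_P \in \Hyp_P(\epsilon_P)$, so by the very definition of the weak modulus $\E_Q(\hhat_P)\leq \mod(\epsilon_P)$. Chaining these two estimates gives $\E_Q(\hhat_Q) < \mod(\epsilon_P)$, as required.

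The main obstacle is precisely this last argument in Case 2: one might initially worry that when the confidence sets do not intersect, there is no reason for $\hhat_Q$ to inherit any transfer guarantee from $\mod(\epsilon_P)$. The key insight is the asymmetry between the two clauses of Definition~\ref{def:weakconf}, which makes $\conf_P$ rich enough to contain the concrete witness $\hhat_P$, while $\conf_Q$ is tight enough that excluding $\hhat_P$ forces $\E_Q(\hhat_P)$ to exceed $\hat\epsilon_Q$; the bound on $\E_Q(\hhat_P)$ via $\mod(\epsilon_P)$ then transfers to $\hhat_Q$ through the ordering $\E_Q(\hhat_Q) < \hat\epsilon_Q \leq \E_Q(\hhat_P)$. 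Everything else is a routine application of the confidence-set definitions and the definition of $\mod$.
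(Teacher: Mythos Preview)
Your proposal is correct and follows essentially the same approach as the paper's proof: union-bound to the good event, handle the non-empty intersection case directly via condition~(ii), and in the empty-intersection case exploit condition~(i) on $\conf_P$ to produce a witness in $\conf_P\setminus\conf_Q$ whose $Q$-excess risk sandwiches $\E_Q(\hhat_Q)$ below $\mod(\epsilon_P)$. The only cosmetic difference is that you take the specific witness $\hhat_P\in\Hyp_P(\hat\epsilon_P)$, whereas the paper picks an arbitrary $h\in\Hyp_P(\hat\epsilon_P)$; since $\hat\epsilon_P>\E_P(\hhat_P)$ guarantees $\hhat_P\in\Hyp_P(\hat\epsilon_P)$, the arguments coincide.
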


\begin{proof}
Assume the events of Definition \ref{def:weakconf} hold for $\conf_Q$ and $\conf_P$ (with probability at least $1-2\tau$). By Definition \ref{def:weakconf}, condition (ii), we have that any $h \in \conf_Q \cap \conf_P$ is in $\Hyp_Q(\epsilon_Q) \cap 
\Hyp_P(\epsilon_P)$ implying  
$\E_Q(h) \leq \min\braces {\epsilon_Q, \delta(\epsilon_P)}$. 

So suppose $\conf_Q \cap \conf_P = \emptyset$. By both conditions of Definition \ref{def:weakconf}, we have that 
$\hat h_Q \in \conf_Q$ and satisfies $\E_Q(\hat h_Q) \leq \epsilon_Q$. Now, let $\hat \epsilon_{P}$ and 
$\hat \epsilon_{Q}$ as in Definition \ref{def:weakconf}. Consider any 
$h \in \Hyp_P(\hat \epsilon_P)$. By condition (i) we have that 
$h \in \conf_P$ so is therefore not in $\conf_Q$, which contains 
$\Hyp_Q(\hat \epsilon_{Q})$. Consequently, we have that 
$\E_Q(\hat h_Q) < \hat \epsilon_{Q} < \E_Q (h) \leq \delta(\epsilon_P)$ where the last inequality follows from that $h \in \Hyp_P(\hat \epsilon_P) \subset \Hyp_P(\epsilon_P)$. 
\end{proof}

\begin{remark}[Other Possible Procedures]
We note that, the rates of Theorem \ref{thm:single-delta} may be achieved by variants of the above Algorithm \ref{alg:weakupper}. For instance, suppose $\hhat_P = \ERM_P$ satisfies $\E_P(\hhat_P) \leq \epsilon_P$ w.p. $1-\tau$, then the following simple procedure 
 ``{\texttt{Return $\hhat_P$ if $\hhat_P \in \conf_Q$, otherwise return $\hhat_Q$}}''
achieves the same rate by a similar argument. However, we will see later when discussing the \emph{strong modulus} in Section \ref{sec:strongConfBounds} that approaches akin to Algorithm \ref{alg:weakupper}, i.e., combining with $\conf_P$, can lead to further rates improvements.  
\end{remark}

\begin{remark}[A nuanced Refinement on the Above Bound]
Suppose $\epsilon_Q$ is not achievable, i.e., there exists no $h\in \Hyp$ with $\E_Q(h) = \epsilon_Q$, then effectively, we can replace $\min \braces {\epsilon_Q, \delta(\epsilon_P)}$ in Theorem \ref{thm:single-delta} with $\min \braces {\ddot \epsilon_Q, \delta(\epsilon_P)}$, where 
$\ddot \epsilon_Q \doteq \sup\braces{\epsilon \in [0, \epsilon_Q]: \exists h\in \Hyp, \E_Q(h) = \epsilon}$. This becomes important in Section \ref{sec:strict-improvements} where we characterize gaps in achievable rates under the two moduli. 
\end{remark}

\begin{remark}[Implications and Relation to Existing Bounds]\label{rem:basic-implications}

We will see in Section \ref{sec:weakmod-upper-bounds} below, upon instantiations of weak-confidence sets, that the above Theorem \ref{thm:single-delta} recovers many of the bounds in the literature in terms of relatedness measures, and further extend them to settings with both source and target data. In particular, for the basic setting with no target data (i.e., where $n_Q = 0$), taking $\conf_Q = \Hyp$, the bound effectively becomes $\mod(\epsilon_P)$ and holds for any choice of $\hhat$ in $\conf_P$: in particular, simply taking $\hhat = \hhat_P$ (the ERM under $P$) would suffice.

For instance in classification, classical VC bounds give us $\epsilon_P \approx \sqrt{(\V+ \log (1/\tau))/n_P}$. Thus, combining with the upper-bounds of Section \ref{sec:weakmod-upper-bounds} on $\delta(\epsilon_P)$ we recover many classical results: for instance, \eqref{eq:Y-disc-upper-bound} yields the $Q$-risk bound which can be derived from the analysis of ERM by \citet*{mohri2012new} (specifically, from equation 7 therein),\footnote{The original bound of \citet*{mohri2012new} is expressed for a drift scenario. It can be specialized to our setting  by taking all but the final distribution to be $P$ and the final distribution to be $Q$.  They also express their result in terms of the Rademacher complexity, which is upper bounded by the square-root bound we have presented here.  We could instead use the empirical Rademacher complexity to define the weak confidence set, and recover an essentially similar result.} while \eqref{eq:A-disc-upper-bound} recovers the $Q$-risk results of \citet*{ben2010theory}.  

Furthermore, as discussed above for regression, integral probability metrics such as $W_1$ and MMD can be used to extend $\cal A$-discrepancy to regression (under e.g. $L_1$ and $L_2$ loss) for specific classes $\Hyp$ (see e.g. \cite{redko2017theoretical, shen2018wasserstein}) resulting in bounds on $\mod(\epsilon_P)$ of similar form as \eqref{eq:A-disc-upper-bound}. 
The upper-bound of \eqref{eq:W1-Upperbound} for $W_1$ holds under the additional assumptions we stated here. 

In the case of linear regression, the bounds of \eqref{eq:cov-Ratio1} and \eqref{eq:cov-Ratio2} recover similar bounds as in the literature (see, e.g. \cite{zhang2022class, ge2023maximum}). In particular, under covariate-shift, the bound of \eqref{eq:cov-Ratio1} on $\mod(\epsilon_P)$ matches that of \cite{ge2023maximum} stated under additional assumptions\footnote{Their result for linear regression is stated under specific distributional assumptions but specializes a more general bound on ratios of Fisher-Information matrices.}.

Finally, we note that the bound of Theorem \ref{thm:single-delta} can be faster than the existing bounds it recovers (as just discussed above), owing to the fact that the upper-bounds of Section \ref{sec:weakmod-upper-bounds} on $\mod(\epsilon_P)$ can often be loose (see Remarks \ref{rem:tightness} and \ref{rem:tightness-reg}). 

\end{remark}

\subsection{Examples of Weak Confidence Sets}
\label{sec:weak-confidence-set}
Throughout this section, we let $\mu$ generically denote $P$ or $Q$, and recall the $\ERM$ $\hhat_\mu$. Also, we define \emph{excess empirical risk} as $\hat{\E}_\mu(h) \doteq \hat{R}_\mu(h) - \inf_{h' \in \Hyp} \hat{R}_\mu(h')$.

\paragraph{Classification with $0$-$1$ loss.} 
Assume here that $\Y = \{-1,1\}$ and $\loss (y, y') = \ind{y \neq y'}$. 
We first remark that $\sqrt{n}$-confidence sets, i.e., an $(\epsilon_\mu, \tau)$-weak confidence, for $\epsilon_\mu = O(n_\mu^{-1/2})$ are easy to obtain for VC classes $\Hyp$. In fact, from  Proposition \ref{prop:strong-conf-root-n}, Appendix \ref{app:strongConfSet}, a set of the form 
$\conf \doteq \braces{h: \hat{\E}_\mu(h) \lesssim \epsilon_\mu}$ is a \emph{strong-confidence set} (Definition \ref{def:strongconf}). 

Instead, we show in this section that tighter confidence sets can be derived, corresponding to \emph{fast rates} $o(n_\mu^{-1/2})$ under noise conditions, without prior knowledge of such noise conditions. 

We adopt the following classical noise conditions (see e.g. \citep{mammen:99,audibert2007fast,MN:06,koltchinskii:06,bartlett:06b}). 

\begin{definition}[BCC] \label{def: Bernstein noise condition}
We say that $\Hyp$ satisfies a {\bf Bernstein Class Condition} (BCC), as measured under $\mu$, with parameters $(C_{\mu}, \beta_\mu$), $C_\mu > 0$ and $\beta_{\mu} \in [0, 1]$, 
if $\forall 0 < \epsilon < 1/2$, 
\begin{equation}
    \diam_\mu(\H_\mu(\epsilon)) \le C_\mu\cdot \epsilon^{\beta_\mu}.
\end{equation}
\end{definition}
Note that the condition trivially holds for $\beta_\mu = 0$, $C_\mu = 1$.  Also, the constraint $\epsilon < 1/2$ is superfluous in the case $C_\mu \geq 2$, and merely serves to simplify our analysis in some places.
The condition captures the hardness of the learning problem: when $\beta_\mu =1$, which formalizes \emph{low noise} regimes, we expect fast rates of the form $n^{-1}$, in terms of sample size $n$, while for $\beta_\mu = 0$, rates are of the more common form $n^{-1/2}$. 

{ When $\hstar$ is not unique, BCC remains well defined (i.e., the definition is invariant to the choice of $\hstar$), as it  imposes (when $\beta_\mu > 0$) that all $\hstar$'s differ on a set of measure $0$ under the data distribution.}

The following relies on well-known instantiations of adaptive confidence sets (see {Proposition \ref{prop:weakconf}} of Appendix \ref{app:weakConf}).

\begin{proposition} \label{prop:informalWeakConf}

For $\mu$ either $P$ or $Q$, one can construct an $(\epsilon_\mu,\tau)$-weak confidence set $\conf_\mu$, with 
$\epsilon_\mu$ of order $$\paren{\frac{\V \log(n_\mu / \V) + \log(1/\tau)}{n_\mu}}^{1/(2-\beta_\mu)}, \text{ where } \V \text { denotes the VC dimension of } \Hyp.$$

\end{proposition}

\begin{remark}[Implications for Existing Relatedness Measures]
\label{rem:bcc-implications}
    Theorem \ref{thm:single-delta}, together with Proposition \ref{prop:informalWeakConf} above, imply a transfer rate (hiding logarithmic terms )
   $\E_Q(\hhat) \lesssim \min\!\left\{ \left(\frac{\V}{n_Q}\right)^{{1}/{(2-\beta_Q)}}, \mod\!\left( \left(\frac{\V}{n_P}\right)^{{1}/{(2-\beta_P)}} \right) \right\}.$
   
Using any of the upper-bounds on $\mod(\cdot)$ established in Section \ref{sec:weakmod-upper-bounds} yields immediate explicit bounds under the various measure of discrepancy discussed. 
   For example, from this we can recover the results of \citet*{hanneke2019value} in terms of the \emph{transfer exponent}, when combined with the bound on $\mod(\epsilon_P)$ expressed in Example~\ref{ex:transfer-exponent}: 
   that is, we obtain 
   $$\E_Q(\hhat) \lesssim \min\!\left\{\left(\frac{\V}{n_Q}\right)^{\frac{1}{2-\beta_Q}},  \left(\frac{\V}{n_P}\right)^{\frac{1}{\rho (2-\beta_P)} }  + \pivot^{\sharp} \right\}$$
where the term $\pivot^{\sharp}$ in fact has a multiplicative factor exactly $1$ (which is tighter than the results therein). 

 The instantiation from Proposition~\ref{prop:informalWeakConf} moreover allows us to refine typical results of $\cal Y$-discrepancy and $\cal A$-discrepancy discussed in Remark~\ref{rem:basic-implications}---usually expressed for $\sqrt{n}$ rates---to account for more favorable BCC parameters $\beta_P$, $\beta_Q$ inducing fast rates; our results also extend some of these existing results to the setting with both source and target data.

\end{remark}

\paragraph{Regression with Squared Loss.}

We consider linear regression, where $\X = \reals^d$, $\Y = \reals$, $\Hyp \doteq \{ x \mapsto h_{w}(x) \doteq w^{\top} x : w \in \real^d \}$, and the squared loss 
$\loss(y,y') \doteq (y-y')^2$. 
It shown in Appendix \ref{sec:regression-strongConf} that, under common regularity conditions, a set of the form $\conf_\mu \doteq \braces{h_w: \norm{w- \hat w}_{\hat \Sigma_\mu}^2 \lesssim \epsilon_\mu}$
is a \emph{strong confidence set} (see Definition \ref{def:strongconf}), implying it is also a $(\epsilon_\mu, \tau)$-weak confidence set, for $\epsilon_\mu \approx \frac{d + \log (1/\tau)}{n_\mu}$.

\begin{remark}\label{rem:linear-regression-ub} Combining the above with Theorem \ref{thm:single-delta}, and the upper-bound of \eqref{eq:cov-Ratio2} on $\mod$, yields a transfer rate of the form 
$$
\E_Q(\hat h) \lesssim 
\min\braces {\frac{d}{n_Q}, \lambda_{\text{max}}\paren{\Sigma_P^{-1} \Sigma_Q}\cdot\frac{d}{n_P} + 2\E_Q(\hstarP)},
$$
which extends the results of \cite{mousavi2020minimax, zhang2022class, ge2023maximum} to the setting with both source and target samples.

\end{remark}

\subsection{Lower-Bounds}
\label{sec:weak-lower-bounds}
We now instantiate matching lower-bounds in both the case of classification and linear regression. The proof of the lower-bound results below can be found in Appendix \ref{app:lowerBoundWeak}. 

\paragraph{Classification.} We consider classification with VC classes, that is, let $\loss(a, b) = \ind{a\neq b}$ and $\Y = \{-1,1\}$. 

\begin{definition}[Weak Modulus Class]
\label{defn:weak-sigma-class}
Let $\Hyp$ denote a class of functions $\X \mapsto \braces{-1, 1}$.
Consider any non-decreasing function $f$ on $(0, 1] \mapsto [0, 1]$. For any such $f$, and $0\leq  \beta_P, \beta_Q \leq 1$, let 
$\Sigma_{\Hyp}(f, \beta_P, \beta_Q)$ denote all pairs of distributions $P, Q$ such that, w.r.t. $\Hyp$ we have  
(\rm{i}) $\mod_{\PQ}(\epsilon) \leq f(\epsilon), \ \forall \ 0 < \epsilon\leq 1/4$, and (\rm{ii}) $P$ and $Q$ satisfy BCC (Definition \ref{def: Bernstein noise condition}) with parameters $(1, \beta_P)$ and $(1, \beta_Q)$ respectively. 
\end{definition}

\begin{theorem}[Classification Lower-Bound]\label{thm:mod-class-lower-bound}
In what follows, let $\hhat$ denote any learner (possibly improper) having access to independent datasets $S_P$ and $S_Q$ of size $n_P, n_Q$ respectively. The following holds for any $\Hyp$ with $|\Hyp| \geq 3$.

{\rm (i)} Suppose the VC dimension of $\Hyp$ satisfies $\V \leq \max\{ n_Q, n_P \}$, and that for some $\kappa \geq 1$, $f$ is nondecreasing and satisfies $\alpha f(\epsilon) \leq \kappa f(\alpha \epsilon)$ for all $0< \epsilon, \alpha \leq 1$. We have that, for universal constants $c,c_0$: 
\begin{align}
\inf_{\hhat} \sup_{(P, Q) \in \Sigma_{\Hyp}(f, \beta_P, \beta_Q)} \Expectation \ \E_Q(\hhat) \geq c\cdot {\frac{1}{\kappa}}\cdot 
\min\braces{ \left( \frac{\V}{n_Q} \right)^{\frac{1}{2-\beta_Q}}, f\!\left( c_0 \left( \frac{\V}{n_P} \right)^{\frac{1}{2-\beta_P}} \right) }. \label{VC-lower-bound}
\end{align}

{\rm (ii)} For general nondecreasing $f$ we have that, for universal constants $c,c_0$: 
\begin{align}
\inf_{\hhat} \sup_{(P, Q) \in \Sigma_{\Hyp}(f, \beta_P, \beta_Q)} \Expectation \ \E_Q(\hhat) \geq c\cdot 
\min\braces{ \left( \frac{1}{n_Q} \right)^{\frac{1}{2-\beta_Q}}, f\!\left( c_0 \left( \frac{1}{n_P} \right)^{\frac{1}{2-\beta_P}} \right) }.
\end{align}
{\rm (iii)} For general nondecreasing $f$, for any $2 \leq d \leq \max\{n_Q,n_P\}$ there exists $\Hyp$ with $\V = \Theta(d)$ such that \eqref{VC-lower-bound} holds. 
\end{theorem}

Notice that the condition on $f$ admits any nondecreasing concave function $f\geq 0$, with corresponding $\kappa =1$ (i.e., $f(\alpha \epsilon) \geq \alpha f(\epsilon) + (1-\alpha) f(0)$), and allows some convex $f$ for $\kappa \geq 1$. 
For instance, this admits functions of the form $C_1 \epsilon^{1/\rho} + C_2$ for $\rho \geq 1$.
As it turns out, the condition is needed in our construction in order to satisfy $\mod(\epsilon) \leq \epsilon$ for relevant values of $\epsilon$. 

\begin{remark}[$\pivot^\sharp$ as a Learning Limit]
    One may also wonder how $\pivot^\sharp\doteq \lim_{\epsilon\to 0}\mod(\epsilon)$ may lower-bound achievable rates in regimes with small $n_Q$. This is somewhat opaque since the distributions witnessing the above lower-bounds allow for $\pivot^\sharp = 0$. We provide additional classification lower-bounds in Appendix \ref{app:lowerBoundWeak} in terms of $\pivot^\sharp$.
\end{remark}

\paragraph{Linear Regression.}
For $\X = \reals^d$, $\Y = \reals$, and the squared loss $\loss(y,y') \doteq (y-y')^2$, 
we consider the case of linear regression: i.e., we let $\Hyp \doteq \braces{h_w: h_w(x) = w^\top x, w \in \reals^d}$. 
We argue here that the upper bound established in Remark~\ref{rem:linear-regression-ub} 
can be sharp. We introduce the following class of distributions.

\begin{definition}[Regression Transfer class]
For any $\sigma^2_Y, \lambda_0, \epsilon_0>0$, we let $\Lambda\paren{\sigma^2_Y, \lambda_0, \epsilon_0}$ denote the class of all pairs of distributions $P, Q$ (with bounded second moments) such that, 
({\rm ii}) $\text{Var}[Y|X] \leq \sigma^2_Y$, ({\rm ii}) $\Sigma_Q\succeq\frac{1}{d}I_d$, ({\rm iii}) $\Sigma_P\succeq \frac{1}{d\lambda_0}I_d$ and 
$\lambda_{\max}\paren{\Sigma_P^{-1}\Sigma_Q} \leq \lambda_0$, and ({\rm iv}) $\E_Q(\hstarP) \leq \epsilon_0$, for $\hstarP \doteq \arg\min_{h \in \Hyp} R_P(h)$. 
\end{definition}

We have the following theorem. 

\begin{theorem}[Regression Lower-Bound]\label{thm:reg-LwBnd} 
Let $n_P\lor n_Q \geq 1$. 
Let $\hhat$ denote any learner (possibly improper) having access to independent datasets $S_P$ and $S_Q$ of size $n_P$, $n_Q$ respectively.
For any $\lambda_0 \geq 1$,
$\sigma^2_Y > 0$, 
$\epsilon_0 \geq 0$, we have 
    \begin{align}
\inf_{\hhat} \sup_{(P, Q) \in \Lambda\paren{\sigma^2_Y,\lambda_0, \epsilon_0}} \EE\  \E_Q(\hhat) \geq c\cdot 
\min\braces{ \frac{d{\cdot \sigma^2_Y}}{n_Q}, \lambda_0 \frac{d{\cdot \sigma^2_Y}}{n_P} +  {\varepsilon_0}}, \text{ for a universal constant } c>0. 
\end{align}
\end{theorem}

The proof of Theorem~\ref{thm:reg-LwBnd} is included in Appendix~\ref{app:lowerBoundWeak}.

\section{Strong Modulus of Transfer}
\label{sec:strongConfBounds}

The next notion refines the weak modulus: it aims to capture the reduction in $Q$-risk induced by the combined effect of target $Q$ and source $P$ data.  
As such, in our $Q$ risk bounds, the parameters $\epsilon_1$ and $\epsilon_2$ below will be instantiated as functions of the amount of $Q$ data and $P$ data, respectively. In particular, as illustrated in Figure \ref{fig:strongModulus}, we have in mind situations where data from $Q$ allows us to reject hypotheses $h'$s that are good under $P$ but bad under $Q$ (see Example \ref{ex:gap} of Section \ref{sec:examplesStrongMod}, as motivated by settings with spurious correlations between irrelevant $X$ features and $Y$. 

\begin{wrapfigure}{r}{0.27\textwidth}
  \begin{center}
    \includegraphics[width=0.27\textwidth]{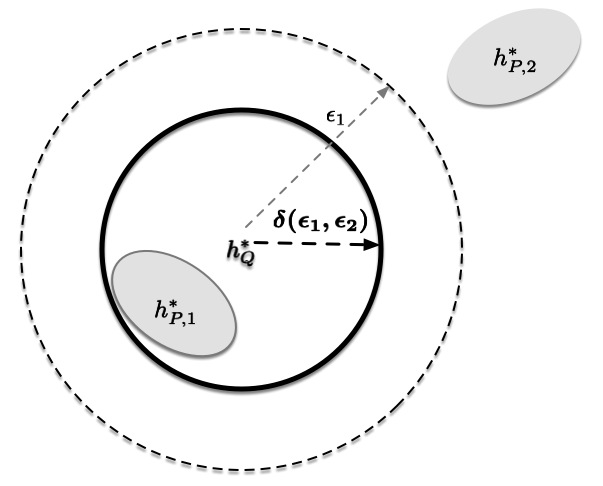}
  \end{center}
  \caption{\small Strong Modulus. 
  The worst-case $Q$-risk in the retained ellipsoid determines $\mod(\epsilon_1, \epsilon_2)$.
    }
    \label{fig:strongModulus}
  \vspace{-.5cm}
\end{wrapfigure}

\begin{definition}[Strong Modulus] \label{def:strong}
For $\epsilon_1,\epsilon_2 >0$, 
define the bivariate modulus: 
\begin{equation} 
\mod(\epsilon_1,\epsilon_2) \doteq \sup\braces{ \E_Q(h) : h \in \Hyp_Q(\epsilon_1), \quad \E_P(h;\Hyp_Q(\epsilon_1)) \leq \epsilon_2 }, 
\end{equation}
or written as $\mod_{\PQ}(\epsilon_1, \epsilon_2)$ when the dependence on $P, Q$ is to be made explicit.
\end{definition}

We will aim to characterize situations where the strong modulus is a strict refinement over the weak modulus. The following pivotal quantity will turn out useful to our discussions as the above definition of strong modulus can be simplified for values of $\epsilon_1$ above this pivot. 

\begin{definition}[Lower Pivotal Value] \label{def:pivotalValue}
The following captures the infimum excess $Q$-risk of good classifiers under $P$: 
$$\pivot \doteq \lim_{\epsilon \to 0} ~\inf \!\braces{\E_Q(h): h \in \Hyp_P(\epsilon)} .$$ 
\end{definition}

\begin{remark}[Intuition] \label{rem:pivotIntuition} Consider the simplest case of a finite $\Hyp$, and denote the set of $P$-risk minimzers as $\Hyp_P^* \doteq \arg\min_{h\in \Hyp} R_P(h)$. We see that 
$$\pivot = \min_{\hstarP \in \Hyp_P^*} \E_Q(\hstarP) \text{ while } \pivot^\sharp \doteq \lim_{\epsilon\to 0}\mod(\epsilon) = \max_{\hstarP \in \Hyp_P^*} \E_Q(\hstarP).$$

 These pivotal quantities capture the intuition illustrated in this simple case, but for general $\Hyp$ and $\ell$, which might not admit risk minimizers. 
\end{remark}

Next, we consider an alternative expression of $\pivot$, which will be useful in the analysis. For intuition, again consider the simplest case of a finite $\Hyp$; we see that $\pivot$ is the smallest $\epsilon_1$ for which $\Hyp_Q(\epsilon_1)$ contains an $\hstarP$ (whereas $\pivot^\sharp$ represents the smallest $\epsilon_1$ for which $\Hyp_Q(\epsilon_1)$ contains \emph{every} $\hstarP$). The proposition below generalizes this intuition.

\begin{proposition}[Equivalent Form of the Pivotal Value]
\label{prop:pivotal-equivalence} We have 
$\pivot  =  \inf \braces{\epsilon_1 \in \exreals \!\!: \forall \epsilon > 0, 
\Hyp_Q(\epsilon_1) \cap \Hyp_P(\epsilon) \neq \emptyset}.$
\end{proposition}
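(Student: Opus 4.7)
The plan is to unpack both sides of the identity in terms of the auxiliary function $f(\epsilon) \doteq \inf\braces{\E_Q(h) : h \in \Hyp_P(\epsilon)}$ (with the convention that an infimum over the empty set equals $+\infty$). The key preliminary observation is that $\Hyp_P(\epsilon) \subseteq \Hyp_P(\epsilon')$ whenever $\epsilon \leq \epsilon'$, so $f$ is nonincreasing in $\epsilon$. Consequently the limit defining $\pivot$ exists in $\exreals$ and satisfies $\pivot = \lim_{\epsilon \to 0} f(\epsilon) = \sup_{\epsilon > 0} f(\epsilon)$. Let $\pivot'$ denote the right-hand side of the claimed identity; the proof then proceeds by a two-way inequality.

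For the direction $\pivot' \leq \pivot$, I would fix an arbitrary $\epsilon_1 > \pivot$ and argue it belongs to the set whose infimum defines $\pivot'$. Since $f(\epsilon) \leq \pivot < \epsilon_1$ for every $\epsilon > 0$, the definition of infimum produces some $h \in \Hyp_P(\epsilon)$ with $\E_Q(h) < \epsilon_1$, hence $h \in \Hyp_Q(\epsilon_1) \cap \Hyp_P(\epsilon)$ and that intersection is nonempty. Thus $\epsilon_1$ lies in the admissible set, so $\pivot' \leq \epsilon_1$; taking the infimum over $\epsilon_1 > \pivot$ yields $\pivot' \leq \pivot$.

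For the reverse inequality $\pivot \leq \pivot'$, I would take any $\epsilon_1$ in that admissible set, so that for every $\epsilon > 0$ there exists $h \in \Hyp_P(\epsilon)$ with $\E_Q(h) \leq \epsilon_1$. This is exactly the statement that $f(\epsilon) \leq \epsilon_1$ for every $\epsilon > 0$, whence $\pivot = \sup_{\epsilon > 0} f(\epsilon) \leq \epsilon_1$. Taking the infimum over such $\epsilon_1$ gives $\pivot \leq \pivot'$.

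I expect no substantive obstacle here; the proposition is essentially a tautology once one rewrites the limit as a supremum using monotonicity. The only mildly delicate point is the strict-to-nonstrict conversion in the first direction: membership in $\Hyp_Q(\epsilon_1)$ is defined by a nonstrict inequality, but the strict inequality $\E_Q(h) < \epsilon_1$ obtained from $f(\epsilon) < \epsilon_1$ via the definition of infimum is more than enough. Edge cases (e.g.\ $\pivot = \infty$, or $\Hyp_P(\epsilon)$ becoming empty for small $\epsilon$) are handled transparently by the extended-real convention, since in those cases both $\pivot$ and $\pivot'$ equal $+\infty$ by the same reasoning.
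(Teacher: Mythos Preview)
Your proposal is correct and follows essentially the same approach as the paper: both argue the two inequalities by showing (i) any $\epsilon_1 > \pivot$ lies in the set $E^\sharp$ on the right (using that the inner infimum is monotone in $\epsilon$), and (ii) any $\epsilon_1 \in E^\sharp$ upper-bounds $f(\epsilon)$ for all $\epsilon$ and hence upper-bounds $\pivot$. Your explicit rewriting $\pivot = \sup_{\epsilon>0} f(\epsilon)$ is a nice touch that the paper leaves implicit, but the logic is otherwise identical.
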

\begin{proof}
For ease of exposition, let  
$\EPQ$ denote the set $\braces{\epsilon_1 \in \exreals : \forall \epsilon > 0, 
\Hyp_Q(\epsilon_1) \cap \Hyp_P(\epsilon) \neq \emptyset}.$

For any $\epsilon_1 > \pivot$, we have that 
$\epsilon_1 > \inf \!\braces{\E_Q(h): h \in \Hyp_P(\epsilon)}$ for all $\epsilon > 0$, since this last infimum only increases as $\epsilon \to 0$; in other words, for all $\epsilon > 0$, $\Hyp_Q(\epsilon_1) \cap \Hyp_P(\epsilon) \neq \emptyset$, i.e., $\epsilon_1 \in \EPQ$. We therefore have that $\pivot \geq \inf \EPQ$. Notice that, this last inequality also holds when $\pivot = \infty$. 

Now for any $\epsilon_1 \in \EPQ$, by definition, we have that for all $\epsilon > 0$, 
$\inf_{h \in \Hyp_P(\epsilon)} \E_Q(h) \leq \epsilon_1$, in other words, $\epsilon_1 \geq \pivot$. We therefore also have that $\pivot \leq \inf \EPQ$.  
\end{proof}

\begin{remark}
We note that we may have $\pivot$ very large, in fact matching $\sup_{a, b \in \Y} \ell(a, b)$ (admitting $\infty$ for unbounded losses). This describes $P$ \emph{having little information on} $Q$, i.e., where all $h'$s with low $P$-risk have large $Q$-risk. 
\end{remark}

The above leads to a simpler and useful form of the strong modulus as driven by the pivot. 

\begin{corollary}
\label{cor:pivotal-value}
Let $\epsilon_1,\epsilon_2 > 0$.
\begin{itemize}
\item If $\epsilon_1 > \pivot$, 
every $h \in \Hyp$ has 
$\E_P(h;\H_Q(\epsilon_1)) = \E_P(h)$. It follows that, for $\epsilon_1 > \pivot$
$$\mod(\epsilon_1,\epsilon_2) = \sup\braces{ \E_Q(h) : h \in \Hyp_Q(\epsilon_1) \cap \Hyp_P(\epsilon_2)}.$$
\item If $\epsilon_1 \leq \pivot$, then $\epsilon_1 \leq \mod(\epsilon_2)$.  Equivalently, we always have $\pivot \leq \mod(\epsilon_2)$.
\end{itemize}
\end{corollary}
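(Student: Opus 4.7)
The plan is to handle the two bullet points separately, with essentially all of the work concentrated in the first. For claim~1, I would start by unpacking the two definitions: $\E_P(h;\Hyp_Q(\epsilon_1)) = R_P(h) - \inf_{h' \in \Hyp_Q(\epsilon_1)} R_P(h')$ versus $\E_P(h) = R_P(h) - \inf_{h' \in \Hyp} R_P(h')$. Since $\Hyp_Q(\epsilon_1) \subseteq \Hyp$, the inequality $\inf_{h' \in \Hyp_Q(\epsilon_1)} R_P(h') \geq \inf_{h' \in \Hyp} R_P(h')$ is automatic, and the whole identity reduces to showing the reverse inequality. For that, I would invoke the alternate characterization $\pivot = \inf E^\sharp$ from Proposition~\ref{prop:pivotal-equivalence}. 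Observe that $E^\sharp$ is upward-closed in $\epsilon_1$, because $\Hyp_Q(\cdot)$ is monotone in its argument; hence $\epsilon_1 > \pivot = \inf E^\sharp$ forces $\epsilon_1 \in E^\sharp$. This lets me pick, for every $\epsilon > 0$, some $h_\epsilon \in \Hyp_Q(\epsilon_1) \cap \Hyp_P(\epsilon)$, which witnesses $\inf_{h' \in \Hyp_Q(\epsilon_1)} R_P(h') \leq \inf_{h' \in \Hyp} R_P(h') + \epsilon$. Sending $\epsilon \to 0$ then yields the reverse inequality, so the two excess $P$-risks coincide.

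The second half of claim~1 is a routine substitution once the identity is in hand: plugging $\E_P(h;\Hyp_Q(\epsilon_1)) = \E_P(h)$ into the defining supremum of Definition~\ref{def:strong} replaces the constraint $\E_P(h;\Hyp_Q(\epsilon_1)) \leq \epsilon_2$ by $h \in \Hyp_P(\epsilon_2)$, which gives the announced expression.

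For claim~2, I would use monotonicity together with the fact that an infimum never exceeds a supremum over a larger set. Concretely, whenever $\epsilon \leq \epsilon_2$ one has $\Hyp_P(\epsilon) \subseteq \Hyp_P(\epsilon_2)$, and therefore
\[
\inf\braces{\E_Q(h) : h \in \Hyp_P(\epsilon)} \;\leq\; \sup\braces{\E_Q(h) : h \in \Hyp_P(\epsilon_2)} \;=\; \mod(\epsilon_2).
\]
Taking $\epsilon \to 0$ on the left-hand side and recalling Definition~\ref{def:pivotalValue} yields $\pivot \leq \mod(\epsilon_2)$. Since $\epsilon_1 \leq \pivot$ by hypothesis, chaining the two inequalities gives $\epsilon_1 \leq \mod(\epsilon_2)$, which is exactly the claim.

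I expect the only subtlety to be the justification that $\epsilon_1 > \pivot$ implies $\epsilon_1 \in E^\sharp$ (together with the vacuous corner case $\pivot = \infty$ in claim~1); everything else is essentially an unpacking of Definitions~\ref{def:weak}--\ref{def:pivotalValue} combined with the alternate characterization from Proposition~\ref{prop:pivotal-equivalence}.
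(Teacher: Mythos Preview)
Your proposal is correct and follows essentially the same approach as the paper. For claim~1 both arguments reduce the identity to $\inf_{h' \in \Hyp_Q(\epsilon_1)} R_P(h') = \inf_{h' \in \Hyp} R_P(h')$ via $\epsilon_1 \in E^\sharp$; for claim~2 you work directly from Definition~\ref{def:pivotalValue} (inf $\leq$ sup over a larger set, then $\epsilon \to 0$), whereas the paper instead verifies $\mod(\epsilon_2) \in E^\sharp$ and invokes $\pivot = \inf E^\sharp$---a cosmetic difference only.
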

\begin{proof} 
For the first claim, 
for $\epsilon_1 > \pivot$, 
we have $\epsilon_1 \in \EPQ$
(where $\EPQ$ is defined in the proof of Proposition~\ref{prop:pivotal-equivalence}). Now by definition, $\EPQ$ is the set of $\epsilon_1$ such that for all $\epsilon$, $\exists h \in \Hyp_Q(\epsilon_1)$ satisfying $\E_P(h) \leq \epsilon$. 
Therefore, $\inf_{h \in \Hyp_Q(\epsilon_1)} R_P(h) = \inf_{h \in \Hyp} R_P(h)$, and the claim follows.

For the second claim, 
we note that $\mod(\epsilon_2) \in \EPQ$, 
since any $\epsilon > 0$ has  
$\Hyp_P(\epsilon) \cap \Hyp_P(\epsilon_2) = \Hyp_P(\min\{\epsilon,\epsilon_2\}) \neq \emptyset$,
and is a subset of $\Hyp_P(\epsilon_2) \subset \Hyp_Q(\mod(\epsilon_2))$, 
where the last inclusion is from
the definition of $\mod(\epsilon_2)$.
Hence $\pivot \leq \mod(\epsilon_2)$. 
\end{proof} 

We have the following proposition which simply states that the strong modulus is a refinement of the weak modulus.

\begin{corollary}[Relation Between Strong and Weak Moduli]
\label{cor:weak-vs-strong}
For $\epsilon_1, \epsilon_2 > 0$, 
$\mod(\epsilon_1,\epsilon_2) \leq \min\braces{ \epsilon_1, \mod(\epsilon_2)}. $
\end{corollary}
\begin{proof}
If $\epsilon_1 > \pivot$, 
then Corollary~\ref{cor:pivotal-value} implies 
$\mod(\epsilon_1,\epsilon_2) = \sup\braces{ \E_Q(h) : h \in \Hyp_Q(\epsilon_1) \cap \Hyp_P(\epsilon_2)}$, 
which is bounded by $\min\braces{\epsilon_1,\mod(\epsilon_2)}$
by definition of $\Hyp_Q(\epsilon_1)$ and $\mod(\epsilon_2)$.

If $\epsilon_1 \leq \pivot$, 
then since Corollary~\ref{cor:pivotal-value} implies 
$\epsilon_1 \leq \mod(\epsilon_2)$ in this case, 
we have $\min\braces{\epsilon_1,\mod(\epsilon_2)} = \epsilon_1 \geq \mod(\epsilon_1,\epsilon_2)$, where the final inequality is immediate from the definition of $\mod(\epsilon_1,\epsilon_2)$.
\end{proof}

Finally, we verify that the strong modulus is also non-decreasing in either of its  arguments. While monotonicity was immediate by definition for the weak modulus $\mod(\cdot)$, it is a bit more involved for the strong modulus $\mod(\cdot, \cdot)$. 

\begin{proposition}
\label{prop:double-mod-monotone}
The strong modulus $\mod(\epsilon_1,\epsilon_2)$ is non-decreasing, i.e., 
$\forall \epsilon_1 \leq \epsilon'_1$, $\forall \epsilon_2 \leq \epsilon'_2$
we have $\mod(\epsilon_1,\epsilon_2) \leq \mod(\epsilon'_1,\epsilon'_2)$.
\end{proposition}
\begin{proof} 
The fact that $\mod(\epsilon'_1,\epsilon_2) \leq \mod(\epsilon'_1,\epsilon'_2)$ is immediate from the definition.
It remains to show $\mod(\epsilon_1,\epsilon_2) \leq \mod(\epsilon'_1,\epsilon_2)$. 
We first note the main difficulty in this argument arises 
when $\epsilon_1 \leq \pivot$.
In particular, the conditions on $\E_P(h;\Hyp_Q(\epsilon_1))$ and $\E_P(h;\Hyp_Q(\epsilon_1'))$ may involve different $P$-infimum errors, hence the resulting sets become hard to compare. 

So, first, assume 
$\inf_{h \in \Hyp_Q(\epsilon_1)}R_P(h) = \inf_{h \in \Hyp_Q(\epsilon_1')}R_P(h)$ (this is the case, e.g., when $\epsilon_1 > \pivot$, see Corollary \ref{cor:pivotal-value}). It then follows that the set $\{h\in \Hyp_Q(\epsilon_1): \E_P(h;\Hyp_Q(\epsilon_1)) \leq \epsilon_2\} $ is contained in $\{h\in \Hyp_Q(\epsilon_1'): \E_P(h;\Hyp_Q(\epsilon_1')) \leq \epsilon_2\} $, hence $\mod(\epsilon_1,\epsilon_2) \leq \mod(\epsilon'_1,\epsilon_2)$. 

Now, suppose the contrary. Then 
$\exists h \in \Hyp_Q(\epsilon_1')$ such that 
$$\inf_{h' \in \Hyp_Q(\epsilon_1')}R_P(h') \leq R_P(h) < \min \braces{ \inf_{h' \in \Hyp_Q(\epsilon_1)}R_P(h'), \inf_{h' \in \Hyp_Q(\epsilon_1')}R_P(h') + \epsilon_2}.$$
By the second inequality, we have that $h \notin \Hyp_Q(\epsilon_1)$, hence 
$$\mod(\epsilon_1, \epsilon_2) \leq \epsilon_1 \leq \E_Q(h) \leq \mod(\epsilon_1', \epsilon_2).$$
\end{proof}

\subsection{Adaptive Transfer Upper-Bounds}
We show in this section that it is possible to achieve rates \emph{adaptive} to the strong modulus, i.e., with not a priori knowledge of this structure between $P$ and $Q$. We will show that such adaptivity can be reduced to access to \emph{strong} confidence sets as defined below. Examples of such sets for various problems of interest are given later in
Appendix \ref{app:strongConfSet}.

\begin{definition}\label{def:strongconf} Let $\mu$ denote either $P$ or $Q$. 
Let $0< \tau \leq 1$, $\epsilon>0$, and $C > 1$. We call a random set $\conf_\mu \doteq \conf_\mu(S_\mu)$ an {\bf $(\epsilon, \tau, C)$-strong confidence set} (under $\mu$) if the following conditions are met with probability at least $1-\tau$: 
$$ \Hyp_\mu(\epsilon/C) \ \subset \ \conf_\mu \ \subset \ \Hyp_\mu(\epsilon).$$
\end{definition}

Notice that if we pick $\epsilon$ 
such that, 
with probability at least $1-\tau$, 
$\E_\mu(\hat{h}_\mu) < \epsilon/C$,
then an $(\epsilon,\tau,C)$-strong confidence set 
is also an $(\epsilon,2\tau)$-weak confidence set (taking $\hat{\epsilon}_\mu = \epsilon/C$ in Definition~\ref{def:weakconf}). 
In fact, for most typical ways of obtaining such strong confidence sets (see Appendix \ref{app:strongConfSet}), the events referred to in the failure probabilities are nested, 
so that the $(\epsilon,\tau,C)$-strong confidence set is in fact an $(\epsilon,\tau)$-weak confidence set.

{\vskip 4mm}\begin{alg}\label{alg:strongupper}
Input: $\conf_Q^\sharp \supset \conf_Q^\flat$, and $\conf_P$, all subsets of $\Hyp$.
\begin{quote}
If $\conf_Q^\sharp \cap \conf_P \neq \emptyset$, return any $h\in \conf_Q^\sharp \cap \conf_P$, otherwise return $\hhat_{P, Q} \doteq \argmin_{h \in \conf_Q^\flat} \hat{R}_P(h)$. 
\end{quote}
\end{alg}

We require the following local definition of achievable rate (under $P$) over a given subset $\Hyp'$ of $\Hyp$.  
 This is because, as we consider general scenarios, it is possible that $\hhat_P(\Hyp')$ may admit a different generalization rate than $\hhat_P(\Hyp)$ if population infimums differ; for instance in classification, the Bernstein Class Condition on $\Hyp$ (Definition \ref{def: Bernstein noise condition}) may allow for fast rates $\ll n_P^{-1/2}$ for $\hhat_P(\Hyp)$, but if the same BCC parameters do not hold for $\Hyp'$ we may have $\Prate(\Hyp')\approx n_P^{-1/2}$.

\begin{definition}\label{def:Prate}
Fix $0<\tau < 1$. 
Given a subset $\Hyp' \subset \Hyp$, 
we let  $\Prate(\Hyp')$, 
depending on $n_P$, denote the following: 
$$%
\min\braces{ {\epsilon > 0} :   \prf{S_P}{R_P(\hhat_P(\Hyp')) - \inf_{h \in \Hyp'} R_P(h) \leq \epsilon} \geq 1-\tau}.$$
\end{definition}

Note that the infimum is achieved in the set by continuity of probability measures.

\begin{theorem}
\label{thm:double-delta}
Let $0 < \tau \leq 1$, $C^{\sharp},C^{\flat} > 1$. Suppose $\conf_Q^\sharp$, $\conf_Q^\flat$ are respectively $(\epsilon_Q, \tau,C^{\sharp})$ and $(\epsilon_Q/C^{\sharp}, \tau, C^{\flat})$-strong confidence sets under $Q$, and $\conf_P$ is an $(\epsilon_P, \tau)$-weak confidence set for $P$, for some $\epsilon_Q,\epsilon_P >0$ depending respectively on $n_Q$ and $n_P$. Let $\hhat$ denote the hypothesis returned by Algorithm \ref{alg:strongupper}. 
Letting $\Prate = \Prate(\conf_Q^\flat)$,
we then have with probability at least $1-4\tau$:

$$ \E_Q(\hat h) \leq 
\begin{cases} 
\mod(\epsilon_Q, \epsilon_P) & \text{ if } \epsilon_Q > \pivot \text{ or } \conf_Q^\sharp \cap \conf_P\neq \emptyset, \\
C^{\flat} \cdot \mod(\epsilon_Q/C^{\sharp}, \Prate) & \text{otherwise}.
\end{cases}
$$
\end{theorem}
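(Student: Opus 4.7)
The plan is to work on the intersection of the three high-probability confidence-set events together with the ERM-rate event for $\Prate(\conf_Q^\flat)$, which a union bound places on an event of probability at least $1-4\tau$. The ERM event requires care since $\conf_Q^\flat$ is $S_Q$-measurable; however, by the independence of $S_P$ and $S_Q$, applying Definition~\ref{def:Prate} conditionally on $S_Q$ and integrating yields $R_P(\hat h) \leq \inf_{h \in \conf_Q^\flat} R_P(h) + \Prate(\conf_Q^\flat)$ on this event. On the good event, we then proceed by a three-way case analysis matching the cases of the theorem.

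\emph{Case A} ($\conf_Q^\sharp \cap \conf_P \neq \emptyset$): the algorithm returns some $h \in \conf_Q^\sharp \cap \conf_P \subset \Hyp_Q(\epsilon_Q) \cap \Hyp_P(\epsilon_P)$. Since $\Hyp_Q(\epsilon_Q) \subset \Hyp$, we have $\E_P(h;\Hyp_Q(\epsilon_Q)) \leq \E_P(h) \leq \epsilon_P$, so $h$ qualifies in the supremum defining $\mod(\epsilon_Q,\epsilon_P)$, giving $\E_Q(h) \leq \mod(\epsilon_Q,\epsilon_P)$.

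\emph{Case B} ($\conf_Q^\sharp \cap \conf_P = \emptyset$, $\epsilon_Q > \pivot$): the key claim is $\mod(\epsilon_Q,\epsilon_P) \geq \epsilon_Q/C^\sharp$, proved by contradiction. Proposition~\ref{prop:pivotal-equivalence} applied to $\epsilon_Q > \pivot$ yields some $h_0 \in \Hyp_Q(\epsilon_Q) \cap \Hyp_P(\hat\epsilon_P)$ for the $\hat\epsilon_P > 0$ from the weak-confidence definition of $\conf_P$. If $\mod(\epsilon_Q,\epsilon_P) < \epsilon_Q/C^\sharp$, then $\E_Q(h_0) \leq \mod(\epsilon_Q,\epsilon_P) < \epsilon_Q/C^\sharp$ places $h_0 \in \Hyp_Q(\epsilon_Q/C^\sharp) \subset \conf_Q^\sharp$, and together with $h_0 \in \Hyp_P(\hat\epsilon_P) \subset \conf_P$ this contradicts the case assumption. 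Combining the claim with $\hat h \in \conf_Q^\flat \subset \Hyp_Q(\epsilon_Q/C^\sharp)$ gives $\E_Q(\hat h) \leq \epsilon_Q/C^\sharp \leq \mod(\epsilon_Q,\epsilon_P)$.

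\emph{Case C} ($\conf_Q^\sharp \cap \conf_P = \emptyset$, $\epsilon_Q \leq \pivot$): write $\epsilon' = \epsilon_Q/C^\sharp$, $\epsilon'' = \epsilon_Q/(C^\sharp C^\flat)$, and $I(\eta) = \inf_{h \in \Hyp_Q(\eta)} R_P(h)$. The containment $\conf_Q^\flat \supset \Hyp_Q(\epsilon'')$ plus the ERM guarantee gives $R_P(\hat h) \leq I(\epsilon'') + \Prate$, and since $\hat h \in \Hyp_Q(\epsilon')$ we deduce $\E_P(\hat h;\Hyp_Q(\epsilon')) \leq (I(\epsilon'')-I(\epsilon'))+\Prate$. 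If $I(\epsilon'')=I(\epsilon')$, then $\E_P(\hat h;\Hyp_Q(\epsilon')) \leq \Prate$, so $\hat h$ qualifies in $\mod(\epsilon',\Prate)$ and $\E_Q(\hat h) \leq \mod(\epsilon',\Prate)$. If $I(\epsilon'') > I(\epsilon')$, then for any sufficiently small $\eta \in (0, \min\{\Prate, I(\epsilon'')-I(\epsilon')\})$ an $\eta$-approximate $R_P$-minimizer $h^*_\eta$ over $\Hyp_Q(\epsilon')$ satisfies $R_P(h^*_\eta) < I(\epsilon'')$, so $h^*_\eta \notin \Hyp_Q(\epsilon'')$ and $\E_Q(h^*_\eta) > \epsilon''$; moreover $\E_P(h^*_\eta;\Hyp_Q(\epsilon')) \leq \eta \leq \Prate$, so $h^*_\eta$ qualifies in $\mod(\epsilon',\Prate)$, forcing $\mod(\epsilon',\Prate) \geq \epsilon''$, and hence $\E_Q(\hat h) \leq \epsilon' = C^\flat \epsilon'' \leq C^\flat \mod(\epsilon',\Prate)$.

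The main obstacle I expect is the dichotomy in Case C: the ``jump'' $I(\epsilon'')-I(\epsilon')$ in the $P$-infima over the nested $Q$-constraint sets is exactly what the multiplicative $C^\flat$ slack in the theorem absorbs, and handling it for general losses (where risk minimizers need not exist) requires the approximate-minimizer argument sketched above. A secondary subtlety, the invocation of $\Prate$ on the random subclass $\conf_Q^\flat$, is cleanly resolved by the independence of $S_P$ and $S_Q$.
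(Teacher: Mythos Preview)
Your proposal is correct and follows essentially the same approach as the paper's proof: the same three-way case split, the same use of Proposition~\ref{prop:pivotal-equivalence} in Case B to produce a witness $h_0 \in \Hyp_Q(\epsilon_Q)\cap\Hyp_P(\hat\epsilon_P)$, and the same dichotomy on whether $I(\epsilon'')=I(\epsilon')$ in Case C with an approximate $R_P$-minimizer handling the strict-inequality branch. The only cosmetic differences are that you phrase Case B as a contradiction (the paper argues directly that $\E_Q(h_0)>\epsilon_Q/C^\sharp$), and you make explicit the conditioning argument for invoking $\Prate(\conf_Q^\flat)$ on the $S_Q$-measurable set $\conf_Q^\flat$, which the paper leaves implicit.
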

\begin{proof}
Assume the events of Definition \ref{def:strongconf} (for $\conf_Q^\sharp$, $\conf_Q^\flat$) and Definition \ref{def:weakconf} (for $\conf_P$) hold (this occurs with probability at least $1-3\tau$ by the union bound). 
By Definition \ref{def:strongconf}, every $h \in \conf_Q^\sharp \cap \conf_P$ is in 
$\Hyp_Q(\epsilon_Q) \cap \Hyp_P(\epsilon_P)$. Now notice that for any such $h$ in this last intersection, we have $\E_P(h;\Hyp_Q(\epsilon_Q)) \leq  \E_P(h) \leq \epsilon_P$. 
Thus, by definition, we have that any $h \in \conf_Q^\sharp \cap \conf_P$ satisfies $\E_Q(h) \leq 
\mod(\epsilon_Q, \epsilon_P)$. 

Now suppose $\epsilon_Q > \pivot$, and $\conf_Q^\sharp \cap \conf_P = \emptyset$. Recall the equivalent definition of $\mod(\epsilon_Q, \epsilon_P)$ of Corollary~\ref{cor:pivotal-value} for the case $\epsilon_Q > \pivot$. 
By Proposition \ref{prop:pivotal-equivalence}, for any $\epsilon>0$, $\Hyp_Q(\epsilon_Q) \cap \Hyp_P(\epsilon) \neq \emptyset$; therefore pick any $h 
\in \Hyp_Q(\epsilon_Q)\cap \Hyp_P(\hat{\epsilon}_P)$ (recalling $\hat{\epsilon}_P > 0$ from Definition~\ref{def:weakconf}). Since  $\Hyp_P(\hat{\epsilon}_P) \subset \conf_P\subset \Hyp_P(\epsilon_P)$, it follows that $h\in\Hyp_Q(\epsilon_Q)\cap \Hyp_P({\epsilon_P})$, hence, $\E_Q(h) \leq \mod(\epsilon_Q, \epsilon_P)$. Furthermore, since $\conf_Q^\sharp \cap \conf_P = \emptyset$, we know $h \notin \Hyp_Q(\epsilon_Q/C^{\sharp}) \subset \conf_Q^\sharp$, i.e., $\E_Q(h)>\epsilon_Q/C^{\sharp}$. In other words, for the choice $\hhat_{P, Q} \in \conf_Q^\flat \subset \Hyp_Q(\epsilon_Q/C^{\sharp})$, we must have that 
$$\E_Q(\hhat_{P, Q}) \leq \epsilon_Q/C^{\sharp} < \E_Q(h) \leq \mod(\epsilon_Q, \epsilon_P).$$

For the final case, suppose $\epsilon_Q \leq \pivot$, while $\conf_Q^\sharp \cap \conf_P = \emptyset$. We have the following two subcases.

-- First, suppose that $\displaystyle \inf_{h \in \Hyp_Q(\epsilon_Q/C^{\sharp})}R_P(h) = \inf_{h \in \Hyp_Q(\epsilon_Q/(C^{\flat}C^{\sharp}))}R_P(h)$. Then it follows that 
$$\E_P(\hhat_{P, Q}; \Hyp_Q(\epsilon_Q/C^{\sharp})) = \E_P(\hhat_{P, Q}; \conf_Q^\flat) \leq \Prate \text{ (with probability at least $1-\tau$) }.$$
Since we also have that $\hhat_{P, Q}\in \Hyp_Q(\epsilon_Q/C^{\sharp})$, it follows by definition that it satisfies $\E_Q(\hhat_{P, Q}) \leq \mod(\epsilon_Q/C^{\sharp}, \Prate)$. 

-- Otherwise, let $0 < \epsilon \leq \Prate$, and pick $\tilde{h}\in \Hyp_Q(\epsilon_Q/C^{\sharp})$ such that 
$$\inf_{h \in \Hyp_Q(\epsilon_Q/C^{\sharp})}R_P(h) \leq R_P(\tilde{h}) < \min\braces{\inf_{h \in \Hyp_Q(\epsilon_Q/(C^{\flat}C^{\sharp}))}R_P(h), \inf_{h \in \Hyp_Q(\epsilon_Q/C^{\sharp})}R_P(h) + \epsilon}.$$
By the second inequality, namely, the first term in the $\min$, such an $\tilde{h} \notin \Hyp_Q(\epsilon_Q/(C^{\flat}C^{\sharp}))$. Using the second term in the $\min$, we have that $\E_Q(\tilde{h}) \leq \mod(\epsilon_Q/C^{\sharp}, \epsilon)$. 
We therefore have that 
$$\E_Q(\hhat_{P, Q}) \leq \epsilon_Q/C^{\sharp} < C^{\flat}\E_Q(\tilde{h}) \leq 
C^{\flat}\mod(\epsilon_Q/C^{\sharp}, \epsilon)
\leq
C^{\flat}\mod(\epsilon_Q/C^{\sharp}, \Prate).$$

We remark that each of these final two subcases yields a bound slightly better than the one stated in the theorem, which is given as such for simplicity.  
\end{proof}

\begin{remark}[Instantiations of Strong Confidence Sets]
It's technically more challenging to construct strong confidence sets of the same \emph{small}
order of $\epsilon$ as the weak confidence 
sets discussed in Section \ref{sec:weak-confidence-set}. 
In particular, for classification, while $(\epsilon,\tau,C)$-strong confidence sets for $\epsilon = O(n^{-1/2})$ are easy to construct, in contrast, for 
$\epsilon = \tilde{O}\!\left( \left(\V/n\right)^{\frac{1}{2-\beta}}\right)$, the construction is more difficult: 
it is based on an empirical localized uniform Bernstein construction
(Proposition~\ref{prop:strong-conf-noise-adaptive} of Appendix~\ref{app:strongConfSet}).
On the other hand, for the case of linear regression with squared loss,
we provide a simple (and in fact computationally tractable) construction
with $\epsilon = O\!\left( n^{-1}(d + \log(1/\tau)) \right)$
(see Proposition~\ref{prop:linear-regression-confidence-set} of Appendix~\ref{app:strongConfSet}).
\end{remark}

\section{The Gap Between Strong and Weak Moduli}\label{sec:strict-improvements}
We now consider the question of when bounds in terms of the strong modulus are strictly tighter than those in terms of the weak modulus, i.e., whether we have $\mod(\epsilon_1, \epsilon_2) < \min \braces{\epsilon_1, \mod(\epsilon_2)}$ for at least some values of $\epsilon_1, \epsilon_2 > 0$. 
In other words, when does the strong modulus capture a strictly more optimistic relation between $P$ and $Q$? 

We will show in Theorem \ref{thm:gap-iff-non-monotonic} that this is fully characterized by a notion of \emph{monotonity} between achievable excess risks $\E_P$ and $\E_Q$: roughly, $\mod(\epsilon_1, \epsilon_2) =\min \braces{\epsilon_1, \mod(\epsilon_2)}$ for all $\epsilon_1, \epsilon_2$, iff $\E_Q$ decreases together with $\E_P$, for \emph{given values} of $\E_Q$, in a sense to soon be made formal. Interestingly, as a consequence, no gap exists in common situations when both the class $\Hyp$ and the loss $\ell$ are convex (Theorem \ref{thm:convexclassgap}), e.g., in ordinary least-squares linear regression. On the other hand, natural examples of gaps $\mod(\epsilon_1, \epsilon_2) < \min \braces{\epsilon_1, \mod(\epsilon_2)}$ arise in classification and regression, e.g., in feature selection (Example \ref{ex:gap} of Section \ref{sec:examplesStrongMod}). \emph{Of particular importance, whenever such gaps exist, the weak modulus $\mod(\cdot)$ and all the notions of discrepancies it lower-bounds (examples of Section \ref{sec:weakmod-upper-bounds}), do not fully capture the transferable information in the aggregate data from $P$ and $Q$.}  

\subsection{Preliminaries}

We start with the following definitions. The first definition below simply restricts attention to values of $\epsilon_1$ achievable under the target $Q$; the reason for this restriction will soon become clear. 

\begin{definition}
Define $\AQ(\Hyp) \doteq \mathrm{closure}\!\left( \braces{\epsilon \geq 0: \exists h \in \Hyp', \E_Q(h) = \epsilon} \right)$ as the closure of  \emph{achievable} $Q$-excess-risks. 
Furthermore, we consider the mapping 
$\epsilon \mapsto \ddot \epsilon \doteq \max\braces{\epsilon' \in \AQ(\Hyp): \epsilon' \leq \epsilon}$. 
\end{definition}

Remark that we have $\Hyp_Q(\epsilon) = \Hyp_Q(\ddot \epsilon)$. The following proposition then follows by definition and is given for emphasis: 

\begin{proposition}
We always have, $\forall \epsilon_1, \epsilon_2 > 0$, 
$\mod(\epsilon_1, \epsilon_2) \leq \min\braces{\ddot \epsilon_1, \mod(\epsilon_2)} 
\leq  \min\braces{\epsilon_1, \mod(\epsilon_2)}$. 
\end{proposition}

In fact the reader may have noticed that the results of Section \ref{sec:weak} on the weak-modulus could have been given in terms of achievable $Q$-excess errors; in particular, Algorithm \ref{alg:weakupper} of Theorem \ref{thm:single-delta} in fact achieves the rate 
$\min\braces{\ddot \epsilon_Q, \mod(\epsilon_P)}$. Thus, this is the effective rate achievable under the weak modulus, whenever $\AQ(\Hyp)$ has discontinuities (e.g., if $\Hyp$ is finite). 
Hence, we will in fact be interested in characterizing situations with a gap $\mod(\epsilon_1, \epsilon_2) < \min\braces{\ddot \epsilon_1, \mod(\epsilon_2)}$, as opposed to a gap to $\min\braces{\epsilon_1, \mod(\epsilon_2)}$ which is less meaningful. The existence of such gaps is to be characterized via \emph{monotonicity} structures between $P$ and $Q$ excess risks. As it turns out, $\pivot^\sharp \doteq \lim_{\epsilon\to 0} \delta(\epsilon)$ plays a 
{crucial} role. 

We start with a structure on $h$'s with $\E_Q(h)$ above $\pivot^\sharp$.

\begin{definition}
\label{defn:monotonic-transfer-problem}

We say that a \emph{transfer problem}, as defined by $(P, Q, \Hyp, \ell)$, is {\bf monotonic above $\pivot^\sharp$}, if 
the following holds: $\forall \epsilon_1 \in \AQ(\Hyp)$ with $\epsilon_1 > \pivot^\sharp$, and $\forall \epsilon_2>0$ with $\mod(\epsilon_2) > \epsilon_1$, we have $\forall \tau > 0$: 
\begin{align}
    \exists h,h' \in \Hyp \text { with } \E_Q(h) \in [\epsilon_1-\tau,\epsilon_1] \text{ and } \E_Q(h') \in [\mod(\epsilon_2) - \tau, \mod(\epsilon_2)], \text{ satisfying }\E_P(h) \leq \E_P(h')\leq \epsilon_2. \label{eq:pairwise-monotonicity}
\end{align}
\end{definition}

\begin{wrapfigure}{r}{0.22\textwidth}
  \begin{center}
    \includegraphics[width=0.23\textwidth]{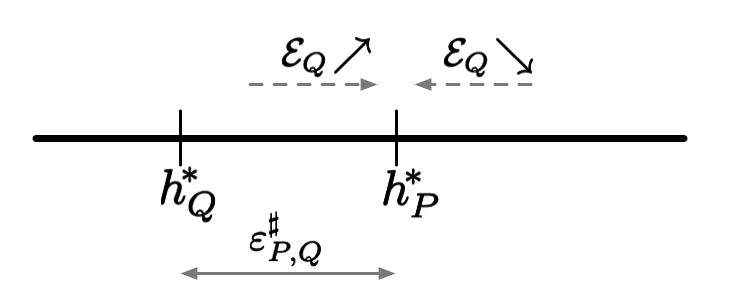}
  \end{center}
  \caption{\small Suppose $\Hyp$ consists of one-sided thresholds on the line, $Y = \hstar_\mu(x)$ for $\mu \equiv P$ or $Q$. It's easy to see that $\E_Q(h)$ (mass of disagreement regions) goes up as $h$ approaches $\hstar_P$ from the left, and goes down otherwise. The problem therefore has no gap between moduli.} 
    \label{fig:Monotonicity}
  \vspace{-0.5cm}
\end{wrapfigure}

The above definition while somewhat opaque, turns out to be equivalent to \emph{pseudo-denseness} (appropriately defined) of \emph{monotone} subsets $\Hyp'$ of $\Hyp$, i.e., subsets where $\E_Q$ goes down with $\E_P$ (see Appendix \ref{app:gapWeakStrong}).

Next we consider the structure of $h$'s with $\E_Q(h)$ at most $\pivot^\sharp$. 

\begin{definition}
We say that the transfer problem $(P,Q,\Hyp,\loss)$ is {\bf anti-monotonic below $\pivot^\sharp$} if either {$\pivot^\sharp =0$ or} the following holds true for all $0< \epsilon_1 { \leq }\ \pivot^\sharp$: 
$$\lim_{\epsilon_2 \to 0} \sup\braces{\E_Q(h): h \in \Hyp_Q(\epsilon_1), \E_P(h; \Hyp_Q(\epsilon_1)) \leq \epsilon_2} = \ddot{\epsilon}_1.$$
\end{definition}
In other words, the best $h$'s in terms of $P$-risk on the set $\Hyp_Q(\epsilon_1)$ are the worst in terms of $Q$ risk. 

\begin{remark} 
 In some sense, anti-monotonicity means that the information $P$ has on $Q$ saturates for large enough $Q$ data: if the $Q$ data alone can decrease risk below some $\epsilon_Q \leq \pivot^\sharp$, then even an infinite amount of $P$ data no longer helps; in fact, the $P$ data may even be harmful as it directs us toward worse predictors at the boundary of $\Hyp_Q(\epsilon_Q)$, while we would rather pick a predictor on its interior.
\end{remark}

\subsection{Characterization of Gaps}

\begin{theorem}[Gap Characterization]
\label{thm:gap-iff-non-monotonic}
For any transfer problem $(P,Q,\Hyp,\loss)$,
we have that $\mod(\epsilon_1,\epsilon_2) = \min\{ \ddot{\epsilon}_1, \mod(\epsilon_2)\}$
(for all $\epsilon_1,\epsilon_2 > 0$)
if and only if $(P,Q,\Hyp,\loss)$ is monotonic above $\pivot^\sharp$ and anti-monotonic below $\pivot^\sharp$.
\end{theorem}

The analysis is given in Appendix \ref{app:gapWeakStrong}. Anti-monotonicity above $\pivot^\sharp$ follows most easily by definition (Proposition \ref{prop:anti-monotonicity}). The proof of monotonicity os most involved: we show in Proposition~\ref{prop:monotone-implies-no-gap} that if Definition~\ref{defn:monotonic-transfer-problem} holds, then there is no gap above $\pivot^\sharp$; on the other hand, 
we will show in Proposition~\ref{prop:no-gap-implies-monotone} that, if 
a transfer problem has no gap above $\pivot^\sharp$ it must hold that monotone sets are pseudo-dense in $\Hyp$. This last statement is constructive, and relies on a mapping from $\epsilon_1 \in \AQ(\Hyp)$ to $\epsilon_2 = \imod(\epsilon_1)$---the pseudo-inverse of the weak modulus, which is then shown to maintain monotonicity.

\subsection{Examples}\label{sec:examplesStrongMod}
\begin{wrapfigure}{r}{0.22\textwidth}
  \begin{center}
    \includegraphics[width=0.22\textwidth]{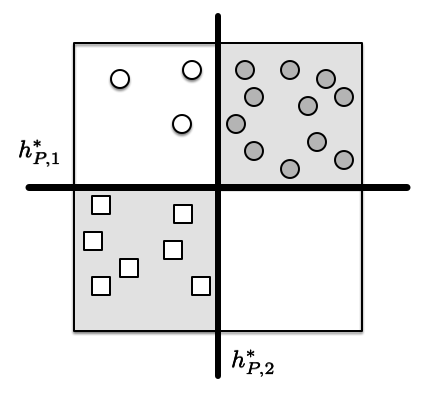}
  \end{center}
  \caption{\small Suppose $P$ is supported in the gray region so that $\hstar_{P, 1}, \hstar_{P, 2}$ are optimal under $P$ for classifying \emph{square vs circle}. Notice that $\hstar_P$ however relies on spurious features; a small amount of data from $Q$, some falling in the upper-left quadrant, allows rejecting $\hstar_{P, 2}$.} 
    \label{fig:exampleStrongMod}
  \vspace{-2cm}
\end{wrapfigure}

This section presents examples witnessing 
both non-trivial gaps between the strong and weak modulus, and also a general case in which there can be no such gap.

We start
with a simple example of gaps, which the reader may find easy to generalize. 
\begin{example}[Gap In Feature Selection]\label{ex:gap}
First we consider a basic regression problem for $X\in \real^2$ where $Q, P$ are given as follows. Let $e_1, e_2$ denote coordinate vectors. 

$\bullet$ $Q_X\sim {\cal N}(0, I_2)$ while $Y \doteq e_1^\top X + \xi$, for  $\xi \sim {\cal N}(0, 1)$. 

$\bullet$ $P_X$ is supported on the subspace
$\braces{x: x^\top e_1 = x^\top e_2}$, i.e., equal coordinates, with marginals ${\cal N}(0, 1)$, while $Y \doteq e_1^\top X + \xi = e_2^\top X + \xi$, for $\xi \sim {\cal N}(0, 1)$.

We let $\Hyp_i \doteq \braces{h_c(x) = c\cdot e_i^\top x: c \in \real}$, and $\Hyp = \Hyp_1 \cup \Hyp_2$, i.e., \emph{sparse vectors}. 

Notice now that, for any $h_c \in \Hyp_1$
, $\E_Q(h_c) = \E_P(h_c) = (1-c)^2$.

On the other hand, for any $h_c \in \Hyp_2$
, $\E_Q(h_c) = 1+ c^2$ while $\E_P(h_c) = (1-c)^2$. 

In other words, we always that (1) $\forall \epsilon_2>0, \ \mod(\epsilon_2) \geq 1 = \pivot^\sharp$, and (2) for $\epsilon_1 < 1$, $\Hyp_Q(\epsilon_1)$ only retains hypotheses in $\Hyp_1$. It follows that, for any $\epsilon_2 < \epsilon_2 < 1$, we have 
$\mod(\epsilon_1, \epsilon_2) = \epsilon_2$ as witnessed by some $h_c \in \Hyp_Q(\epsilon_1)\cap \Hyp_Q(\epsilon_2)$ with $(1-c)^2 = \epsilon_2$. In particular, we have a gap, i.e., $\mod(\epsilon_1, \epsilon_2) < \epsilon_1 < \mod(\epsilon_2)$. 
\end{example}

Notice that the above example is easily generalized to higher dimensions as the main ingredient was to uncover multiple good predictors under $P$ ---by having $\text{supp}(P_X)$ close to an appropriate subspace or manifold relating features of $X$--- some of which are bad under $Q$. Similar examples can be constructed in classification as illustrated in Figure \ref{fig:exampleStrongMod}. 

Next, we consider a surprisingly general set of situations where no gap is possible. 

\begin{theorem}[Convex classes and convex losses]\label{thm:convexclassgap}
Suppose the loss function $\loss(\cdot,\cdot)$ satisfies that, for any $y$, $\loss(\cdot,y)$ is a continuous convex function of its first argument (for instance, the squared loss).
Suppose $\Hyp$ is a convex set of functions $\X \to \reals$, that is, $\forall h,h' \in \Hyp$, $\forall \alpha \in [0,1]$, the function $\alpha h + (1-\alpha) h' \in \Hyp$ (for instance, linear functions).
Then $\forall \epsilon_1,\epsilon_2 > 0$, 
$\mod(\epsilon_1,\epsilon_2) = \min\{ \epsilon_1, \mod(\epsilon_2) \}$.
\end{theorem}
\begin{proof}
If $\epsilon_1 \geq \mod(\epsilon_2)$,
then we have 
$\Hyp_Q(\epsilon_1) \supset \Hyp_P(\epsilon_2)$, 
and it follows that
$\mod(\epsilon_1,\epsilon_2) = \mod(\epsilon_2) = \min\{ \epsilon_1, \mod(\epsilon_2) \}$.

Otherwise, suppose $\epsilon_1 < \mod(\epsilon_2)$.
Then there exists $h \in \Hyp_P(\epsilon_2) \setminus \Hyp_Q(\epsilon_1)$.
Note that we also have 
$\E_P(h; \Hyp_Q(\epsilon_1)) \leq \epsilon_2$.
Let $h'$ be any function in $\Hyp_Q(\epsilon_1)$ with $\E_P(h';\Hyp_Q(\epsilon_1)) \leq \epsilon_2$.

Since both 
$\E_P(h; \Hyp_Q(\epsilon_1)) \leq \epsilon_2$
and 
$\E_P(h';\Hyp_Q(\epsilon_1)) \leq \epsilon_2$,
and $\loss$ is convex in its first argument, 
Jensen's inequality implies any $\alpha \in [0,1]$ has 
$\E_P(\alpha h + (1-\alpha) h'; \Hyp_Q(\epsilon_1)) \leq \alpha \E_P(h; \Hyp_Q(\epsilon_1)) + (1-\alpha) \E_P(h'; \Hyp_Q(\epsilon_1)) \leq 
\epsilon_2$.

Moreover, since $\E_Q(h) > \epsilon_1 \geq \E_Q(h')$, 
by continuity of $\loss$ in its first argument (and the dominated convergence theorem to extend continuity of $\E_Q(\alpha h + (1-\alpha) h')$ in $\alpha$, using $\loss(h(x),y)+\loss(h'(x),y)$ as the dominating function, recalling that all risks are finite by Assumption~\ref{asn:finiteness}) 
we have that there exists $\alpha \in [0,1]$ such that 
$\E_Q(\alpha h + (1-\alpha) h') = \epsilon_1$.
By convexity of $\Hyp$, we know that $\alpha h + (1-\alpha) h' \in \Hyp$.
Therefore, the supremum 
$\sup\{ \E_Q(h'') : \E_Q(h'') \leq \epsilon_1, \E_P(h'';\Hyp_Q(\epsilon_1)) \leq \epsilon_2, h'' \in \Hyp \} = \mod(\epsilon_1,\epsilon_2)$ 
is \emph{achieved} by the function $\alpha h + (1-\alpha) h$, 
with a value $\epsilon_1$.
Hence, $\mod(\epsilon_1,\epsilon_2) = \min\{ \epsilon_1, \mod(\epsilon_2) \}$
in this case as well.
\end{proof}

\section{Lower Bounds for the Strong Modulus}
\label{sec:strong-modulus-learning-lower-bounds}

In this section we ask whether the upper-bounds of Theorem \ref{thm:double-delta} are tight \emph{even in the case where there might be gaps between strong and weak moduli}, i.e., for classes of distributions $(P, Q)$ admitting $\mod(\epsilon_1, \epsilon_2) < \min \braces{\ddot \epsilon_1, \mod(\epsilon_2)}$ for some values of $\epsilon_1, \epsilon_2$. Certainly, since for some transfer problems we always have $\mod(\epsilon_1, \epsilon_2) = \min \braces{\ddot \epsilon_1, \mod(\epsilon_2)}$, our lower-bounds of Section \ref{sec:weak-lower-bounds} would apply when there is no gap. 

As it turns out, the lower-bounds constructions of this section have to adhere to further structure inherent to \emph{admissible} strong moduli $\mod(\cdot, \cdot)$, i.e., not all functions can be the strong moduli of a pair $(P, Q)$. 
As in Theorem~\ref{thm:mod-class-lower-bound}, 
to approach a minimax lower bound analysis, we must first 
define a family of $(P,Q)$ pairs in terms of their strong moduli: 
namely, $\Sigma_{g} = \{ (P,Q) : \mod_{\PQ}(\cdot,\cdot) \leq g(\cdot,\cdot) \}$ for some function $g$.
In light of the fact that $\mod_{\PQ}(\cdot,\cdot)$ has an 
inherent structure, it only makes sense to restrict to 
functions $g(\cdot,\cdot)$ which also share these same 
structural properties, 
which we describe in detail in the following results.

\begin{lemma}[Gaps are Intervals]
\label{lem:gaps-are-open-intervals}
Fix any $\epsilon_2 > 0$. Then if for some $\epsilon_1 > 0$, we have $\mod(\epsilon_1, \epsilon_2) < \epsilon_1$, then it must hold that, 
$\forall \epsilon \text{ satisfying } \mod(\epsilon_1, \epsilon_2) \leq \epsilon \leq \epsilon_1$ we also have 
$\mod(\epsilon, \epsilon_2) = \mod(\epsilon_1, \epsilon_2)$. 
\end{lemma}
\begin{proof} 
Define 
$$\Hyp(\epsilon_1, \epsilon_2) \doteq \braces{h \in \Hyp_Q(\epsilon_1): \E_P(h; \Hyp_Q(\epsilon_1)) \leq \epsilon_2}, $$
and recall that $\mod(\epsilon_1, \epsilon_2) = \sup\braces{\E_Q(h): h\in \Hyp(\epsilon_1,\epsilon_2) }$. In other words, if $\epsilon \geq \mod(\epsilon_1, \epsilon_2)$ we have that $\Hyp(\epsilon_1,\epsilon_2) \subset \Hyp_Q(\epsilon)$, and since $\epsilon \leq \epsilon_1$, we also have that $\inf\braces{R_P(h): h \in \Hyp_Q(\epsilon)} = \inf\braces{R_P(h): h \in \Hyp_Q(\epsilon_1)}$ so we conclude that 
$\Hyp(\epsilon_1, \epsilon_2) = \Hyp(\epsilon, \epsilon_2)$. 
\end{proof} 

\begin{remark}[Implications for $\AQ(\Hyp)$]
In particular, note that Lemma~\ref{lem:gaps-are-open-intervals} further implies that if for some 
$\epsilon_1 > 0$ we have 
$\mod(\epsilon_1,\epsilon_2) < \ddot{\epsilon}_1$, then $\forall \epsilon$ satisfying $\mod(\epsilon_1,\epsilon_2) \leq \epsilon \leq \ddot{\epsilon}_1$,
we also have $\mod(\epsilon,\epsilon_2) = \mod(\epsilon_1,\epsilon_2)$,
since $\mod(\epsilon_1,\epsilon_2) = \mod(\ddot{\epsilon}_1,\epsilon_2)$.
\end{remark}

The above Lemmas \ref{lem:gap-below-weak-mod-only} and \ref{lem:gaps-are-open-intervals} imply that the \emph{strong modulus is in fact very structured} as summarized in the corollary below and illustrated in Figure \ref{fig:strongModulusGraph}. 

\begin{figure}
    \centering
    \includegraphics[width=.7\linewidth]{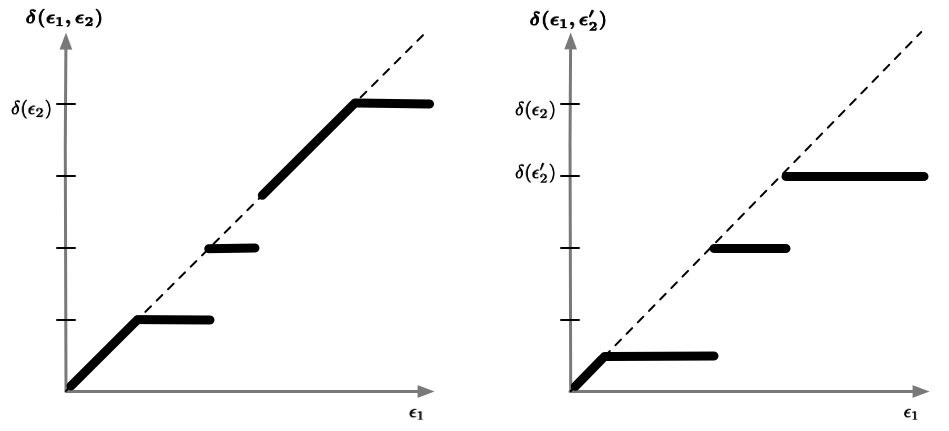}
    \caption{\small Example Graph of strong moduli $\mod(\cdot, \cdot)$. Shown are possible cross-sections at some values $\epsilon_2> \epsilon_2'$.}
    \label{fig:strongModulusGraph}
\end{figure}

\begin{corollary}[Graph of the Strong Modulus] 
\label{cor:admissible-moduli}
For any transfer problem, the strong modulus $\mod(\cdot, \cdot)$ satifies the following conditions. Fix any $\epsilon_2 >\epsilon_2' > 0$: 
\begin{itemize}
    \item $\forall \epsilon_1 > 0$, $\mod(\epsilon_1, \epsilon_2)$ matches the function $\min\braces{\epsilon_1, \mod(\epsilon_2)}$ except on a collection ${\cal I}(\epsilon_2)$ of 
    intervals of $(0, \mod(\epsilon_2)]$.
    Furthermore, on any such $I\in {\cal I}(\epsilon_2)$,  $\mod(\cdot, \epsilon_2)$ is constant $\epsilon_I$, 
    and has left-limit at $\epsilon_I$, where also $\mod(\epsilon_I,\epsilon_2)=\epsilon_I$.
    \item Each 
    interval in ${\cal I}(\epsilon_2)$ is contained in an interval from ${\cal I}(\epsilon'_2)$. 
\end{itemize}
\end{corollary}
\begin{proof}
For $\epsilon_1 \leq \mod(\epsilon_2)$, 
if $\mod(\epsilon_1,\epsilon_2) < \epsilon_1$, 
Lemma~\ref{lem:gaps-are-open-intervals} implies 
$\mod(\cdot,\epsilon_2)$ is constant
between $\epsilon'_1 = \mod(\epsilon_1,\epsilon_2)$
and $\epsilon_1$, 
and moreover satisfies $\mod(\epsilon'_1,\epsilon_2) = \epsilon'_1$ as well.
We can therefore define an interval $I$
of all $\epsilon''_1$ with $\mod(\epsilon''_1,\epsilon_2) = \epsilon'_1$.
Since the above holds for every 
$\epsilon_1 \leq \mod(\epsilon_2)$ with $\mod(\epsilon_1,\epsilon_2) < \epsilon_1$,
every such $\epsilon_1$ is in some such interval $I$.
The second claim follows immediately from this interval structure, together with the fact that $\mod(\cdot,\cdot)$ is non-decreasing in both arguments (Proposition~\ref{prop:double-mod-monotone}).
\end{proof}

\begin{definition}[Strong Modulus Class]
\label{defn:strong-sigma-class}
Let $\H$ denote a hypothesis class.  
Consider any \emph{admissible} function $g : (0,1]^2 \to [0,1]$: namely, $g$ satisfying the properties of $\mod$ stated in Lemma~\ref{lem:gaps-are-open-intervals}. 
Let $0 \leq \beta_P, \beta_Q \leq 1$.
We then let $\Sigma_{\Hyp}(g,\beta_P,\beta_Q)$ 
denote all pairs of distributions $(P,Q)$ 
such that 
(i) $\mod_{\PQ}(\epsilon_1,\epsilon_2) \leq g(\epsilon_1,\epsilon_2)$
$\forall 0 < \epsilon_1,\epsilon_2 \leq 1/4$,
and 
(ii) $P$ and $Q$ satisfy BCC (Definition~\ref{def: Bernstein noise condition})
with parameters $(1,\beta_P)$ and $(1,\beta_Q)$ respectively.
\end{definition}

We have the following theorem.

\begin{theorem}[Strong Modulus Classification Lower-Bound]
\label{thm:strong-modulus-learning-lower-bound}
In all that follows, $C, c_0$ stand for a universal constant independent of problem parameters. 

({\rm i}) {\bf General admissible $\boldsymbol g$.}
For any class $\H$ with $\dim_{\Hyp} \geq 3$, 
for any strong modulus class $\Sigma_{\Hyp}(g,\beta_P,\beta_Q)$ as in Definition~\ref{defn:strong-sigma-class}, 
for any $n_Q,n_P \in \nats$, 
letting $\epsilon_Q = c_0 n_Q^{-\frac{1}{2-\beta_Q}}$ and $\epsilon_P = c_0 n_P^{-\frac{1}{2-\beta_P}}$,
the following holds.
For any learning algorithm $\hhat$, 
there exists $(P,Q) \in \Sigma_{\Hyp}(g,\beta_Q,\beta_P)$ for which the following hold: 
\begin{itemize}
\item $\mod_{\PQ}(\epsilon_Q,\epsilon_P) < \min\!\left\{ \ddot{\epsilon}_Q, \mod_{\PQ}(\epsilon_P) \right\}$,
\item $\EE \E_Q(\hhat)  \geq C g(\epsilon_Q,\epsilon_P)$.
\end{itemize}

({\rm ii}) {\bf Restricted admissible $\boldsymbol g$.} 
Consider the special case of $g$ of the form 
$g(\epsilon_1,\epsilon_2) = \min\{ \epsilon_1, f(\epsilon_2) \}, \forall \epsilon_1, \epsilon_2 >0$,  
for a nondecreasing function $f$ satisfying the condition of Theorem \ref{thm:mod-class-lower-bound}, i.e., $\alpha f(\epsilon) \leq \kappa f(\alpha\epsilon)$ for some $\kappa \geq 1$, and every $0 < \epsilon,\alpha \leq 1$. 
Then, suppose $\max\{n_Q,n_P\} \geq \dim_{\Hyp}$, 
and let $\epsilon_Q = c_0 (\dim_{\Hyp}/n_Q)^{\frac{1}{2-\beta_Q}}$ and $\epsilon_P = c_0 (\dim_{\Hyp}/n_P)^{\frac{1}{2-\beta_P}}$. 
There exists $(P,Q) \in \Sigma_{\Hyp}(g,\beta_Q,\beta_P)$ for which the following hold:
\begin{itemize}
\item $\mod_{\PQ}(\epsilon_Q,\epsilon_P) < \min\!\left\{ \ddot{\epsilon}_Q, \mod_{\PQ}(\epsilon_P) \right\}$,
\item $\EE \E_Q(\hhat) \geq \frac{C}{\kappa} g(\epsilon_Q,\epsilon_P)$.
\end{itemize}
\end{theorem}

The proof is given in Appendix~\ref{app:lowerBoundStrong},
and follows similar arguments as that of Theorem~\ref{thm:mod-class-lower-bound}.
The main distinction is that of 
ensuring the lower bound is witnessed by a pair $(P,Q)$ exhibiting a gap $\mod_{\PQ}(\epsilon_Q,\epsilon_P) < \min\{\ddot{\epsilon}_Q,\mod_{\PQ}(\epsilon_P)\}$, 
which is the reason for requiring $g$ to be admissible, as defined in Definition~\ref{defn:strong-sigma-class}.

\section{Final Remarks}

We conclude below with some open questions which may be of general interest. 

\emph{Mixed Losses.} While we have focused attention in this work to a general but \emph{single} loss $\ell$ for both source and target tasks, there are many situations in transfer learning with mixed losses: for instance in classification, we are often ultimately interested in 0-1 loss under the target, however, for computational reasons, we instead optimize a surrogate objective for the source and or the target tasks. A more refined theory will be needed to properly capture such settings. 

\emph{Benign Moduli.} Our various lower-bounds are established for general classes encoding general moduli. However, in order to match all dependencies on $\Hyp$'s complexity present in upper-bounds, our analysis required we impose some restriction on such moduli. This leaves the possibility that transfer-learning is in fact easier outside such restrictions, i.e., faster rates might be attained.  

\emph{Nonparametric Settings.} While our upper-bounds yield fairly general insights, provided access to particular confidence sets, we only provided examples of such confidence sets for parametric classification and regression. It is quite possible that different approaches might be needed in nonparametric settings, i.e. if rate optimal confidence sets are hard to construct. Furthermore, since our lower-bounds are all dimension dependent, it is unclear whether the proposed moduli in fact yield a tight characterization of transfer rates in nonparametric settings. 

\emph{MultiTask and MultiSource.} Settings with $N\gg 1$ sources $P_i, i\in [N]$ remain largely open. There are certainly many works addressing these settings by extending divergence measures designed for the $N=1$ settings (e.g., any of the divergences discussed above), but it remains unclear which measure of the aggregate information $P_i$'s have on a target $Q$ can be adapted to (e.g., each $P_i$ may be far from $Q$, while an \emph{unknown} mixture of them might be close to $Q$). 

A different assumption, considered by many recent analyses (as discussed in the introduction) is that all $P_i$'s and $Q$ share some common low-dimensional structure that can lower the complexity of the target problem (even if no good source predictor transfers well to the target problem). This is not captured by our notions of moduli.

\bibliography{refs}

\appendix

\section{Weak Modulus}\label{app:weakMod}

\subsection{Relation to Existing Discrepancy Measures}
\label{app:relation2otherMeasures}

\paragraph{Classification.} In what follows assume $\loss(a, b) = \ind{a\neq b}$ and for simplicity suppose $\Y = \{-1,1\}$ 

\begin{example}[$\cal Y$-discrepancy; \citealp*{mohri2012new}]
\label{ex:Y-disc}
Let $\disc_{\cal Y}(P, Q) \doteq \sup_{h \in \Hyp}\abs{R_P(h) - R_Q(h)}$. Similar to \citep*{mohri2012new}: 
\begin{align}
\E_Q(h) &= R_Q(h) - \inf_{h'} R_P(h')+\inf_{h'} R_P(h') - \inf_{h'} R_Q(h') \\
&\leq \E_P(h) + \disc_{\cal Y}(P, Q) +\inf_{h' \in \Hyp} R_P(h') - \inf_{h' \in \Hyp} R_Q(h'). 
\end{align}
In other words, for any $\epsilon\in (0, 1]$, 
\begin{align}
    \mod(\epsilon) \leq \epsilon + \disc_{\cal Y}(P, Q) +\paren{\inf_{h \in \Hyp} R_P(h) - \inf_{h \in \Hyp} R_Q(h)}.
\end{align}
In particular, note that the difference in risks above is at most $\disc_{\cal Y}(P, Q)$, i.e. we also have
$\mod(\epsilon) \leq \epsilon + 2\disc_{\cal Y}(P, Q)$. 
\end{example}

As noted by \citep*{mohri2012new,cortesadaptation}, 
the $\cal Y$-discrepancy in fact works for essentially any 
$\cal Y$ and loss function $\loss$
(in particular, they focused on the cases of bounded losses and convex losses).
We have the following remarks about implications and also refinements on the above.

\begin{remark}[Total Variation and KL-divergence]
Remark that the above immediately implies bounds on $\mod(\epsilon)$ in terms of TV distance $\text{TV}(P, Q) \doteq \sup_{A} \abs{P(A) - Q(A)}$, over all measurable subsets $A$, which clearly upper-bounds $\disc_{\cal Y}(P, Q)$. Furthermore, whenever $P$ dominates $Q$, Pinsker's inequality relates TV to KL-divergence as  
$\text{TV}(P, Q) \leq \sqrt{\frac{1}{2}\KLDiv{Q}{P}}$, implying immediate bounds on $\mod(\cdot)$ in terms of KL.  
Several works have considered the total variation distance as a measure of distribution shift \citep*{bartlett1992learning,barve1997complexity,long1999complexity}.

\end{remark}

Next we consider one of the most notions in the theory of transfer learning. 

\begin{example}[$\cal A$-discrepancy; \citealp{ben2010theory,mansour2009domain}]
\label{ex:A-disc}
Consider $\disc_{\cal A}(P, Q) \doteq \sup_{h,h' \in \Hyp} \abs{ P_X( h \neq h' ) - Q_X( h \neq h' ) }$.
For simplicity, as in \citep{mansour2009domain}, we suppose there exist $\hstarQ, \hstarP \in \Hyp$ that minimize the respective risks, and also let $\hstar = \argmin_{h \in \Hyp} R_Q(h)+R_P(h)$.
\begin{align}
R_Q(h) &\leq Q_X( h \neq \hstar ) + R_Q(\hstar)
\leq P_X( h \neq \hstar ) + \disc_{\cal A}(P,Q) + R_Q(\hstar) \\
&\leq R_P(h) + \disc_{\cal A}(P,Q) + R_Q(\hstar) + R_P(\hstar)
\end{align}
so that 
\begin{equation} 
\E_Q(h) \leq \E_P(h) + \disc_{\cal A}(P,Q) + (R_P(\hstar_P)-R_Q(\hstar_Q)) + R_Q(\hstar)+R_P(\hstar).
\end{equation}
In particular, we have for any $\epsilon\in (0, 1]$, 
\begin{align} 
\mod(\epsilon) \leq \epsilon + \disc_{\cal A}(P,Q) + (R_P(\hstar_P)-R_Q(\hstar_Q)) + R_Q(\hstar)+R_P(\hstar).
\end{align} 

We remark that, similarly to the ${\cal Y}$-discrepancy, the ${\cal A}$-discrepancy can also be localized to $h,h' \in \Hyp_P(\epsilon_0)$ for $\epsilon_0 > 0$, 
as explored by \citet*{zhang2020localized}.
\end{example} 

Next, we revisit a recent proposal by the same authors which may yield tighter bounds on $\mod(\cdot)$ in many situations (see subsequent Remark \ref{rem:tightness}).

\begin{example}[Transfer Exponent; \citealp*{hanneke2019value,hanneke:23}] 
\label{ex:transfer-exponent}
Supposing there exist $\hstarQ, \hstarP \in \Hyp$ with $\E_Q(\hstarQ)=0$ and $\E_P(\hstarP)=0$, 
a value $\rho \geq 0$ is called a \emph{transfer exponent} if there exists $\hstarP \in \argmin_h R_P(h)$ such that $ R_Q(h) - R_Q(\hstarP) \leq C_\rho \cdot \E_P^{1/\rho}(h)$, for some $C_\rho$, and for all $h \in \Hyp$ (equivalently $\forall h\in \Hyp_P(\epsilon_0)$, for some $\epsilon_0>0$). It's then immediate that 
\begin{align}
    \E_Q(h) = R_Q(h) - R_Q(\hstarP) + \E_Q(\hstarP) \leq 
    C_{\rho} \cdot\E_P^{1/\rho}(h) + \E_Q(\hstarP), 
\end{align}
implying that, for all $\epsilon \in (0, 1]$, 
$\mod(\epsilon)\leq C_\rho \cdot \epsilon^{1/\rho} + \E_Q(\hstarP).$

\emph{A more general definition:} we can extend the above definition to the general case with or without $P$-risk minimizers. 
Define $\pivot^{\sharp} \doteq \lim_{\epsilon \to 0} \mod(\epsilon)$.
A transfer exponent $(C_\rho,\rho)$ is defined as satisfying the condition:
$\forall h \in \Hyp$, $\E_Q(h) \leq C_{\rho} \cdot \E_P^{1/\rho}(h) + \pivot^{\sharp}$.
This immediately implies the following bound:
$$\mod(\epsilon) \leq C_{\rho} \cdot \epsilon^{1/\rho} + \pivot^{\sharp}.$$
\end{example}

\begin{remark}[Density Ratios] 
Assuming \emph{covariate shift}, i.e., $P_{Y|X} = Q_{Y|X}$, but $P_X \neq Q_X$, and assuming further that $P_X$ dominates $Q_X$, i.e. $dQ_X/dP_X$ is well defined, many works on domain adaptation, e.g., \cite{sugiyama2008direct, sugiyama2012density}, have characterized the shift from $P$ to $Q$ via $dQ_X/dP_X$, which is typically used there for reweighting the source sample: let $\ell(h)$ denote the r.v. 
$\expec_{Y|X} \ell(h(X), Y)$, reweighting relies on the fact that 
$R_Q(h) = \int \ell(h) \frac{dQ_X}{dP_X} dP_X$. Similarly, notice for instance that if $dQ_X/dP_X$ is bounded by some constant $C$, and assuming the problem admits a Bayes predictor $\hstar\in \Hyp$ (i.e., a pointwise minimizer of $\ell(h)$ a.s.-$P_X$), then we would have 
$$\E_Q(h) = \int \paren{\ell(h) - \ell(\hstar)} \frac{dQ_X}{dP_X} dP_X
\leq C \cdot \E_P(h).$$ In other words, the problem admits a transfer exponent $\rho=1$ with $C_\rho = \sup dQ_X/dP_X$. 
\end{remark}

\paragraph{Regression.} In what follows, let $\loss(a,b)=(a-b)^2$ denote the squared loss, and $\cal Y$ a general subset of $\real$. The discussion inherently assumes that $\expec \ Y^2 < \infty$ under $P$ or $Q$.  

We start with Wasserstein distance which has recently been popular in Machine Learning, and in particular for domain adaptation \cite{redko2017theoretical, shen2018wasserstein}.

\begin{example}[Wasserstein 1] 
\label{ex:wasserstein}
Let $\cal L$ denote the set of $1$-Lipschitz functions on $\X$ w.r.t some metric. Then consider the \emph{integral probability metric} 
$W_1(P, Q) \doteq \sup_{f \in \cal L}\abs{\EE_{P_X} (f) - \EE_{Q_X}(f)}$. 

Suppose $\Hyp \subset \lambda \cdot {\cal L}, \lambda > 0$, and 
is a set of bounded functions $\X \to [-M,M]$.
We consider a weak covariate shift 
scenario, where $\EE_{P}[Y|X] = \EE_{Q}[Y|X] = \hstar(X)$ for some $\hstar \in \Hyp$. Notice that $u\mapsto u^2$ is $4M$-Lipshitz on $[-2M, 2M]$, while for any 
$h, h'\in \Hyp$, $(h-h')$ is $2\lambda$-Lipschitz on 
$\X$, and has range within $[-2M,2M]$. Then for any $h \in \Hyp$, we have that $x \mapsto (h(x)-\hstar(x))^2$ 
is $(8 M \lambda)$-Lipschitz.
Hence, 
$$
\E_Q(h) 
= \EE_Q (h(X) - \hstar(X))^2 
\leq \EE_P (h(X) - \hstar(X))^2 + 8 M \lambda \cdot W_1(P,Q)
= \E_P(h) + 8 M \lambda \cdot W_1(P,Q).
$$
Thus, $\forall \epsilon > 0$,
\begin{align}
\mod(\epsilon) \leq \epsilon + 8 M \lambda \cdot W_1(P,Q).
\end{align}
\end{example}

We note that \cite{shen2018wasserstein} show a similar bound as above\footnote{Their bound is on $L_1$ loss but the arguments are essentially the same.}, but rather than excess risk, are instead interested in relating $Q$-risk to $P$-risk (for $L_1$ loss). These can be transformed into excess risk bounds, however with additional terms of the form of \eqref{eq:A-disc-upper-bound}, which is unsurprising as $W_1$ may be viewed as extending $\cal A$-discrepancy to any class $\Hyp$ of Lipschitz functions).
In fact, the derivations above are clearly similar to those for the $\cal Y$ and $\cal A$ discrepancies. 

\begin{example}[Linear Regression]
\label{ex:linear-regression}
Consider $\X \subset \real^d$ and $\Y \subset \real$, and let 
$\Hyp \doteq \braces{h_w(x)\doteq w^\top x: w \in \real^d}$. 
Write $\Sigma_\mu \doteq \EE_\mu XX^\top$ for $\mu \equiv $ $P$ or $Q$. Remark that, we have for $h_w \in \Hyp$, 
$\E_\mu (h_w) = (w-w_\mu^*)^\top \Sigma_\mu (w- w_\mu^*)$ for 
$w_\mu^* \in \arg\min_w R_\mu (h_w)$---this follows by projection in ${\cal L}_{2, \mu_X}$ of $\expec [Y|X]$ onto the subspace $\Hyp$.

Assume that $\Sigma_P$ is full rank. First, we consider 
a relaxed covariate-shift scenario where $\E_Q(\hstar_P) = 0$, for 
$\hstar_P = h_{w^*_P}$. Letting $w_* \doteq w^*_P$, we then have that, for any $h_w$: 
\begin{align} 
\frac{\E_Q(h_w)}{\E_P(h_w)} = \frac{(w-w_*)^\top \Sigma_Q (w-w_*)}{(w-w_*)^\top \Sigma_P (w-w_*)} \leq \sup_{v}\frac{v^\top \Sigma_Q v}{v^\top \Sigma_P v} = \sup_{v = \Sigma_P^{-\frac{1}{2}}u}\frac{u^\top \Sigma_P^{-\frac{1}{2}}\cdot \Sigma_Q \cdot \Sigma_P^{-\frac{1}{2}}u}{u^\top u} = \lambda_{\text{max}}\paren{\Sigma_P^{-1} \Sigma_Q}, 
\end{align} 
where $\lambda_{\text{max}}$ denotes the largest eigenvalue, 
and the last equality used the fact that the matrices are \emph{similar} (see Theorem 1.3.20 of \cite{matrixAnalysis}). Thus, 
for any $\epsilon > 0$, 
\begin{align}
\mod(\epsilon) \leq \lambda_{\text{max}}\paren{\Sigma_P^{-1} \Sigma_Q} \cdot \epsilon. 
\end{align}

Now, outside of the above relaxed covariate-shift scenario, i.e., when $\E_Q(\hstar_P)\neq 0$, we have that 
$$
\E_Q(h) \leq 2\norm{w-w^*_P}^2_{\Sigma_Q} + 2\norm{w^*_P - w^*_Q}^2_{\Sigma_Q} 
\leq 2\lambda_{\text{max}}\paren{\Sigma_P^{-1} \Sigma_Q}\cdot \norm{w-w^*_P}^2_{\Sigma_P} + 2\E_Q(\hstar_P),$$
letting $\norm{v}^2_{\Sigma} \doteq v^\top \Sigma v$. 
The second inequality follows the same argument as earlier (for the case $\E_Q(\hstar_P) =0$). Hence: 
\begin{align}
  \mod(\epsilon) \leq 2\lambda_{\text{max}}\paren{\Sigma_P^{-1} \Sigma_Q}\cdot\epsilon + 2\E_Q(w^*_P).
\end{align}

\end{example}

\subsection{Examples of Weak-Confidence Sets}\label{app:weakConf}

\paragraph{Classification.}
In what follows, we use the short-hand $\mu(h \neq h') \doteq \mu_X\paren{\braces{x\in \X: h(x) \neq h'(x)}}$,
and for any $\Hyp' \subset \Hyp$, we define
\begin{equation}
    \diam_\mu(\Hyp') \doteq \sup_{h,h' \in \Hyp'} \mu( h \neq h' ).
\end{equation}

The following is a well-known construction of such a weak confidence set (as used, for instance, in the work of \citealp*{hanneke2019value}), 
designed to be adaptive to the noise parameters $(C_\mu, \beta_\mu)$ from Definition~\ref{def: Bernstein noise condition}.

\begin{proposition}
\label{prop:weakconf}
For $\mu$ either $P$ or $Q$, consider the following definition of $\conf_\mu$.
Define
$$
\Abound_\mu(\tau) \doteq \frac{\V \log(n_\mu / \V) + \log(1/\tau)}{n_\mu},
$$
where $\V$ denotes the VC dimension of $\Hyp$ \citep{VC:72}.
For any $h, h' \in \Hyp$, 
define the empirical distance 
$\hat{\mu}( h \neq h' ) = \frac{1}{n_\mu} \sum_{(x,y) \in S_\mu} \ind{ h(x) \neq h'(x) }$.
There exist universal constants $C, C'$ such that
the following  
$$
\conf_\mu \doteq \left\{ h \in \Hyp : \hat{\E}_\mu(h) \leq C \sqrt{ \hat{\mu}( h \neq \hhat_\mu ) \cdot \Abound_\mu(\tau) } + C \cdot \Abound_\mu(\tau) \right\}
$$
is an $(\epsilon_\mu,\tau)$-weak confidence set, for 
$\epsilon_\mu = C' \left( C_{\mu}\cdot \Abound_\mu(\tau)\right)^{\frac{1}{2-\beta_\mu}},$
where $\beta_\mu \in [0,1]$ and $C_\mu \geq 2$ are 
as in the Bernstein Class Condition (Definition~\ref{def: Bernstein noise condition}).
\end{proposition}

The proof is based on the following lemma, 
which 
is a corollary of results of  \citep{VC:74}
(see \citep{hanneke2022no} for a formal proof).

\begin{lemma}[Uniform Bernstein inequality]
\label{lem:Abound}
For $\mu$ either $P$ or $Q$, there exists a universal constant $\Aconst > 1$ such that, with probability at least $1-\tau$, 
every $h,h' \in \Hyp$ satisfy 
\begin{equation}
\label{eqn:uniform-bernstein-1}
\left| (R_\mu(h) - R_\mu(h')) - (\hat{R}_\mu(h) - \hat{R}_\mu(h')) \right| 
\leq \Aconst \sqrt{\hat{\mu}( h \neq h' ) \cdot \Abound_\mu(\tau)} + \Aconst \cdot \Abound_\mu(\tau)
\end{equation}
and
\begin{equation}
\label{eqn:uniform-bernstein-2}
\frac{1}{2} \mu( h \neq h' ) - \Aconst \cdot \Abound_\mu(\tau) \leq \hat{\mu}( h \neq h' ) \leq 2 \mu( h \neq h' ) + \Aconst \cdot \Abound_\mu(\tau). 
\end{equation}
\end{lemma}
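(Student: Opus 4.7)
My plan is to derive the result by applying a uniform Bernstein-type tail inequality (in the style of \cite{VC:74}) to two appropriately chosen classes of binary events indexed by pairs $(h,h')$, and then combining them via a union bound.

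First I would identify the relevant combinatorial objects. Let
$\mathcal{D} \doteq \{(x,y) \mapsto \indic{h(x) \neq h'(x)} : h,h' \in \Hyp\}$ and, using that $\ell(h(x),y) - \ell(h'(x),y)$ is supported on $\{h(x) \neq h'(x)\}$ and takes values in $\{-1,0,1\}$, write the loss difference as the signed sum of two indicator events
\[
\ell(h(x),y) - \ell(h'(x),y) = \indic{h(x) \neq h'(x),\ h(x) \neq y} - \indic{h(x) \neq h'(x),\ h'(x) \neq y}.
\]
The two classes of indicators on the right, along with $\mathcal{D}$, each have VC dimension $O(\vc)$ by standard Sauer--Shelah reasoning (differences/intersections/labelings of a VC class form a VC class with dimension within a universal constant factor).

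Next I would invoke the classical VC relative-deviation (multiplicative Chernoff/Bernstein-type) bound: for any class $\mathcal{A}$ of indicator events with $\mathrm{VC}(\mathcal{A}) \lesssim \vc$, with probability at least $1-\tau$, every $A \in \mathcal{A}$ satisfies
\[
\bigl|\mu(A) - \hat{\mu}(A)\bigr| \leq C\sqrt{\mu(A)\cdot \Abound_\mu(\tau)} + C\,\Abound_\mu(\tau),
\]
for a universal constant $C$. Applied to $\mathcal{D}$, solving this quadratic in $\mu(h\neq h')$ versus $\hat{\mu}(h \neq h')$ yields \eqref{eqn:uniform-bernstein-2} (after possibly enlarging the constant $\Aconst$). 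Applied separately to each of the two indicator classes arising from the decomposition above, and using that both indicator events are contained in $\{h \neq h'\}$ (so their $\mu$-probabilities are at most $\mu(h\neq h')$), summing the two bounds gives a version of \eqref{eqn:uniform-bernstein-1} but with $\mu(h\neq h')$ in place of $\hat{\mu}(h\neq h')$ on the right-hand side.

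The only real obstacle is converting that population quantity $\mu(h\neq h')$ into the empirical $\hat\mu(h\neq h')$ appearing in \eqref{eqn:uniform-bernstein-1}. This is handled by plugging \eqref{eqn:uniform-bernstein-2} back in: on the good event for $\mathcal{D}$ we have $\mu(h\neq h') \leq 2\hat{\mu}(h\neq h') + \Aconst\cdot\Abound_\mu(\tau)$, so that $\sqrt{\mu(h\neq h')\,\Abound_\mu(\tau)} \leq \sqrt{2\hat{\mu}(h\neq h')\,\Abound_\mu(\tau)} + \sqrt{\Aconst}\,\Abound_\mu(\tau)$. Substituting this into the bound on the loss-difference deviation, and absorbing constants into a single $\Aconst$, produces \eqref{eqn:uniform-bernstein-1}. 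A final union bound over the (constantly many) applications of the VC relative-deviation inequality, with parameter $\tau/O(1)$, gives the stated simultaneous guarantee with probability at least $1-\tau$.
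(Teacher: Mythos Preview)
Your proposal is correct and follows the standard route. The paper itself does not prove this lemma: it states the result as a corollary of \cite{VC:74} and defers the formal argument to \cite{hanneke2022no}, so your sketch in fact supplies exactly the kind of derivation the paper omits---applying the classical VC relative-deviation bound to the symmetric-difference class and to the two signed-indicator components of the loss difference, then converting population to empirical disagreement via the resulting multiplicative bound.
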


We now present the proof of Proposition~\ref{prop:weakconf}.

\begin{proof} (Proposition \ref{prop:weakconf})
In Lemma 2 of \cite{hanneke2022no}, property (ii) of Definition~\ref{def:weakconf} is established, on the event of probability at least $1-\tau$ from Lemma~\ref{lem:Abound}.
Moreover, note that on this same event from Lemma~\ref{lem:Abound}, 
for $\hat{\epsilon}_\mu = \E_\mu(\hhat_\mu) + \Abound_\mu(\tau)$, 
any $h \in \Hyp_\mu(\hat{\epsilon}_\mu)$ has 
$$
\hat{\E}_\mu(h) \leq R_\mu(h) - R_\mu(\hhat_\mu) + \Aconst \sqrt{ \hat{\mu}( h \neq \hhat_\mu ) \cdot \Abound_\mu(\tau) } + \Aconst \cdot \Abound_\mu(\tau)
\leq \Aconst \sqrt{ \hat{\mu}( h \neq \hhat_\mu ) \cdot \Abound_\mu(\tau) } + (\Aconst + 1) \cdot \Abound_\mu(\tau),
$$
where the second inequality is due to $h \in \Hyp_\mu(\hat{\epsilon}_\mu)$, 
implying $R_\mu(h) \leq R_\mu(\hhat_\mu)$.
Therefore, any such $h$ is also in $\conf_\mu$ (for $C \geq \Aconst+1$ in the definition of $\conf_\mu$),
and hence property (i) of Definition~\ref{def:weakconf}
is also satisfied on this event.
\end{proof}

\subsection{Lower-Bounds for the Weak Modulus}\label{app:lowerBoundWeak}

\subsubsection{Classification Lower Bound.}

We start with some supporting results. 

\begin{proposition} [Thm 2.5 of \cite{tsybakov2009introduction}] \label{prop:tsy25} Let $\{ \Pi_{h} \}_{h \in \Hyp}$ be a family of distributions indexed over a subset $\Hyp$ of a pseudo-metric $( \mathcal{F}, \semiMetric)$. Suppose $\exists \, h_0, \ldots, h_{M} \in \Hyp$, where $M \geq 2$, such that:
\begin{align} 
\qquad {\rm (i)} \quad  &\semiDist{h_{i}}{h_{j}} \geq 2 s > 0, \quad \forall 0 \leq i < j \leq M,  & \\
\qquad {\rm (ii)} \quad  & \Pi_{h_i} \ll \Pi_{h_0} \quad \forall i \in  [M], \text{ and the average  KL-divergence to } \Pi_{h_0} \text{ satisfies } & \\
& \qquad 
\frac{1}{M} \sum_{i = 1}^{M} \KLDiv{\Pi_{h_i}}{ \Pi_{h_0}} \leq \alpha \log M, \text{ where } 0 < \alpha < 1/8.
\end{align}
Let $Z\sim\Pi_{h}$, and let $\hat h : Z \mapsto \mathcal{F}$ denote any (possibly \emph{improper}) learner of $h\in \Hyp$. We have for any $\hat h$: 
\begin{equation}
\sup_{h \in \Hyp} \Pi_{h} \left( \semiDist{\hat h(Z)}{h} \geq s \right) \geq \frac{\sqrt{M}}{1 + \sqrt{M}} \left( 1 - 2 \alpha - \sqrt{\frac{2 \alpha}{\log(M)}} \right) \geq \frac{3 - 2 \sqrt{2}}{8} > \frac{1}{48}.
\end{equation}
\end{proposition}

The following proposition would be needed to construct packings (of spaces of distributions) of the appropriate size. 

\begin{proposition} [Varshamov-Gilbert bound] \label{lem:VGBound}
Let $d \geq 8$. Then there exists a subset $\{ \sigma_0, \ldots, \sigma_{M}\}$ of $\{-1 ,1 \}^{d}$ such that $\sigma_0 = (1,\ldots,1)$,
\begin{equation}
\text{dist}(\sigma_{i},\sigma_{j}) \geq \frac{d}{8}, \quad \forall\,  0 \leq i < j \leq M, \quad \text{and} \quad M \geq 2^{d / 8},
\end{equation}
where $\text{dist}(\sigma,\sigma') \doteq \text{card}(\{ i \in [m] :  \sigma(i) \neq \sigma'(i) \})$ is the Hamming distance.
\end{proposition}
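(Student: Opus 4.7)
The plan is to prove the existence of the packing by a greedy (sphere-covering) argument: one constructs $\sigma_0,\sigma_1,\ldots$ one at a time and shows the procedure cannot terminate before $M \geq 2^{d/8}$ choices. The key quantitative input is an upper bound on the cardinality of a Hamming ball, which is tight enough to leave at least $2^{d/8}$ valid candidates.

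First, I would fix $\sigma_0 = (1,\ldots,1)$ and define the greedy procedure: having chosen $\sigma_0,\ldots,\sigma_{i-1}$, pick any $\sigma_i \in \{-1,1\}^d$ whose Hamming distance from each $\sigma_j$ with $j<i$ is at least $d/8$, stopping when no such point exists. Let $M+1$ be the final size of the constructed set. By construction the separation property $\text{dist}(\sigma_i,\sigma_j)\geq d/8$ holds pairwise, so the only thing to verify is the lower bound $M\geq 2^{d/8}$.

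Next, I would quantify how many points each chosen $\sigma_j$ "forbids" from further selection. A point $\tau\in\{-1,1\}^d$ is ineligible at step $i$ iff $\tau$ lies in the closed Hamming ball of radius $r := \lceil d/8\rceil -1$ around some $\sigma_j$, $j<i$. The volume of one such ball is
\begin{equation}
V \;:=\; \sum_{k=0}^{r}\binom{d}{k}.
\end{equation}
The standard entropy estimate $\sum_{k=0}^{\lfloor pd\rfloor}\binom{d}{k} \leq 2^{d\,H_2(p)}$ for $p \leq 1/2$ (with $H_2$ the binary entropy) gives $V \leq 2^{d\,H_2(1/8)}$. A direct computation yields $H_2(1/8) = \tfrac{3}{8} - \tfrac{7}{8}\log_2(7/8) < \tfrac{7}{8}$, so $V < 2^{7d/8}$.

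Finally, I would close the argument by observing that, when the greedy procedure terminates, every point of $\{-1,1\}^d$ lies in some forbidden ball, hence $(M+1)\cdot V \geq 2^d$. Combined with $V < 2^{7d/8}$ this yields $M+1 > 2^{d/8}$, which is the claimed bound (for $d\geq 8$, so that $r\geq 0$ and the entropy estimate is in its valid range). The only real technical obstacle is pinning down the entropy bound on the Hamming ball with the right constant, which follows either from Stirling applied to $\binom{d}{\lfloor d/8\rfloor}$ and the fact that the sum has at most $d/8+1$ comparable terms, or from the standard Chernoff-type inequality for the binomial tail; I would use the latter for brevity since it is entirely routine.
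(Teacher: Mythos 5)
The paper states this proposition as a known classical result (it is essentially Lemma 2.9 of \cite{tsybakov2009introduction}) and supplies no proof of its own, so there is no in-paper argument to compare against. Your greedy sphere-covering proof is the standard route and is correct: at termination the closed Hamming balls of radius $r=\lceil d/8\rceil-1$ around the chosen points cover $\{-1,1\}^{d}$, so $(M+1)V\geq 2^{d}$, and the entropy estimate gives $V\leq 2^{dH_2(1/8)}$ with $H_2(1/8)=\tfrac{3}{8}-\tfrac{7}{8}\log_2(7/8)\approx 0.544$, hence $M+1\geq 2^{(1-H_2(1/8))d}\geq 2^{0.45 d}$. One small caveat on your closing step: the weakened chain ``$V<2^{7d/8}$, hence $M+1>2^{d/8}$, hence $M\geq 2^{d/8}$'' is not airtight when $2^{d/8}$ is not an integer (it only yields $M\geq\lfloor 2^{d/8}\rfloor$); but your own volume estimate in fact gives $M+1\geq 2^{0.45d}\geq 2^{d/8}+1$ for all $d\geq 8$, so the claimed bound follows with ample slack once you carry the sharper exponent through. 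The requirement $\sigma_0=(1,\ldots,1)$ is handled correctly, since the greedy procedure may be initialized at any prescribed point.
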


Results similar to the following lemma are known.
\begin{lemma} [A basic KL upper-bound]
\label{lem:klbound} 
For any $0<p, q<1$, we let $\KLDiv{p}{q}$ 
denote $\KLDiv{\text{Ber}(p)}{\text{Ber}(q)}$. 
Now let $0<\epsilon<1/2$ and let $z\in \{ -1, 1\}$. We have 

$$\KLDiv{1/2 + (z/2)\cdot \epsilon\, }{\, 1/2 - (z/2)\cdot \epsilon} 
\leq C_0\cdot \epsilon^2, \text{ for some } C_0 \text{ independent of } \epsilon.$$
\end{lemma}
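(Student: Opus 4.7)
The plan is to reduce to a closed-form expression by exploiting two symmetries, then bound the resulting logarithm with an elementary inequality. Concretely, I would first observe that for any Bernoulli parameters, $\KLDiv{\mathrm{Ber}(p)}{\mathrm{Ber}(q)} = \KLDiv{\mathrm{Ber}(1-p)}{\mathrm{Ber}(1-q)}$ by relabeling outcomes. Applied here, this says the quantity for $z=-1$ equals the quantity for $z=+1$, so it suffices to treat $z=1$.

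Next, with $p = 1/2 + \epsilon/2$ and $q = 1/2 - \epsilon/2$, note the special relation $1-p = q$ and $1-q = p$. Expanding the KL in closed form then gives
$$\KLDiv{p}{q} = p \log\frac{p}{q} + (1-p)\log\frac{1-p}{1-q} = (p-q)\log\frac{p}{q} = \epsilon \log\frac{1+\epsilon}{1-\epsilon}.$$

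Finally, I would bound the logarithm using $\log(1+x)\leq x$ for $x>-1$:
$$\log\frac{1+\epsilon}{1-\epsilon} = \log\!\left(1 + \frac{2\epsilon}{1-\epsilon}\right) \leq \frac{2\epsilon}{1-\epsilon} \leq 4\epsilon,$$
where the last step uses $\epsilon<1/2$. Plugging this in yields $\KLDiv{p}{q} \leq 4\epsilon^{2}$, so any $C_0 \geq 4$ works, independently of $\epsilon$.

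There is no real obstacle here; the lemma is an exercise. The only point worth being careful about is that the two values of $z$ produce the two \emph{different} orderings $\KLDiv{\mathrm{Ber}(p)}{\mathrm{Ber}(q)}$ and $\KLDiv{\mathrm{Ber}(q)}{\mathrm{Ber}(p)}$ (KL being asymmetric), but the Bernoulli-specific symmetry $p \leftrightarrow 1-p$ collapses them to a single computation, after which the rest is purely algebraic.
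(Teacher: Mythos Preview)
Your proof is correct and complete. The paper itself does not supply a proof of this lemma; it simply remarks that ``results similar to the following lemma are known'' and states it without argument. Your reduction via the relabeling symmetry $\KLDiv{\mathrm{Ber}(p)}{\mathrm{Ber}(q)} = \KLDiv{\mathrm{Ber}(1-p)}{\mathrm{Ber}(1-q)}$, followed by the closed-form simplification $(p-q)\log(p/q) = \epsilon\log\frac{1+\epsilon}{1-\epsilon}$ and the bound $\log(1+x)\le x$, is a clean and standard way to obtain the explicit constant $C_0 = 4$.
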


\begin{proof}[Proof of Theorem \ref{thm:mod-class-lower-bound}]
For claim {\rm (i)}, the case $\V = 1$ is implied by claim {\rm (ii)},
so we will focus our proof of claim {\rm (i)} on the case $\V \geq 2$.
We establish both claims at once by introducing appropriate notation, as follows.
For $\Hyp$ and $f$ as in claim {\rm (i)}, 
let $\VV = \V-1$, $\kappa_0 = \kappa$, 
and let $x_0,x_1,\ldots,x_{\VV}$ be a shatterable subset of $\X$ under $\Hyp$.
For $\Hyp$ and $f$ as in claim {\rm (ii)},
instead let $\VV = 1$ and $\kappa_0 = 1$,
and let $x_0,x_1$ be such that
there exist $h_{-1}, h_{1} \in \Hyp$
with $h_{y}(x_1) = y$,
and $h_{-1}(x_0) = h_{1}(x_1)$
(such $x_0,x_1,h_{-1},h_{1}$ must exist since $|\Hyp| \geq 3$),
and without loss of generality suppose 
$h_{-1}(x_0) = h_{1}(x_0) = 1$.
The points $x_0,\ldots, x_{\VV}$ will form the support of marginals $P_X, Q_X$. Furthermore, let $\tilde \Hyp$ denote the \emph{projection} of $\Hyp$ onto $\braces{x_i}_{i =0}^{\VV}$ (i.e., the quotient space of equivalences $h \equiv h'$ on $\braces{x_i}$),
with the additional constraint that all $h\in \tilde \Hyp$ classify $x_0$ as $1$. 
We can now restrict attention to $\tilde \Hyp$ as the \emph{effective} class. 

Let $\sigma \in \braces{-1, 1}^{\VV}$. 
We will construct a family of distribution pairs $(P_\sigma, Q_\sigma)$ indexed by $\sigma$ to which we then apply Proposition \ref{prop:tsy25} above. 
For any $P_\sigma, Q_\sigma$, we let $\eta_{P, \sigma}, \eta_{Q, \sigma}$ denote the corresponding regression functions (i.e., $\expec_{P_\sigma} [Y | x]$, and $\expec_{Q_\sigma} [Y | x]$). To proceed, fix 
$$\epsilon_P = c_0 \cdot \left( \frac{\VV}{n_P} \right)^{\frac{1}{2-\beta_P}}, \, 
\epsilon_Q = \left( \frac{\VV}{n_Q} \right)^{\frac{1}{2-\beta_Q}}, \text{ and } 
\epsilon = c_1 \cdot \min\braces{ \epsilon_Q, f\!\left( \epsilon_P \right) }, $$
{for some $c_0, c_1 \leq 1$ to be defined so that $\epsilon_P, \epsilon < 1/2$}.

\emph{- Distribution $Q_\sigma$.} We have that $Q_\sigma = Q_X \times Q_{Y|X}^\sigma$, where 
$Q_X(x_0) = 1- \frac{1}{\kappa_0}\epsilon^{\beta_Q}$, while $Q_X(x_i) = \frac{1}{\VV \kappa_0}\epsilon^{\beta_Q}$ for all $i \geq 1$.
Now, the conditional $Q_{Y|X}^\sigma$ is fully determined by $\eta_{Q, \sigma}(x_0) = 1$, 
and $\eta_{Q, \sigma}(x_i) = 1/2 + (\sigma_i/2)\cdot \epsilon^{1-\beta_Q}$, $i \geq 1$. 

\emph{- Distribution $P_\sigma$}. We have that $P_\sigma = P_X \times P_{Y|X}^\sigma$, $P_X(x_0) = 1- \epsilon_P^{\beta_P}$, while $P_X(x_i) = \frac{1}{\VV}\epsilon_P^{\beta_P}$, $i \geq 1$. Now, the conditional $P_{Y|X}^\sigma$ is fully determined by $\eta_{P, \sigma}(x_0) = 1$, and $\eta_{P, \sigma}(x_i) = 1/2 + (\sigma_i/2) \cdot \epsilon_P^{1-\beta_P}$, $i \geq 1$. 

\emph{- Verifying that $(P_\sigma, Q_\sigma) \in \Sigma(f, \beta_P, \beta_Q)$}. For any $\sigma \in \braces{-1, 1}^{\VV}$, let $h_\sigma \in \tilde \Hyp$ denote the corresponding
Bayes classifier (we remark that the Bayes is the same for both $P_\sigma$ and $Q_\sigma$). Now, pick any other $h_{\sigma'} \in \tilde \Hyp$, and let $\text{dist}(\sigma, \sigma')$ denote the Hamming distance between $\sigma, \sigma'$ (as in Proposition \ref{lem:VGBound}). We then have that 
 \begin{align}& \E_{Q_\sigma}(h_{\sigma'}) = \text{dist}(\sigma, \sigma')\cdot\frac{1}{\VV \kappa_0}\epsilon^{\beta_Q}\cdot \epsilon^{1-\beta_Q} = \frac{\text{dist}(\sigma, \sigma')}{\VV \kappa_0}\cdot \epsilon, 
 \text{ while } 
 Q_X(h_{\sigma'} \neq h_\sigma) = \frac{\text{dist}(\sigma, \sigma')}{\VV \kappa_0}\cdot \epsilon^{\beta_Q}, \label{eq:EQ-comp}\\
 &\text{and similarly, } 
 \E_{P_\sigma}(h_{\sigma'}) = \frac{\text{dist}(\sigma, \sigma')}{\VV}\cdot \epsilon_P, \text{ while } 
P_X(h_{\sigma'} \neq h_\sigma) = \frac{\text{dist}(\sigma, \sigma')}{\VV}\cdot \epsilon_P^{\beta_P}.\label{eq:EP-comp}
\end{align}
Since $({\text{dist}(\sigma, \sigma')}/{\VV}) \leq 1$ and $\kappa_0 \geq 1$, it follows that BCC (Definition \ref{def: Bernstein noise condition}) holds with parameters $(1, \beta_P)$ and $(1, \beta_Q)$ respectively for any $P_\sigma$ and $Q_\sigma$ (noting that $\Hyp_P(1/2)$ and $\Hyp_Q(1/2)$ both project to $\tilde \Hyp$). 

Moreover, notice that for every such $\E_{P_{\sigma}}(h_{\sigma'})$, we also have that (in both cases {\rm (i)} and {\rm (ii)}) 
$$\E_{Q_{\sigma}}(h_{\sigma'}) \leq \frac{c_1}{\kappa_0}\cdot \frac{\text{dist}(\sigma,\sigma')}{d} \cdot f(\epsilon_P) \leq \frac{c_1}{\kappa_0}\cdot \kappa_0 \cdot f \paren{\frac{\text{dist}(\sigma,\sigma')}{d} \cdot \epsilon_P} 
\leq  f\paren{\E_{P_{\sigma}}(h_{\sigma'})}. 
$$

{ 
Notice that, by construction, the mapping $\E_{P_{\sigma}}(h)\mapsto \E_{Q_{\sigma}}(h)$ is nondecreasing, hence the above implies that, for any $\epsilon' = \frac{k}{d} \cdot \epsilon_P$, $k \in \{0,\ldots,d\}$, we do have $\mod(\epsilon') \leq f(\epsilon')$.
Now, for values $\frac{k}{d} \cdot \epsilon_P < \epsilon' < \frac{k+1}{d} \cdot \epsilon_P$,
$k \in \{0,\ldots,d-1\}$,
note that $\Hyp_P(\epsilon') = \Hyp_P(\frac{k}{d} \cdot \epsilon_P)$
so that 
$\mod(\epsilon') = \mod(\frac{k}{d} \cdot \epsilon_P) \leq f(\frac{k}{d} \cdot \epsilon_P) \leq f(\epsilon')$
(since $f$ is nondecreasing).
Noting that the requirement of $\Sigma_{\Hyp}(f,\beta_P,\beta_Q)$
only concerns $\mod(\epsilon') \leq f(\epsilon')$ for $\epsilon' \leq 1/4$, 
and recalling that $\epsilon_P \leq 1/2$,
this completely verifies that 
$(P_{\sigma},Q_{\sigma}) \in \Sigma_{\Hyp}(f,\beta_P,\beta_Q)$
(for both case {\rm (i)} and case {\rm (ii)}).
}

\emph{- Reduction to a packing}. Now apply Proposition \ref{lem:VGBound} to identify a subset $\Sigma$ of $\braces{-1, 1}^{\VV}$, where 
$\abs{\Sigma} = M \geq 2^{\VV/8}$, and $\forall \sigma, \sigma' \in \Sigma$, we have $\text{dist}(\sigma, \sigma') \geq \VV/8$.  It should be clear then that 
for any $\sigma, \sigma' \in \Sigma$, 
$$\E_{Q_\sigma}(h_{\sigma'}) \geq \frac{\VV}{8}\cdot \frac{1}{\VV{\kappa_0}}\epsilon^{\beta_Q}\cdot \epsilon^{1-\beta_Q} = \epsilon/(8{\kappa_0}).$$
Furthermore, by construction, any classifier $\hat h: \braces{x_i : i \leq d} \mapsto \braces{-1, 1}$ can be reduced to a decision on $\sigma$, and we henceforth view $\text{dist}(\sigma, \sigma')$ as the pseudo-metric referenced in Proposition \ref{prop:tsy25}, with effective indexing set $\Sigma$. 

\emph{- KL bounds in terms of $n_P$ and $n_Q$}. 
Define $\Pi_\sigma = P_\sigma^{n_P}\times Q_\sigma^{n_Q}$. We can now verify that all $\Pi_\sigma, \Pi_{\sigma'}$ are close in KL-divergence. First notice that, for any $\sigma, \sigma' \in \Sigma$ (in fact in $\braces{-1,1}^{\VV}$)
\begin{align} 
\KLDiv{\Pi_\sigma}{\Pi_{\sigma'}} &= 
n_P \cdot \KLDiv{P_\sigma}{P_{\sigma'}} + 
n_Q \cdot \KLDiv{Q_\sigma}{Q_{\sigma'}} \nonumber \\
&= n_P \cdot \Expectation_{P_X} \KLDiv{P^\sigma_{Y|X}}{P^{\sigma'}_{Y|X}} + 
n_Q \cdot \Expectation_{Q_X} \KLDiv{Q^\sigma_{Y|X}}{Q^{\sigma'}_{Y|X}} \nonumber\\
&= n_P \cdot \sum_{i=1}^{\VV} \frac{ \epsilon_P^{\beta_P}}{\VV}\KLDiv{P^\sigma_{Y|x_i}}{P^{\sigma'}_{Y|x_i}} 
+ n_Q \cdot \sum_{i=1}^{\VV} \frac{\epsilon^{ \beta_Q}}{\VV{ \kappa_0}}\KLDiv{Q^\sigma_{Y|x_i}}{Q^{\sigma'}_{Y|x_i}} \nonumber\\
&\leq C_0\paren{n_P\cdot {\epsilon_P^{(2-\beta_P)}} + 
{ (n_Q/\kappa_0)}\cdot \epsilon^{(2-\beta_Q)}} \label{eq:firstkl}\\
&\leq C_0\VV({ c_0^{(2-\beta_p)}} + c_1^{2-\beta_Q})
\leq 2C_0{ (c_0 \lor c_1)}  \VV.
\label{eq:finalkl}
\end{align}
where, for inequality \eqref{eq:firstkl}, we used Lemma \ref{lem:klbound} to upper-bound the divergence terms. It follows that, for $c_0, c_1$ sufficiently small so that 
$2C_0{(c_0 \lor c_1)}\leq 1/16$, we get that \eqref{eq:finalkl} is upper bounded by $(1/8) \log M$. Now apply Proposition \ref{prop:tsy25} and conclude with the first two claims {\rm (i)} and {\rm (ii)} of the theorem.

For the final claim in the theorem, consider $\Hyp \doteq \braces{h_\sigma}_{\sigma \in \Sigma}$, where $\Sigma$ is supplied by Proposition~\ref{lem:VGBound} as just described above.
In particular, since all functions in $\Hyp$ agree on $x_0$,
and $2^{d/8} \leq |\Hyp| \leq 2^{d}$, 
the VC dimension $\V$ satisfies $\V \leq d$. Moreover, 
Sauer's lemma \citep*{sauer1972density,VC:72} 
implies $2^{d/8} \leq \left( \frac{e d}{\V} \right)^{\V}$, 
that is, 
$d \leq 8 \V \log_{2}\!\left( \frac{e d}{\V} \right)$,
which further implies
$d \leq 16 \V \log_{2}\!\left( 8 e \right) \leq 72 \V$,
so that $\V \geq d/72$.
Thus, $\V = \Theta(d)$.

{For this class, we modify the construction slightly as follows: now set $\epsilon = \min \braces{\epsilon_Q, \frac{1}{8}\epsilon_P}$, and follow the same construction as is case {\rm (i)}, but with $\kappa$ set to 1.

It follows from \eqref{eq:EQ-comp} and \eqref{eq:EP-comp} that the BCC conditions hold, while we can verify the weak moduli is appropriately bounded. Indeed, we have for any $\E_{P_{\sigma}}(h_{\sigma'}) 
= \frac{\text{dist}(\sigma,\sigma')}{d} \cdot \epsilon_P$ that  

\begin{align} 
\E_{Q_{\sigma}}(h_{\sigma'}) \leq  \frac{\text{dist}(\sigma,\sigma')}{d} f\paren{\frac{1}{8} \cdot \epsilon_P} \leq f\paren{\frac{1}{8} \cdot \epsilon_P} \leq f\!\left( \frac{\text{dist}(\sigma,\sigma')}{d} \cdot \epsilon_P \right),
\end{align} 
since $\text{dist}(\sigma,\sigma') \geq \frac{d}{8}$ for $\sigma \neq \sigma'$.

The rest of the argument is identical to the above cases {\rm (i)} and {\rm (ii)}. 
} 
\end{proof}

\paragraph{Lower Bound in Terms of $\pivot^\sharp$:}
Here we show that in principle, $\pivot^\sharp \doteq \lim_{\epsilon \to 0} \mod(\epsilon)$ is inescapable in regimes with low $Q$ sample size $n_Q$, unless we make refined assumptions beyond what's captured by the weak modulus (since, e.g., $\exists$ procedures $\hat h$, as shown in Section \ref{sec:strongConfBounds} with $\E_Q(\hat h) \leq \mod(\epsilon_Q, \epsilon_P) $, which can be strictly less than $\pivot^\sharp$ when the transfer problem displays gaps between strong and weak moduli). 

We consider classes of distributions $(P, Q)$ with the following restricted form for $\mod_{\PQ}(\cdot)$. 
 \begin{definition} Let $g: (0, 1]\mapsto [0, 1]$ denote any non-decreasing function, and let $\epsilon_0> 0$. We then let 
     $\Sigma_\Hyp(g, \epsilon_0)$ denotes the set of all pairs of distribution $(P, Q)$ such that 
     $\forall 0 < \epsilon < \frac{1}{4}$, $\mod_{\PQ}(\epsilon) \leq g(\epsilon) + \pivot^\sharp$ and $\pivot^\sharp = \epsilon_0$. 
\end{definition}

We have the following result. 

 \begin{theorem} Consider any $\Hyp$ with VC dimension at least $4$, and a class $\Sigma_\Hyp(g, \epsilon_0)$ for some function $g$ and $\epsilon_0 > 0$. Suppose $g\paren{\frac{1}{4 n_P}} + \epsilon_0 < \frac{1}{8 n_Q}$.
 The following holds for any learner $\hat h$ with access to $S_P\sim P^{n_P}$ and $S_Q\sim Q^{n_Q}$:  
 \begin{align}
     \inf_{\hat h}\sup_{(P, Q) \in \Sigma_\Hyp(g, \epsilon_0)}\pr{\E_Q(\hat h) \geq g\paren{\frac{1}{4 n_P}} + \pivot^\sharp}\geq \frac{1}{8}. 
\end{align}
\end{theorem}
\begin{proof} 
Let $\braces{x_0,x_1,x_2,x_3} \subset \X$ be any set shattered by $\Hyp$. Let $\sigma \in \{-1,1\}^2$ with coordinates $\sigma_1, \sigma_2$.

-- Define $P_X(x_1) = \frac{1}{4n_P}$, 
$P_X(x_0) = P_X(x_2) = P_X(x_3) = \frac{1}{3} - \frac{1}{12 n_P}$.
Define $P^{\sigma}$ to have marginal distribution on $\X$ equal $P_X$, 
and $P^{\sigma}_{Y|X}(Y=1|X=x_1) = \frac{1+\sigma_1}{2}$, while $P^{\sigma}_{Y|X}(Y=1|X=x_i) =0$ for $i \neq 1$.  

-- Define $Q_X(x_1) = g\paren{\frac{1}{4 n_P}}$, 
$Q_X(x_2) = Q_X(x_3) = \epsilon_0$, 
$Q_X(x_0) = 1 - \sum_{i=1}^{3} Q_X(x_i)$.

Define $Q^{\sigma}_{Y|X}(Y=1|X=x_1) = \frac{1+\sigma_1}{2}$,
while $Q^{\sigma}_{Y|X}(Y=1|X=x_2) = \frac{1+\sigma_2}{2}$,
$Q^{\sigma}_{Y|X}(Y=1|X=x_3) = \frac{1-\sigma_2}{2}$, 
and $Q^{\sigma}_{Y|X}(Y=1|X=x_0) = 0$.

Note that we indeed have $\pivot^{\sharp} = \epsilon_0$ for any such $(P^{\sigma},Q^{\sigma})$ pairs.
Also note that functions $h \in \Hyp_{P^{\sigma}}(1/4)$
witness only two possible classifications of $x_0,\ldots,x_3$: 
namely, $h(x_0)=h(x_2)=h(x_3)=-1$, 
so that any $0 < \epsilon < \frac{1}{4 n_P}$ 
has $\mod(\epsilon) = \pivot^{\sharp}$, 
and any $\frac{1}{4 n_P} \leq \epsilon < \frac{1}{4}$ 
has $\mod(\epsilon) = g\paren{\frac{1}{4 n_P}} + \pivot^{\sharp} \leq g(\epsilon) + \pivot^{\sharp}$.

Suppose $g\paren{\frac{1}{4 n_P}} + 2\epsilon_0 < \frac{1}{4 n_Q}$.
Let $\sigma \sim \mathrm{Uniform}(\{-1,1\}^2)$,
and let $S_P,S_Q$ be conditionally iid samples (given $\sigma$)
from $P^{\sigma}$ and $Q^{\sigma}$ and of sizes $n_P$ and $n_Q$, respectively.
Let $\hat{h}$ be as defined under these $S_P$, $S_Q$.

Note that, with probability at least $\frac{1}{2}$, 
$x_1$ does not appear in $S_P$ and none of $x_1,x_2,x_3$ appear in $S_Q$.  Refer to this event as $E_1$.
Note that this event only concerns the \emph{marginal} 
distributions $P_X,Q_X$, 
so that its occurrence is independent of $\sigma$.
Moreover, on this event, none of the $(X,Y)$ in $S_Q$ or $S_P$
have $Y|X$ distributions dependent on $\sigma$ either.
Thus, overall, $\hat{h}$ is conditionally independent of $\sigma$ given this event $E_1$, and $\sigma$ remains conditionally $\mathrm{Uniform}(\{-1,1\}^2)$ given $E_1$.

Thus, $\Prob( \hat{h}(x_1) \neq \sigma_1 \land \hat{h}(x_2) \neq \sigma_2 | E_1 ) = \frac{1}{4}$.
Noting that, when $\hat{h}(x_1) \neq \sigma_1$ and $\hat{h}(x_2) \neq \sigma_2$, we have $\E_{Q^{\sigma}}(\hat{h}) \geq Q_X(x_1)+Q_X(x_2) = g\paren{\frac{1}{4 n_P}} + \epsilon_0$,
altogether we have that (lower-bounding sup with average)
\begin{align}
\sup_{\sigma} \Prob_{(S_P, S_Q)\sim (P^\sigma, Q^\sigma)}\paren{\E_{Q}(\hat{h}) \geq g\paren{\frac{1}{4 n_P}} + \epsilon_0}
& \geq \Prob_{\sigma, (S_P, S_Q)\sim (P^\sigma, Q^\sigma)}  \paren{\E_{Q^{\sigma}}(\hat{h}) \geq g\paren{\frac{1}{4 n_P}} + \epsilon_0 }
\\ & \geq \Prob\paren{\E_{Q^{\sigma}}(\hat{h}) \geq g\paren{\frac{1}{4 n_P}} + \epsilon_0 \middle| E_1 } \frac{1}{2}
\geq \frac{1}{8}.
\end{align}
\end{proof} 

The reader may notice that a slightly more-involved version of the same argument would yield a lower bound of the form $g\paren{\frac{c \V}{n_P}} + \pivot^{\sharp}$ (under restrictions on $g$ similar to Theorem~\ref{thm:mod-class-lower-bound}). Additionally, as in the regression case of Theorem~\ref{thm:reg-LwBnd}, we can also involve $\V$ in the $n_Q$ term at the expense of a constant factor on $\pivot^{\sharp}$: 
i.e., to get a lower-bound of the form $c \cdot \min\braces{\frac{\V}{n_Q}, g\paren{\frac{c' \V}{n_P}} + \pivot^{\sharp} }$ for a constant $c < 1$.

\subsubsection{Regression Lower Bound}

Theorem~\ref{thm:reg-LwBnd} will follow immediately from the following two lemmas.

 \begin{lemma}\label{lem:regression-lb-same-hstar} Under the assumptions of Theorem \ref{thm:reg-LwBnd}, we have 
\begin{align}
\inf_{\hhat} \sup_{(P, Q) \in \Lambda\paren{\sigma^2_Y,\lambda_0, \epsilon_0}} \EE\  \E_Q(\hhat) \geq c\cdot 
\min\braces{ \frac{d\cdot \sigma^2_Y}{n_Q}, \lambda_0 \frac{d\cdot \sigma^2_Y}{n_P}}.
\end{align}
 \end{lemma}
 \begin{proof} 
We consider the following construction with fixed marginals $P_X, Q_X$. Let $e_i, i\in [d]$ denote the $i$-th coordinate vector in $\real^d$. We also let $\epsilon \doteq \min\braces{ \frac{d\cdot \sigma_Y^2}{n_Q}, \lambda_0 \frac{d\cdot \sigma_Y^2}{n_P}}$ and let $\tau > 0$ denote a free variable to be specified later in terms of $\epsilon$. Finally, let $\omega \in \braces{\pm 1}^d$ denote a vector of Rademacher r.v's $\omega_i$. 

$\bullet$ $Q_X$ is supported $\braces{e_i}$, and satisfies $Q_X(e_i) = {1}/{d}$. 
We then define the joint $Q_\omega = Q_X \times Q^{\omega}_{Y|X}$, where the conditional $Q^\omega_{Y|X}$ is given by the relation 
$Y = \tau \cdot \omega^\top X + {\cal{N}}(0, \sigma^2_Y)$ where the noise variable is independent of $X$. 

$\bullet$ $P_X$ is supported $\braces{0} \cup \braces{e_i}$, and satisfies $P_X(e_i) = {1}/(d\cdot \lambda_0)$, and $P_X(0) = 1-1/\lambda_0$. 
We then define the joint $P_\omega = P_X \times Q^{\omega}_{Y|X}$. 

First we verify that for every instance $\omega\in \braces{\pm 1}^d$, 
$(P_\omega, Q_\omega) \in \Lambda(\lambda_0, \epsilon_0)$. By definition, the two distributions shares the same risk minimizer 
$h_\omega (x) \doteq \tau \cdot \omega^\top x$. Next, we have 
$\Sigma_Q = \frac{1}{d}\sum_i  e_i e_i^\top = \frac{1}{d}I_d $, while 
$\Sigma_P = \frac{1}{\lambda_0} \Sigma_Q$ with inverse $\lambda_0\cdot \Sigma_Q^{-1}$, and we therefore have $\lambda_{\max}\paren{\Sigma_P^{-1}\Sigma_Q} = \lambda_0$. 

Now, for any $h: \real^d\mapsto \real$, define the excess risk w.r.t. $Q_\omega$ as $\E_{Q_\omega}(h)\doteq \expec_{Q_X} (h(X) - h_\omega(X))^2$. 

Next, let $\hat h: S_P\times S_Q\mapsto {\cal L}_{2, Q_X}$ denote any learner, where $S_P, S_Q$ denote independent datasets of respective sizes $n_P, n_Q$. For simplicity we also let $\hat h$ denote the learner's output. Then for any $i\in [d]$, define $\hat \omega_i = \text{sign}(h(e_i))$. 

We have that 
$\E_{Q_\omega}(\hat h) \geq \sum_{i=1}^d (\tau^2/d)\cdot \ind{\hat \omega_i \neq \omega_i}$. Letting $\Pi_\omega \doteq P_\omega^{n_P} \times Q_\omega^{n_Q}$, it follows that 
\begin{align}
    \sup_\omega \Expectation_{(S_P, S_Q) \sim\Pi_\omega }  \E_{Q_\omega}(\hat h) \geq \Expectation_\omega \Expectation_{(S_P, S_Q) \sim\Pi_\omega }  \E_{Q_\omega}(\hat h)\geq \tau^2\cdot \frac{1}{d} \cdot \sum_{i=1}^d \Expectation_\omega \Expectation_{(S_P, S_Q) \sim\Pi_\omega } \ind{\hat \omega_i \neq \omega_i}. \label{eq:minmaxreg}
\end{align}

Since the integrand in the summation in \eqref{eq:minmaxreg} is bounded, we can change the order of integration as 
In particular, for any fixed sample $S_P, S_Q$, let $\Xspl, \Yspl$ denote respectively the combined $X$ and $Y$ values in both samples, and let let $\Xspl_i$ denote $X$ values falling at $e_i$ (from both $P, Q$ sampled combined), and let $\Yspl_i$ denote the corresponding $Y$ values.
We then consider the following integration order 
\begin{align}
   \Expectation_\omega \Expectation_{(S_P, S_Q) \sim\Pi_\omega } \ind{\hat \omega_i \neq \omega_i}
   = \Expectation_{\Xspl}\ \Expectation_{\omega \setminus \omega_i}\ \Expectation_{\Yspl\setminus \Yspl_i \ \mid \ \omega\setminus \omega_i, \Xspl}\ \expec_i \ \ind{\hat \omega_i \neq \omega_i}, 
\end{align}
where we let $\expec_i (\cdot)$ denote integration over $\Yspl_i, \omega_i$, conditioned on all other random variables in $S_P, S_Q, \omega$. 

Let $\hat n_{i} \doteq \abs{\Xspl_i}$. Suppose $\hat n_i > 0$ and define the constant vector $a \doteq [\tau, \ldots, \tau] \in \real^{\hat n_i}$. Notice that, viewed as a vector, 
$\Yspl_i = \omega_i\cdot a + {\cal N}(0, \sigma^2_Y\cdot I_{\hat n_i})$, i.e., is drawn according to the mixture $\frac{1}{2}{\cal N}(-a, \sigma^2_Y\cdot I_{\hat n_i}) + \frac{1}{2}{\cal N}(a, \sigma^2_Y\cdot I_{\hat n_i})$. The Bayes classifier for $\omega_i$, namely, $\hstar_i(\Yspl_i)\doteq \text{sign}\paren{a^\top \Yspl_i}$, has 0-1 error 
$\expec_i \ \ind{\hstar_i(\Yspl_i) \neq \omega_i} = \phi(-\norm{a}/\sigma_Y)$, where $\phi$ denotes the standard-normal CDF. Hence we have 
\begin{align} 
\expec_i \ \ind{\hat \omega_i \neq \omega_i} \geq 
\phi(-\norm{a}/\sigma_Y) = \phi \!\paren{-{\tau}\sqrt{\hat n_i/\sigma^2_Y}}. 
\end{align}
If $\hat n_i =0$ then 
$\expec_i \ \ind{\hat \omega_i \neq \omega_i} = \frac{1}{2} = \phi(0)$, 
since the only randomness left is then in $\omega_i$. Now let $\Xspl\sim P_X^{n_P} \times Q_X^{n_Q}$ denote all $X$ values in $S_P, S_Q$. Integrating over $\Xspl$, and by convexity of $z\mapsto \phi(-\tau \sqrt{z/\sigma^2_Y}), z\geq 0$, we have 
\begin{align}
 \Expectation_\omega \Expectation_{(S_P, S_Q) \sim\Pi_\omega } \ind{\hat \omega_i \neq \omega_i}&\geq  \expec_{\Xspl} \ \phi\!\paren{-\tau \sqrt{\hat n_i/\sigma^2_Y}}
 \geq \phi\!\paren{-\tau \sqrt{\frac{1}{\sigma^2_Y}\expec_\Xspl \ \hat n_i}} \\
 &= \phi\!\paren{-\tau \sqrt{\frac{1}{d\sigma^2_Y}(n_P/\lambda_0 + n_Q)}}.
 \label{eqn:regression-lb-expectation step}
\end{align}

Setting $\tau^2 = \epsilon \leq 2 \cdot 
\frac{d\sigma^2_Y}{n_P/\lambda_0 + n_Q}$,
we see that the r.h.s. above is at least $\phi(-\sqrt{2})$. Thus, to finish, we write 
\begin{align}
   \sup_\omega \Expectation_{(S_P, S_Q) \sim\Pi_\omega }  \E_{Q_\omega}(\hat h) \geq
   \tau^2\cdot \frac{1}{d} \cdot \sum_{i=1}^d\phi(-\sqrt{2}) = \phi(-\sqrt{2})\cdot \epsilon. 
   \label{eqn:regression-lb-final-step}
\end{align}
This completes the proof. 
\end{proof}

\begin{lemma}\label{lem:regression-lb-different-hstar}
Under the assumptions of Theorem \ref{thm:reg-LwBnd}, we have 
\begin{align}
\inf_{\hhat} \sup_{(P, Q) \in \Lambda\paren{\sigma^2_Y,\lambda_0, \epsilon_0}} \EE\  \E_Q(\hhat) \geq c\cdot 
\min\braces{ \frac{d \sigma^2_Y}{n_Q}, \epsilon_0},
\end{align}
for a universal constant $c > 0$.
 \end{lemma}
 \begin{proof} 
The lemma trivially holds if $\epsilon_0 = 0$, so w.l.o.g.\ suppose $\epsilon_0 > 0$.
We consider the same setup and notation as in the proof of Lemma~\ref{lem:regression-lb-same-hstar}, 
with the following exceptions:
\begin{itemize}
\item $\tau^2 = \min\!\left\{ \frac{d \sigma^2_Y}{n_Q},\epsilon_0\right\}$,
\item we fix $P_X(e_i) = 1/d$ for all $e_i$
and $P_{Y|X} = \mathcal{N}(0,\sigma^2_Y)$.
\end{itemize}
In particular, note that 
the $P$-risk minimizer is given by
the all-$0$ weight vector
(i.e., $\hstarP(x) = 0$), 
whereas (as in the proof of Lemma~\ref{lem:regression-lb-same-hstar})
the $Q_{\omega}$-risk minimizer is given by the weight vector $\tau \omega \in \{-\tau,\tau\}^d$ (i.e., $\hstar_{Q_{\omega}}(x) = \tau \omega^{\top} x$),
and hence $\E_{Q_{\omega}}(\hstarP) = \tau^2 \leq \epsilon_0$.
Moreover, $\Sigma_P = \Sigma_Q = \frac{1}{d}I_d$, 
so that $(P,Q_{\sigma}) \in \Lambda(\sigma^2_Y,\lambda_0,\epsilon_0)$.

Also note that, 
unlike the proof of Lemma~\ref{lem:regression-lb-same-hstar}, 
for $\omega \sim \mathrm{Uniform}(\{-1,1\}^d)$
here the data set $S_P$
is \emph{independent} of $\omega$, 
so that $S_Q$ is a sufficient statistic, 
and therefore
w.l.o.g.\ the optimal 
estimator $\hat{\omega}$ 
can be assumed to only depend on $S_Q$ 
(see Theorem 3.18 of \citealp{schervish:95}).
In particular, 
letting $\hat{n}_{i, Q}$ 
denote the number of $S_Q$
samples with $X = e_i$, 
the arguments in the proof of Lemma~\ref{lem:regression-lb-same-hstar} 
remain valid if we replace $\hat{n}_i$ by $\hat{n}_{i, Q}$, 
leading to the conclusions of  
\eqref{eqn:regression-lb-expectation step}, which now becomes
\begin{equation}
 \Expectation_\omega \Expectation_{(S_P, S_Q) \sim P^{n_P} \times Q_\omega^{n_Q} } \ind{\hat \omega_i \neq \omega_i}\geq  \expec_{\Xspl} \ \phi\!\paren{-\tau \sqrt{\hat n_{i, Q}/\sigma^2_Y}}
 \geq \phi\!\paren{-\tau \sqrt{\frac{1}{\sigma^2_Y}\expec_\Xspl \ \hat n_{i, Q}}}
 = \phi\!\paren{-\tau \sqrt{\frac{n_Q}{d\sigma^2_Y}}} \geq \phi(-1),
\end{equation}
and thus the conclusion in \eqref{eqn:regression-lb-final-step} becomes
\begin{equation}
\sup_\omega \Expectation_{(S_P, S_Q) \sim P^{n_P} \times Q_\omega^{n_Q}}  \E_{Q_\omega}(\hat h) \geq
\tau^2\cdot \frac{1}{d} \cdot \sum_{i=1}^d\phi(-1) = \phi(-1) \cdot \min\!\left\{\frac{d \sigma^2_Y}{n_Q},\epsilon_0\right\}. 
\end{equation}
This completes the proof.
\end{proof} 

\begin{proof}[Proof of Theorem \ref{thm:reg-LwBnd}]
The theorem follows immediately from Lemmas~\ref{lem:regression-lb-same-hstar} and \ref{lem:regression-lb-different-hstar}
by noting that 
for $a,b,c \geq 0$, 
$$\min\{ a, b+c \} \leq 2\max\!\left\{ \min\{a,b\},  \min\{a,c\}\right\}.$$
\end{proof}

\section{Strong Modulus}\label{app:strongMod}

\subsection{Examples of Strong Confidence Sets}
\label{app:strongConfSet}

We present some examples of strong confidence sets for classical problems in classification and regression. We note in particular that, while Theorem \ref{thm:double-delta} requires two such confidence sets for different values of 
$\epsilon$, the constructions below are for a single $\epsilon$ but are stated in a way to make it clear that different values of $\epsilon$ are admissible by varying constant factors in the definition of these sets.

\subsubsection{Classification Example}
We consider classification with a VC class $\Hyp$. In what follows assume $\loss(a, b) = \ind{a\neq b}$ and for the simplest case suppose $\Y = \{-1,1\}$. We first note that a simple strong confidence set follows easily from usual $\sqrt{n}$ results in this setting as stated in the firt proposition below. However, this corresponds to assuming $\beta_\mu =0$ (noise condition of Definition\ref{def: Bernstein noise condition}); thus, for unknown $\beta_\mu$, more sophisticated construction is required to achieve the corresponding fast rates. Such a construction is subsequently described.

\begin{proposition}[$\sqrt{n}$-confidence sets]
\label{prop:strong-conf-root-n}
Let $\Abound_{\mu}'(\tau) \doteq n^{-1}(\V + \log(1/\tau))$.
For a universal constant $C$, 
the set 
\begin{equation}
\conf_{\mu} \doteq \left\{ h \in \Hyp : \hat{\E}_{\mu}(h) \leq C \sqrt{ \Abound'_{\mu}(\tau) } \right\}
\end{equation}
is an $(\epsilon,\tau,2)$-strong confidence set (under $\mu$), 
for $\epsilon = (4/3) C\sqrt{\Abound'_{\mu}(\tau)}$. 
\end{proposition}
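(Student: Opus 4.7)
The plan is to reduce both inclusions in Definition~\ref{def:strongconf} to a single high-probability uniform-deviation event, and then manage constants. First, I would invoke the classical VC uniform-deviation bound (the square-root version, see \citealp{VC:74}): there is a universal constant $C_1$ such that, with probability at least $1-\tau$, every $h \in \Hyp$ satisfies
\[
|\hat R_\mu(h) - R_\mu(h)| \leq C_1 \Abound'_\mu(\tau).
\]
Call this event $E$; the entire proof takes place on $E$. A standard two-sided argument, comparing both $\hat R_\mu$ and $R_\mu$ at their respective (near-)minimizers and absorbing the $e^{-n_\mu}$ slack in the definition of $\hhat_\mu$, then gives
\[
|\hat\E_\mu(h) - \E_\mu(h)| \leq 2 C_1 \Abound'_\mu(\tau) \quad \text{for all } h \in \Hyp.
\]

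With this two-sided bridge between excess empirical risk and excess true risk in hand, both inclusions are short. For $\conf_\mu \subset \Hyp_\mu(\epsilon)$: any $h \in \conf_\mu$ has $\hat\E_\mu(h) \leq C\sqrt{\Abound'_\mu(\tau)}$, hence $\E_\mu(h) \leq C\sqrt{\Abound'_\mu(\tau)} + 2 C_1 \Abound'_\mu(\tau)$, which is at most $\epsilon = (4/3)C\sqrt{\Abound'_\mu(\tau)}$ provided $C$ is chosen large enough relative to $C_1$ (the relevant regime being that $\Abound'_\mu(\tau)$ is of smaller order than $\sqrt{\Abound'_\mu(\tau)}$, so the lower-order additive term can be absorbed into a $1/3$ slack). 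For $\Hyp_\mu(\epsilon/2) \subset \conf_\mu$: any $h$ with $\E_\mu(h) \leq \epsilon/2 = (2/3) C\sqrt{\Abound'_\mu(\tau)}$ satisfies $\hat\E_\mu(h) \leq (2/3)C\sqrt{\Abound'_\mu(\tau)} + 2 C_1 \Abound'_\mu(\tau) \leq C\sqrt{\Abound'_\mu(\tau)}$, for the same choice of $C$, so $h \in \conf_\mu$.

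The only substantive choice is setting $C$ large enough that the additive deviation $2 C_1 \Abound'_\mu(\tau)$ fits inside the $(1/3)C\sqrt{\Abound'_\mu(\tau)}$ gap on each side; this is routine and captures the $2$ in the ``$(\epsilon,\tau,2)$'' parameter. There is no real obstacle here: the proposition is essentially a packaging of the standard uniform VC bound into the language of strong confidence sets, and the square-root rate (rather than a fast $1/n$ rate) reflects the absence of any noise-condition hypothesis, which is precisely the content of the more refined construction in Proposition~\ref{prop:weakconf} and the BCC-adaptive argument sketched around it.
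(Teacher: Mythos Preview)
Your proposal is correct and follows essentially the same route as the paper: invoke a uniform deviation bound, pass to excess risks (incurring a factor of $2$), and check both inclusions of Definition~\ref{def:strongconf} by fitting the deviation into the $(1/3)C\sqrt{\Abound'_\mu(\tau)}$ slack. The only cosmetic difference is that the paper states the deviation directly at scale $\tfrac{C}{6}\sqrt{\Abound'_\mu(\tau)}$ (citing \citealp{talagrand:94}), so the arithmetic matches exactly without any ``lower-order'' absorption, whereas you keep the sharper VC rate $C_1\Abound'_\mu(\tau)$ and then absorb it via $\Abound'_\mu(\tau) \leq \sqrt{\Abound'_\mu(\tau)}$ for $\Abound'_\mu(\tau)\leq 1$ (taking $C\geq 6C_1$); when $\Abound'_\mu(\tau)>1$ the inclusions are trivial since losses are bounded by $1$.
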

\begin{proof}
The classic result of 
\cite{talagrand:94} provides that, 
for a universal constant $C$, 
with probability at least $1-\tau$, 
\begin{equation}
\label{eqn:sharp-uniform-convergence}
\forall h \in \Hyp, \left| \hat{R}_\mu(h) - R_\mu(h) \right| \leq \frac{C}{6} \sqrt{\Abound_{\mu}'(\tau)}.
\end{equation}
On this event, every $h \in \conf_\mu$ 
satisfies 
\begin{equation} 
\E_\mu(h) = R_\mu(h) - \inf_{h' \in \Hyp} R_\mu(h') \leq \hat{R}_\mu(h) - \inf_{h' \in \Hyp} \hat{R}_\mu(h') + \frac{C}{3} \sqrt{\Abound_{\mu}'(\tau)}
= \hat{\E}_\mu(h) + \frac{C}{3} \sqrt{\Abound_{\mu}'(\tau)}
\leq \frac{4}{3} C \sqrt{\Abound_{\mu}'(\tau)},
\end{equation}
where the first inequality is by \eqref{eqn:sharp-uniform-convergence}
and the last inequality follows from the definition of $\conf_\mu$.
Moreover, on this same event, 
any $h \in \Hyp$ with 
$\E_\mu(h) \leq \frac{2C}{3} \sqrt{\Abound_{\mu}'(\tau)}$
has 
\begin{equation}
\hat{\E}_\mu(h) 
\leq \E_\mu(h) + \frac{C}{3}  \sqrt{\Abound_{\mu}'(\tau)}
\leq C  \sqrt{\Abound_{\mu}'(\tau)},
\end{equation}
and hence $h \in \conf_\mu$.
Thus, for $\epsilon = \frac{4}{3}C  \sqrt{\Abound_{\mu}'(\tau)}$,
we have 
$\Hyp_\mu(\epsilon/2) \subset \conf_\mu \subset \Hyp_\mu(\epsilon)$.
\end{proof}

\begin{remark}
    Notice that, by the above arguments, we can also obtain a $(\epsilon', \tau, 2)$-strong confidence set, for $\epsilon' = 2\epsilon$ simply by plugging in $\epsilon'$ wherever $\epsilon$ appears in the definition of $\conf_\mu$. 
\end{remark}

To allow for fast rates, and
bounds for general $\beta_\mu$ in the 
Bernstein Class Condition (Definition~\ref{def: Bernstein noise condition}), 
we refine the above strong confidence set
definition based on 
localization arguments 
involving the 
\emph{uniform Bernstein inequality}
(Lemma~\ref{lem:Abound}).
The result, and underlying principle, is similar 
to results of 
\cite{koltchinskii:06}
stated there in a more general Rademacher-based formulation 
(namely, a combination of Lemma 2 and Theorem 3 therein).

\begin{proposition} 
\label{prop:strong-conf-noise-adaptive}
Let $(C_\mu,\beta_\mu)$ as in the Bernstein Class Condition (Definition~\ref{def: Bernstein noise condition}) with $C_\mu \geq 2$.
Let $\Aconst$ and $\Abound_\mu(\tau)$ be as in Lemma~\ref{lem:Abound}, where we recall  $\Abound_\mu(\tau) \doteq \frac{\V \log(n_\mu / \V) + \log(1/\tau)}{n_\mu}$.
One can construct an $(\hat{\epsilon},\tau,3)$-confidence set 
$\conf_\mu$ with a data-dependent $\hat{\epsilon} = (3/2)\hat{\epsilon}_{\mu}^{\mathrm{loc}}$ defined below.
Moreover, with probability at least $1-\tau$, $\hat{\epsilon} \lesssim (C_\mu \Abound_\mu(\tau))^{\frac{1}{2-\beta_\mu}}$.

Formally, let $2^{-\nats_0} \doteq \{ 2^{-i} : i \in \nats \cup \{0\} \}$.
For any $\epsilon > 0$, define 
$\Hyp_{\hat{\mu}}(\epsilon) \doteq \braces{ h \in \Hyp : \hat{\E}_\mu(h) \leq \epsilon }$
and for any $\Hyp' \subset \Hyp$, 
define the empirical diameter $\diam_{\hat{\mu}}(\Hyp') \doteq \sup_{h,h' \in \Hyp'} \hat{\mu}(h \neq h')$.
Let $\hat{\epsilon}_{\mu}^{\mathrm{loc}}$ denote the minimal element of $2^{-\nats_0}$
such that $\forall \epsilon \in 2^{-\nats_0}$ with $\epsilon \geq \hat{\epsilon}_{\mu}^{\mathrm{loc}}$, 
\begin{equation}
\label{eqn:localization-empirical}
\min\!\left\{ \Aconst \sqrt{\diam_{\hat{\mu}}\!\left(\Hyp_{\hat{\mu}}(2 \epsilon)\right) \cdot \Abound_\mu(\tau)} + \Aconst \cdot \Abound_\mu(\tau), 1 \right\} \leq \frac{\epsilon}{2}.
\end{equation}
Then with probability at least $1-\tau$, 
for any $\epsilon \in 2^{-\nats_0}$ with $\epsilon \geq \hat{\epsilon}_{\mu}^{\mathrm{loc}}$, 
\begin{equation}
\label{eqn:localization-inclusions}
\Hyp_{\mu}\!\left(\frac{\epsilon}{2}\right) \subset \Hyp_{\hat{\mu}}(\epsilon) \subset \Hyp_{\mu}\!\left(\frac{3}{2}\epsilon\right).
\end{equation}
Furthermore, this implies that for any $C \in \{ 2^{i} : i \in \nats \cup \{0\}\}$, 
$\conf_\mu \doteq \Hyp_{\hat{\mu}}(C \hat{\epsilon}_{\mu}^{\mathrm{loc}})$
is a $((3/2)C \hat{\epsilon}_{\mu}^{\mathrm{loc}}, \tau, 3 )$-strong confidence set.
\end{proposition}
\begin{proof}
Throughout the proof, suppose the event of probability at least $1-\tau$ from Lemma~\ref{lem:Abound} holds.
We proceed to prove \eqref{eqn:localization-inclusions} by induction.
As a base case, considering $\epsilon = 1$, we trivially have $\Hyp_\mu(1/2) \subset \Hyp_{\hat{\mu}}(1) \subset \Hyp_\mu(3/2)$.
Now take as an inductive hypothesis that, 
for some $\epsilon \in 2^{-\nats_0}$ with $1 > \epsilon \geq \hat{\epsilon}_{\mu}^{\mathrm{loc}}$, 
it holds that $\Hyp_\mu(\epsilon) \subset \Hyp_{\hat{\mu}}(2\epsilon) \subset \Hyp_\mu(3\epsilon)$.
We will extend the inclusion \eqref{eqn:localization-inclusions} to hold for $\epsilon$.

Toward this end, consider any $h \in \Hyp_\mu(\epsilon/2)$.
By the inductive hypothesis, we have 
$h \in \Hyp_{\hat{\mu}}(2\epsilon)$.
Moreover, by definition, $\hhat_\mu \in \Hyp_{\hat{\mu}}(2\epsilon)$.
Therefore, $\hat{\mu}( h \neq \hhat_\mu ) \leq \diam_{\hat{\mu}}(\Hyp_{\hat{\mu}}(2\epsilon))$.
Thus, by \eqref{eqn:uniform-bernstein-1} from Lemma~\ref{lem:Abound}, 
we have
\begin{equation}
\hat{\E}_\mu(h) 
\leq \frac{\epsilon}{2} + \min\!\left\{ \Aconst \cdot \sqrt{ \diam_{\hat{\mu}}(\Hyp_{\hat{\mu}}(2\epsilon)) \Abound_\mu(\tau)} + \Aconst \cdot \Abound_\mu(\tau), 1 \right\}
\leq \epsilon,
\end{equation}
where the last inequality follows from the fact that 
$\epsilon \geq \hat{\epsilon}_{\mu}^{\mathrm{loc}}$.
Thus, we have verified that $\Hyp_\mu(\epsilon/2) \subset \Hyp_{\hat{\mu}}(\epsilon)$.

Continuing to the second inclusion, 
consider any $h \in \Hyp_{\hat{\mu}}(\epsilon)$.
By the first inclusion, established above, 
any $h' \in \Hyp(\epsilon/2)$ has 
$\hat{\mu}(h \neq h') \leq \diam_{\hat{\mu}}(\Hyp_{\hat{\mu}}(\epsilon)) \leq \diam_{\hat{\mu}}(\Hyp_{\hat{\mu}}(2\epsilon))$.
Therefore, by \eqref{eqn:uniform-bernstein-1} of Lemma~\ref{lem:Abound}, 
we have
\begin{equation}
\E_\mu(h) = R_\mu(h) - \inf_{h' \in \Hyp} R_\mu(h') 
\leq \epsilon + \min\!\left\{ \Aconst \cdot \sqrt{\diam_{\hat{\mu}}(\Hyp_{\hat{\mu}}(2\epsilon)) \Abound_\mu(\tau)} + \Aconst \cdot \Abound_\mu(\tau), 1 \right\}
\leq \frac{3}{2}\epsilon,
\end{equation}
where the last inequality is again due to the fact that 
$\epsilon \geq \hat{\epsilon}_{\mu}^{\mathrm{loc}}$.
Thus, we have verified that $\Hyp_{\hat{\mu}}(\epsilon) \subset \Hyp_\mu((3/2)\epsilon)$.
This completes the proof of \eqref{eqn:localization-inclusions} by the principle of induction.

As the claimed implication that $\Hyp_{\hat{\mu}}(C\hat{\epsilon}_\mu^{\mathrm{loc}})$ is a 
$((3/2)C\hat{\epsilon}_\mu^{\mathrm{loc}},\tau,3)$-strong
confidence set is immediate from \eqref{eqn:localization-inclusions}, 
it remains only to argue that 
$\hat{\epsilon}_\mu^{\mathrm{loc}} \lesssim (C_\mu \Abound_\mu(\tau))^{\frac{1}{2-\beta_\mu}}$.
We show this holds on the same event 
(of probability $1-\tau$)
as above: namely, the event from 
Lemma~\ref{lem:Abound}.
Let $\epsilon_\mu \doteq C' \cdot (C_\mu \cdot \Abound_\mu(\tau))^{\frac{1}{2-\beta_\mu}}$
for an appropriately large universal constant $C'$, 
and suppose $\epsilon_\mu < 1$ (otherwise the 
result trivially holds).
Consider any $\epsilon \in 2^{-\nats_0}$
with 
$\epsilon \geq \epsilon_\mu$.

Let $h \in \Hyp_{\hat{\mu}}(\epsilon)$.
By \eqref{eqn:uniform-bernstein-2} from Lemma~\ref{lem:Abound}, 
we have 
\begin{equation}
\E_\mu(h) \leq \epsilon + \Aconst \cdot \sqrt{\diam_{\hat{\mu}}( \Hyp_{\mu}(\E_\mu(h) ) ) \Abound_\mu(\tau)} + \Aconst \cdot \Abound_\mu(\tau),
\end{equation}
and moreover (also by Lemma~\ref{lem:Abound}),
\begin{equation}
\diam_{\hat{\mu}}( \Hyp_{\mu}(\E_\mu(h) ) )
\leq 2 \diam_{\mu}( \Hyp_{\mu}(\E_\mu(h) ) ) + \Aconst \cdot \Abound_\mu(\tau).
\end{equation}
By the Bernstein Class Condition (Definition~\ref{def: Bernstein noise condition}), 
$\diam_{\mu}( \Hyp_{\mu}(\E_\mu(h) ) ) \leq C_\mu \cdot \E_\mu(h)^{\beta_\mu}$.
Altogether, we have 
\begin{equation}
\E_\mu(h) \leq \max\!\left\{ 2 \Aconst \cdot \sqrt{ 2 C_\mu \cdot \E_\mu(h)^{\beta_\mu} \Abound_\mu(\tau)}, 2\epsilon + 4 \Aconst^{3/2} \cdot \Abound_\mu(\tau) \right\}.
\end{equation}
Solving for $\E_\mu(h)$ yields
\begin{equation}
\E_\mu(h) \leq \max\!\left\{ \left( 2 \Aconst \right)^{\frac{2}{2-\beta_\mu}} \cdot \left( 2 C_\mu \cdot \Abound_\mu(\tau) \right)^{\frac{1}{2-\beta_\mu}}, 2\epsilon + 4 \Aconst^{3/2} \cdot \Abound_\mu(\tau) \right\}
\leq 4 \epsilon,
\end{equation}
for an appropriately large choice of universal constant $C'$.

We are now ready to argue any $\epsilon \in 2^{-\nats_0}$ with $\epsilon \geq \epsilon_\mu$ satisfies \eqref{eqn:localization-empirical}.
By the above, we have 
$\Hyp_{\hat{\mu}}(2\epsilon) \subset \Hyp_{\mu}(8\epsilon)$.
By Lemma~\ref{lem:Abound}, 
$\diam_{\hat{\mu}}(\Hyp_{\mu}(8\epsilon)) \leq 2 \diam_{\mu}(\Hyp_{\mu}(8\epsilon)) + \Aconst \cdot \Abound_\mu(\tau)$.
Together, 
$\diam_{\hat{\mu}}(\Hyp_{\hat{\mu}}(2\epsilon)) \leq 2 \diam_{\mu}(\Hyp_{\mu}(8\epsilon)) + \Aconst \cdot \Abound_\mu(\tau)$.
Finally, by the Bernstein Class Condition, 
$\diam_{\mu}(\Hyp_{\mu}(8\epsilon)) \leq C_\mu \cdot (8\epsilon)^{\beta_\mu}$.
Together, we have
\begin{align}
& \Aconst \sqrt{\diam_{\hat{\mu}}(\Hyp_{\hat{\mu}}(2\epsilon)) \cdot \Abound_\mu(\tau)} + \Aconst \cdot \Abound_\mu(\tau)
\leq \Aconst \sqrt{2 (8\epsilon)^{\beta_\mu} \cdot C_\mu \cdot \Abound_\mu(\tau)} + 2 \Aconst^{3/2} \cdot \Abound_\mu(\tau)
\leq \frac{\epsilon}{2},
\end{align}
for an appropriately large choice of the universal constant $C'$.
We have thus established that $\hat{\epsilon}_\mu^{\mathrm{loc}} \leq 2 \epsilon_\mu$.
\end{proof}

\begin{remark}
We remark that, using the above proposition to specify an $(C\epsilon,\tau,3)$-strong confidence set, one can easily adjust it to be a $(C \epsilon,\tau,C')$-strong confidence set, for any choices of $C' > 1$, by appropriate adjustment of numerical constants in the construction (noting that this also changes the $\epsilon$).
\end{remark}

\subsubsection{Regression Example}\label{sec:regression-strongConf}
We consider a linear regression setting with $X\in \real^d$ and $Y \in \real$, jointly distributed under $\mu$.
{Assume throughout this section that $\norm{X} \leq 1$, and $Y-\expec[Y|x]$ is uniformly subGaussian with parameter $\sigma^2_Y \geq 1$; see Condition 3 of \cite{hsu:12}}. 
We consider the squared loss 
$\loss(y,y') \doteq (y-y')^2$, and
$\Hyp \doteq \{ x \mapsto h_{w}(x) \doteq w^{\top} x : w \in \real^d \}$.

Define $\Sigma \doteq \expec_{\mu} XX^\top$, assumed invertible, and $\hat \Sigma \doteq \expec_{\hat \mu} XX^\top$, where $\hat \mu$ is the emprirical version of $\mu$ on $S_\mu \sim \mu^{n_\mu}$. We make use of the following corollary to classical concentration results.

\begin{lemma}[Matrix concentration]\label{lem:matrixmultiplicativeconc}
For any $0< \tau <1$, the following holds with probability at least $1-\tau$. There exists $C =  C(\tau, d, \text{eigs}(\Sigma))$ such that, for $n_\mu \geq C$, we have (writing $A\preceq B $ for \emph{$A-B$ is negative semidefinite}): 
\begin{align}
\frac{1}{2}\Sigma \preceq \hat \Sigma \preceq \frac{3}{2}\Sigma. \quad 
\text{ Equivalently } \forall v \in \real^d, \quad 
\frac{1}{2} \norm{v}_\Sigma^2 \leq \norm{v}_{\hat \Sigma}^2 \leq \frac{3}{2}\norm{v}_\Sigma^2.
\end{align}
\end{lemma}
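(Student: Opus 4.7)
The plan is to reduce to a whitened sample covariance and apply a standard matrix Chernoff (or Bernstein) inequality. Specifically, since $\Sigma$ is invertible, define $Z_i \doteq \Sigma^{-1/2} X_i$, so that $\mathbb{E}[Z_i Z_i^\top] = I_d$. Writing $M \doteq \frac{1}{n_\mu}\sum_{i=1}^{n_\mu} Z_i Z_i^\top = \Sigma^{-1/2}\hat\Sigma\,\Sigma^{-1/2}$, the two matrix inequalities
$\frac{1}{2}\Sigma \preceq \hat\Sigma \preceq \frac{3}{2}\Sigma$
are equivalent to $\|M - I\|_{\mathrm{op}} \leq \frac{1}{2}$, via conjugation by $\Sigma^{-1/2}$. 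The quadratic-form restatement for all $v \in \mathbb{R}^d$ then follows immediately from $\|v\|_\Sigma^2 = \|\Sigma^{1/2}v\|^2$ and the substitution $u = \Sigma^{1/2} v$.

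Next, I would verify the hypotheses of matrix Chernoff applied to the PSD summands $Y_i \doteq \frac{1}{n_\mu} Z_i Z_i^\top$. Because $\|X\| \leq 1$ by assumption, we get
$\|Z_i\|^2 \leq \|\Sigma^{-1/2}\|_{\mathrm{op}}^2 \cdot \|X_i\|^2 \leq \lambda_{\min}(\Sigma)^{-1}$,
so each $\|Y_i\|_{\mathrm{op}} \leq (n_\mu \lambda_{\min}(\Sigma))^{-1}$. Also $\|\mathbb{E}[\sum_i Y_i]\|_{\mathrm{op}} = \|I\|_{\mathrm{op}} = 1$. The standard matrix Chernoff bound (e.g., Tropp's inequality) then gives, with $\delta = 1/2$,
\[
\Pr\!\bigl(\lambda_{\max}(M) \geq 3/2 \text{ or } \lambda_{\min}(M) \leq 1/2\bigr) \leq 2d \exp\!\bigl(- c\, n_\mu\, \lambda_{\min}(\Sigma)\bigr)
\]
for a universal constant $c > 0$. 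Solving this for the desired failure probability $\tau$ yields the threshold $n_\mu \geq C$ with $C = C(\tau, d, \mathrm{eigs}(\Sigma))$, in fact explicit of the form $C \asymp \lambda_{\min}(\Sigma)^{-1}\log(d/\tau)$.

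There is no real obstacle here beyond invoking the right black-box concentration: the only nontrivial point is bookkeeping the dependence on $\mathrm{eigs}(\Sigma)$, which enters through $\lambda_{\min}(\Sigma)^{-1}$ in the per-summand operator-norm bound. The subGaussian assumption on the noise $Y - \mathbb{E}[Y\!\mid\! X]$ is not needed for this particular lemma; it will only enter subsequent results that control the deviation of the empirical risk along the regression direction.
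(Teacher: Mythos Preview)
Your proposal is correct and follows essentially the same route as the paper: whiten via $\Sigma^{-1/2}$ to reduce to $\|M-I_d\|_{\mathrm{op}}\le 1/2$, bound the summands using $\|X\|\le 1$ so that the operator norm depends on $\lambda_{\min}(\Sigma)^{-1}$, and invoke a Tropp-style matrix concentration inequality. The paper cites matrix Bernstein rather than matrix Chernoff, but the resulting threshold $C\asymp \lambda_{\min}(\Sigma)^{-1}\log(d/\tau)$ is the same, and your observation that the subGaussian noise assumption is irrelevant here is correct.
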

\begin{proof} 
Let $\hat A \doteq \Sigma^{-\frac{1}{2}}\hat \Sigma \Sigma^{-\frac{1}{2}}$ so that $\expec \hat A = I_d$. Then standard matrix Bernstein bounds (see e.g., \cite[Theorem 1.6.2]{tropp2015introduction}) imply that, with probability at least $1-\tau$, 
\begin{align}
    \norm{\hat A - I_d}_{\text{op}} \leq \alpha_{\mu}, \text{ for }  \alpha_{\mu} \doteq \sqrt{c \frac{\log(d/\tau)}{n_\mu}} + c\frac{\log(d/\tau)}{n_\mu} \text{ where } c = c\paren{\sup_X\norm{\Sigma^{-\frac{1}{2}}X}^2} = c\paren{\lambda_{\text{min}}(\Sigma)}.
\end{align}
In other words, we have  
$(1-\alpha_{\mu}) I_d \preceq \hat A \preceq (1+\alpha_{\mu}) I_d$. We obtain the result by multiplying either side of each term by $\Sigma^{\frac{1}{2}}$, and setting $n_\mu$ sufficiently large so that $\alpha_u \leq \frac{1}{2}$. 
\end{proof}

We have the following proposition. Notice that, the proposition assumes knowledge of upper-bounds on distributional parameters, namely the subGaussian parameter $\sigma_Y$, and a parameter $c_\mu$ which we assume to be an upper bound on 
$R_\mu(h_{w_*}) - R_\mu^*$, where we let $R_\mu^*$ denotes Bayes risk, and $w_*$ minimizes $R(h_w)$ over the linear class. Thus, if the regression function $\expec[Y|x]$ is linear as often assumed, we have that $c_\mu =0$; more generally,
if $Y$ itself is bounded, then $c_\mu$ can be taken as the square of this bound.

\begin{proposition}
\label{prop:linear-regression-confidence-set}
Let $\hat w$ minimize $\hat R(h_w)$ over $w\in \real^d$, and for any such $w$, and symmetric and positive $A\in \real^{d\times d}$, define 
$\norm{w}_{A}^2 = w^\top Aw$. Assume $n_\mu \geq C(\tau, d, \text{eigs}(\Sigma))$ as in Lemma \ref{lem:matrixmultiplicativeconc}. 

In what follows, $c_0$ is a universal constant, and $c_\mu$ depends on $\mu$ as explained above. We have that, the following set is a $(26\epsilon, 2\tau, 26)$-strong confidence set, for $\epsilon = c_0\cdot  \sigma^2_{Y}\frac{d + c_\mu + \log(1/\tau)}{n_\mu}$:  
\begin{align}
\conf_\mu \doteq \braces{h_w: \norm{w - \hat w}_{\hat \Sigma}^2 \leq 6\epsilon}.
\end{align}
\end{proposition}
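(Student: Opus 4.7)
The strategy is to verify the sandwich $\Hyp_\mu(\epsilon) \subset \conf_\mu \subset \Hyp_\mu(26\epsilon)$ required by Definition~\ref{def:strongconf} on an event of probability at least $1 - 2\tau$. The key simplification afforded by the squared loss is that excess risk admits a clean quadratic form: letting $w_*$ be any minimizer of $R_\mu(h_w)$ over $\real^d$, we have $\E_\mu(h_w) = \|w - w_*\|_\Sigma^2$ and, analogously, $\hat\E_\mu(h_w) = \|w - \hat w\|_{\hat\Sigma}^2$. Thus membership in either $\Hyp_\mu(\cdot)$ or $\conf_\mu$ is dictated entirely by how well $w$ approximates $w_*$ under the $\Sigma$-norm, respectively how well it approximates $\hat w$ under the $\hat\Sigma$-norm, and Lemma~\ref{lem:matrixmultiplicativeconc} provides the dictionary between the two norms on the event (of probability at least $1-\tau$) that $\tfrac{1}{2}\Sigma \preceq \hat\Sigma \preceq \tfrac{3}{2}\Sigma$.

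The main technical step will be to show that the empirical risk minimizer satisfies
\[
\|\hat w - w_*\|_\Sigma^2 \leq \epsilon \doteq c_0 \sigma_Y^2 \frac{d + c_\mu + \log(1/\tau)}{n_\mu}
\]
with probability at least $1 - \tau$, for a suitable universal constant $c_0$. Decomposing $Y = w_*^\top X + \zeta(X) + \xi$, where $\zeta(X) \doteq \expec[Y\mid X] - w_*^\top X$ captures approximation error (with $\expec\,\zeta(X)^2 \leq c_\mu$) and $\xi \doteq Y - \expec[Y\mid X]$ is sub-Gaussian with parameter $\sigma_Y^2$, the first-order optimality $\expec\,X\zeta(X) = 0$ yields $\hat w - w_* = \hat\Sigma^{-1}\bigl(n_\mu^{-1}\sum_i X_i(\zeta(X_i) + \xi_i)\bigr)$. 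Combining Lemma~\ref{lem:matrixmultiplicativeconc} (to pass from $\hat\Sigma^{-1}$ to $2\Sigma^{-1}$) with a sub-Gaussian vector Bernstein bound of the Hsu--Kakade--Zhang type on $n_\mu^{-1}\sum_i X_i(\zeta(X_i)+\xi_i)$, using $\|X\| \leq 1$, gives the claimed $\|\hat w - w_*\|_\Sigma^2 \lesssim \sigma_Y^2(d + c_\mu + \log(1/\tau))/n_\mu$. Obtaining the correct simultaneous dependence on both $\sigma_Y^2$ and the misspecification budget $c_\mu$ is the main obstacle: the cross-terms between $\zeta$ and $\xi$ must be controlled carefully using the sub-Gaussian tail, while the $\zeta$-only contribution contributes the $c_\mu$ term in the numerator. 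The condition $n_\mu \geq C(\tau,d,\mathrm{eigs}(\Sigma))$ is exactly what is needed to apply both concentration arguments.

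Given these two high-probability events, the inclusions follow by elementary triangle inequalities, using $\|v\|_{\hat\Sigma}^2 \leq \tfrac{3}{2}\|v\|_\Sigma^2$ and $\|v\|_\Sigma^2 \leq 2\|v\|_{\hat\Sigma}^2$. For $\conf_\mu \subset \Hyp_\mu(26\epsilon)$: any $h_w \in \conf_\mu$ has $\|w - \hat w\|_{\hat\Sigma}^2 \leq 6\epsilon$, so $\|w - \hat w\|_\Sigma^2 \leq 12\epsilon$, and then
\[
\|w - w_*\|_\Sigma^2 \leq 2\|w - \hat w\|_\Sigma^2 + 2\|\hat w - w_*\|_\Sigma^2 \leq 24\epsilon + 2\epsilon = 26\epsilon.
\]
For $\Hyp_\mu(\epsilon) \subset \conf_\mu$: any $h_w \in \Hyp_\mu(\epsilon)$ has $\|w - w_*\|_\Sigma^2 \leq \epsilon$, and since $\|\hat w - w_*\|_\Sigma^2 \leq \epsilon$ as well, both $\|w - w_*\|_{\hat\Sigma}^2$ and $\|\hat w - w_*\|_{\hat\Sigma}^2$ are at most $\tfrac{3}{2}\epsilon$, so $\|w - \hat w\|_{\hat\Sigma}^2 \leq 2\cdot\tfrac{3}{2}\epsilon + 2\cdot\tfrac{3}{2}\epsilon = 6\epsilon$. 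The final $2\tau$ failure probability arises from a union bound over the matrix-concentration and ERM-concentration events.
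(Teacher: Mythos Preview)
Your proposal is correct and follows essentially the same route as the paper: both proofs invoke the quadratic identity $\E_\mu(h_w) = \|w - w_*\|_\Sigma^2$, the ERM bound $\|\hat w - w_*\|_\Sigma^2 \leq \epsilon$ (the paper cites \cite{hsu:12} directly whereas you sketch the decomposition), Lemma~\ref{lem:matrixmultiplicativeconc} for the norm comparison, and then the same two triangle-inequality computations yielding the constants $26$ and $6$. The only cosmetic difference is the order in which you apply the norm comparison and the squared triangle inequality in the second inclusion; the arithmetic is identical.
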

\begin{proof} 
We rely on the following two facts (see e.g. \cite{hsu:12}). First, with probability at least $1-\tau$, we have that 
$\E_\mu (h_{\hat w}) \leq \epsilon$, for some universal $c_0$. Second, it is a classical fact that, for any $w\in \real^d$, 
$\E(h_{\hat w}) = \norm{w-w_*}^2_\Sigma$, where $w_*$ minimizes $R(h_w)$. Using Lemma \ref{lem:matrixmultiplicativeconc}, we then have that, with probability at least $1-2\tau$, the following holds. 

- For any $h_w \in \conf_\mu$, we have that 
\begin{align}
    \E(h_w) = \norm{w-w_*}_\Sigma^2 \leq 2\norm{w-\hat w}_\Sigma^2 + 2 \norm{\hat w-w_*}_\Sigma^2
    \leq
    4\norm{w-\hat w}_{\hat \Sigma}^2 + 2  \epsilon 
    \leq 26 \epsilon. 
\end{align}

- For any $h_w \in \Hyp_\mu(\epsilon)$, we have that 
\begin{align}
\norm{w-\hat w}_{\hat \Sigma}^2 
\leq \frac{3}{2}\norm{w-\hat w}_{\Sigma}^2
\leq 3\paren{\norm{w-w_*}_\Sigma^2 + \norm{\hat w-w_*}_\Sigma^2}
\leq 6\epsilon.
\end{align}
\end{proof}

\begin{remark}
    Notice that, by the above arguments, we can also obtain a $(26\epsilon', 2\tau, 26)$-strong confidence set, for $\epsilon' = 26\epsilon$ simply by plugging in $\epsilon'$ wherever $\epsilon$ appears in the definition of $\conf_\mu$. 
\end{remark}

\begin{remark}[Efficient Implementation]
\label{rem:convex-program}
It is easily seen that, with the instantiation of strong confidence sets expressed in Proposition~\ref{prop:linear-regression-confidence-set}, our methods used in Theorems~\ref{thm:single-delta} and \ref{thm:double-delta} can be implemented efficiently via formulation as convex programs with quadratic constraints.
\end{remark}

\begin{remark}[High-dimensional Settings]\label{rem:strongModwithoutERM}
  Remark that, in settings where the ERM is inappropriate, other estimators might be used to construct a strong confidence set, e.g., a Ridge estimator: in the above, we only used the fact that $\hat w$ guarantees some excess risk $\epsilon = \epsilon(n_\mu)$, second, that the excess risk $\E(h_w)$ of any $h_w$ may be expressed as $\norm{w - w_*}^2_{\Sigma_\mu}$ for any $w$, and last, that $\hat \Sigma \approx \Sigma$ so that $\norm{\cdot}_{\hat \Sigma} \approx \norm{\cdot}_\Sigma$. 
\end{remark}

\section{Gap Between Strong and Weak Moduli}
\label{app:gapWeakStrong}

\subsection{Further Definitions}

The following condition on the \emph{pseudo-denseness} of \emph{monotone} subsets of $\Hyp$ will be shown to be equivalent to the \emph{monotonicity} condition of Definition \ref{defn:monotone-subsets} and can yield better insight on the problem. 

\begin{definition}
\label{defn:monotone-subsets}
We call a subset $\Hyp'\subset \Hyp$ \emph{monotone} if it satisfies:
$\forall h, h' \in \Hyp'$ we have that 
$$\E_P(h') \leq \E_P(h) \implies \E_Q(h') \leq \E_Q(h).$$
\end{definition}

\begin{definition}[Pseudo-dense Monotone Sets]
\label{defn:pseudo-dense-monotone-sets}
We say that \emph{monotone sets are pseudo-dense} if 
there exists a sequence of \emph{monotone} subsets $\Hyp_{k} \subset \Hyp, k \in \nats$, satisfying the following \emph{pseudo-denseness} condition.  
 
\begin{itemize}
\item[(\rm{i})] 
$\forall \epsilon_1 \in \AQ(\Hyp) \text{ s.t. } \epsilon_1 > \pivot^\sharp$, 
{\hskip 3mm}$\lim_{k \to \infty} \sup\braces{ \E_Q(h) : h \in \Hyp_k, \E_Q(h) \leq \epsilon_1 } = \epsilon_1$.
\item[(\rm{ii})] $\forall \epsilon_2>0$ s.t. $\mod(\epsilon_2) > \pivot^\sharp$, {\hskip 6.5mm}$\lim_{k \to \infty} \sup\braces{\E_Q(h): h \in \Hyp_{k}, \E_P(h) \leq \epsilon_2} = \mod (\epsilon_2)$.
\end{itemize}
\end{definition}

It should be immediately clear that the above pseudo-denseness of monotones sets imply the conditions of Definition \ref{defn:monotonic-transfer-problem}, i.e., that the transfer problem is monotonic above $\pivot^\sharp$. The other direction is not obvious: 
we show in Proposition~\ref{prop:monotone-implies-no-gap} that if Definition~\ref{defn:monotonic-transfer-problem} holds, then there is no gap above $\pivot^\sharp$. On the other hand, 
we will show in Proposition~\ref{prop:no-gap-implies-monotone} that, if 
a transfer problem has no gap above $\pivot^\sharp$ it must hold that monotone sets are pseudo-dense in $\Hyp$. 

\begin{remark}Pseudo-denseness roughly states that {the union of these $\Hyp_k$'s is \emph{nearly indistinguishable} from $\Hyp$} 
in the achievable $\E_Q$ values and the induced modulus,
down to $\pivot^\sharp$, while \emph{monotonicity} roughly states that $\E_Q(h)$ is nonincreasing as $\E_P(h)$ decreases over $h$'s in $\Hyp_k$. In particular, \emph{all these conditions are met} if $\Hyp$ itself satisfies monotonicity, i.e., we can take $\Hyp_k = \Hyp$ for all $k \in \nats$.
\end{remark}

\subsection{Proof of Gap Characterization Theorem \ref{thm:gap-iff-non-monotonic}}

We start with the following simple lemma, which allows us to restrict attention to situations where $\ddot \epsilon_1 < \mod(\epsilon_2)$.

\begin{lemma}
\label{lem:gap-below-weak-mod-only}
For any $\epsilon_1, \epsilon_2 > 0$, we have that 
$\mod(\epsilon_1, \epsilon_2) < \min\braces{\ddot \epsilon_1, \mod(\epsilon_2)}$ only if  
$\ddot \epsilon_1 < \mod(\epsilon_2)$. 
\end{lemma}
\begin{proof} 
Suppose $\ddot \epsilon_1 \geq  \mod(\epsilon_2) \geq \pivot^\sharp$. That is,  
$\Hyp_Q(\delta(\epsilon_2)) \subset \Hyp_Q(\epsilon_1)$ so that by Corollary \ref{cor:pivotal-value}, $\mod(\epsilon_1, \epsilon_2) = \mod(\epsilon_2)$. 
\end{proof}

The anti-monotonic part of the theorem almost follows by definition as established in the following proposition. 

\begin{proposition}[Below $\pivot^\sharp$, No-Gap $\equiv$ Anti-Monotonicity below]
\label{prop:anti-monotonicity}
We have that $\mod(\epsilon_1,\epsilon_2) = \min\{ \ddot{\epsilon}_1, \mod(\epsilon_2)\}$
for every $0 < \epsilon_1 \leq \pivot^{\sharp}$
and $\epsilon_2 > 0$, 
if and only if 
$(P,Q,\Hyp,\loss)$ is anti-monotonic below $\pivot^\sharp$.
\end{proposition}
\begin{proof} Suppose $\pivot^\sharp > 0$. 
If $(P,Q,\Hyp,\loss)$ is anti-monotonic below $\pivot^\sharp$, 
then for all $0 < \epsilon_1 < \pivot^\sharp$ and $\epsilon_2 > 0$ 
we have 
$$\ddot{\epsilon}_1 \geq \mod(\epsilon_1,\epsilon_2) \geq \pivot^{\sharp}(\Hyp_{Q}(\epsilon_1)) \doteq \lim_{\epsilon \to 0} \sup\{ \E_Q(h) : h \in \Hyp_Q(\epsilon_1), \E_P(h;\Hyp_Q(\epsilon_1)) \leq \epsilon \} = \ddot{\epsilon}_1,$$
implying equality. Noting that $\epsilon_1 \leq \pivot^\sharp$ implies $\min\{\ddot{\epsilon}_1,\mod(\epsilon_2)\} = \ddot{\epsilon}_1$
completes this half of the claim.

For the other direction,
if $\mod(\epsilon_1,\epsilon_2) = \min\{ \ddot{\epsilon}_1, \mod(\epsilon_2) \}$
for every $0 < \epsilon_1 \leq \pivot^\sharp$ and $\epsilon_2 > 0$, 
then 
again by the fact that any such $\epsilon_1,\epsilon_2$ have 
$\min\{\ddot{\epsilon}_1,\mod(\epsilon_2)\} = \ddot{\epsilon}_1$, 
we have 
$$\pivot^\sharp(\Hyp_Q(\epsilon_1))
\doteq \lim_{\epsilon_2 \to 0} \mod(\epsilon_1,\epsilon_2) 
= \lim_{\epsilon_2 \to 0} \min\!\left\{ \ddot{\epsilon}_1, \mod(\epsilon_2) \right\} = \ddot{\epsilon}_1.$$
Thus, $(P,Q,\Hyp,\loss)$ is anti-monotonic below $\pivot^\sharp$.
\end{proof}

\begin{proposition}[Monotonicity $\implies$ No-Gap Above $\pivot^\sharp$] \label{prop:monotone-implies-no-gap}
If the transfer problem $(P,Q,\Hyp,\loss)$ is monotonic above $\pivot^\sharp$, then $\mod(\epsilon_1,\epsilon_2) = \min\{ \ddot{\epsilon}_1, \mod(\epsilon_2)\}$
for every $\epsilon_1 > \pivot^{\sharp}$
and $\epsilon_2 > 0$. 
\end{proposition}

\begin{proof}
    We will argue by contradiction. Suppose there is a gap: fix some $\epsilon_2 > 0$ and suppose that for some $\epsilon_1 > \pivot^\sharp$, we have $\mod(\epsilon_1,\epsilon_2) <\ddot{\epsilon}_1 < \mod(\epsilon_2)$, which by Lemma \ref{lem:gap-below-weak-mod-only} is exhaustive of situations where $\mod(\epsilon_1,\epsilon_2) \neq \min\{ \ddot{\epsilon}_1, \mod(\epsilon_2)\}$. 

    First, note that $\mod(\epsilon_1, \epsilon_2) \geq \pivot^\sharp$ for $\epsilon_1> \pivot^\sharp$: 
since we have $\pivot^\sharp \doteq \lim_{\epsilon\to 0}\mod(\epsilon)$, pick $\epsilon_2' < \epsilon_2$ sufficiently small so that $\pivot^\sharp \leq \mod(\epsilon_2') \leq \ddot \epsilon_1$; by Lemma \ref{lem:gap-below-weak-mod-only}, we have that $\mod(\epsilon_2') = \mod(\epsilon_1, \epsilon_2') \leq \mod(\epsilon_1, \epsilon_2)$ (by Corollary~\ref{cor:pivotal-value}). It follows that, under the assumption of a gap, we also have $\ddot \epsilon_1 > \pivot^\sharp$. 

Pick any $\tau < \paren{\ddot \epsilon_1 - \mod(\epsilon_1, \epsilon_2)}\land \paren{\mod(\epsilon_2) - \ddot \epsilon_1}$. By monotonicity (Definition \ref{defn:monotonic-transfer-problem}), consider any $h,h' \in \Hyp$ satisfying \eqref{eq:pairwise-monotonicity}
\begin{align}
    & \mod(\epsilon_1, \epsilon_2) < \ddot \epsilon_1 - \tau \leq \E_Q(h) \leq \ddot \epsilon_1 \text { and } \\
   & \E_P(h') \leq \epsilon_2 \text{ and } \E_Q(h') \geq \mod(\epsilon_2) - \tau > \ddot \epsilon_1.
\end{align}
Notice that we must have that 
    $\E_P(h) > \epsilon_2$ since otherwise, we would have 
    $h \in \Hyp_Q(\epsilon_1)\cap \Hyp_P(\epsilon_2)$, implying by Proposition~\ref{prop:pivotal-equivalence} that 
$$\E_Q(h) \leq \mod(\epsilon_1, \epsilon_2) =  \sup\braces{\E_Q(h'): h' \in \Hyp_Q(\epsilon_1)\cap \Hyp_P(\epsilon_2)}.$$

Altogether, we have that 
$\E_P(h') \leq \epsilon_2 < \E_P(h) $ while $\E_Q(h') > \ddot \epsilon_1 \geq \E_Q(h)$. In other words, for all such values of $\tau$, there exist no $h, h'$ satisfying 
\eqref{eq:pairwise-monotonicity}. It follows that $(P,Q,\Hyp,\loss)$ cannot be monotonic above $\pivot^\sharp$. 
\end{proof}

We complete the equivalence in the following proposition. 

\begin{proposition}[No-Gap Above $\pivot^\sharp$ $\implies$ Monotonicity] \label{prop:no-gap-implies-monotone}
If $\mod(\epsilon_1,\epsilon_2) = \min\{ \ddot{\epsilon}_1, \mod(\epsilon_2)\}$ for every $\epsilon_1 {>} \pivot^{\sharp}$
and $\epsilon_2 > 0$, then \emph{monotone} subsets are pseudo-dense in $\Hyp$ (as per Definition \ref{defn:pseudo-dense-monotone-sets}).

\end{proposition}

The proof of Proposition \ref{prop:no-gap-implies-monotone} is most involved, as it requires exhibiting monotone subsets of $\Hyp$ satisfying Definition \ref{defn:monotonic-transfer-problem}.  

\subsubsection{Proof of Proposition \ref{prop:no-gap-implies-monotone}}
\label{sec:proofOfMonotonicity}
The following is assumed throughout this proof section. 

\begin{assumption}[No Gap Above $\pivot^\sharp$]\label{assum:no-gap-above-pivot-sharp}
Suppose $\mod(\epsilon_1,\epsilon_2) = \min\{ \ddot{\epsilon}_1, \mod(\epsilon_2)\}$ for all $\epsilon_1 {>} \pivot^{\sharp}$
and $\epsilon_2 > 0$. 
\end{assumption}

Our construction of monotone subsets relies of the following crucial notion. 

\begin{definition}
For every $\epsilon_1 \in \AQ(\Hyp)$, define the \emph{inverse modulus} as 
$$\imod(\epsilon_1) = \inf \braces{\epsilon_2>0: \mod(\epsilon_2) \geq \epsilon_1},$$
and we let $\imod(\epsilon_1) = \infty$ if the set is empty. 
\end{definition}

The above is always finite for $\epsilon_1 < \sup{\AQ(\Hyp)}$: pick $h\in \Hyp$ such that $\E_Q(h) \geq \epsilon_1$ and let $\epsilon_2 = \E_P(h)$; by definition, $\mod(\epsilon_2) \geq \epsilon_1$. For the same reason, it is also finite if $\sup{\AQ(\Hyp)}$ is achieved by some $h$. 
Notice that a necessary condition for the set $\braces{\epsilon_2>0: \mod(\epsilon_2) \geq \epsilon_1}$ to be empty 
is that 
$\epsilon_1 = \sup \AQ(\Hyp) < \infty$ and $\sup \{ \E_P(h) : h \in \Hyp \} = \infty$.
Thus, in most scenarios of interest (e.g., classification, or linear regression) 
this will never occur.
Nevertheless, our results also allows for this case.

Also, notice that $\imod(\epsilon_1)$ is a non-decreasing function since for $\epsilon_1' > \epsilon_1$, the set 
$\braces{\epsilon_2: \mod(\epsilon_2) \geq \epsilon_1'} \subset \braces{\epsilon_2: \mod(\epsilon_2) \geq \epsilon_1}$.

\begin{lemma}[Link to Strong Modulus]
\label{lem:imod-link-to-strong-mod}
We have:  
$$\forall \epsilon_1 \in \AQ(\Hyp) \setminus [0, \pivot^\sharp], \quad \imod(\epsilon_1) = \inf\braces{\epsilon_2 > 0: \mod(\epsilon_1, \epsilon_2) = \epsilon_1}, .$$
\end{lemma}
\begin{proof} 
This follows since we have the tautology 
$\braces{\mod(\epsilon_2) \geq \epsilon_1} \equiv \braces{\min\braces{\epsilon_1, \mod(\epsilon_2)} = \epsilon_1}$, and $\epsilon_1 = \ddot \epsilon_1$. 
\end{proof} 

$\bullet$ {\bf Base Set.}
For each $l \in \nats$, 
for $\alpha = 2^{-l}$, 
define a finite set ${\cal A}_{\alpha} \subset (\pivot^{\sharp},\pivot^{\sharp} + 1/\alpha + \alpha)$ 
as the set of all values 
$\inf \!\left( \left[ \pivot^{\sharp}\!+\!i 2^{-l}, \pivot^{\sharp} \!+\! (i\!+\!1) 2^{-l} \right) \!\cap \AQ(\Hyp) \right)$, 
among $i \in \{1,\dots,2^{2l}\}$ for which this set is non-empty.

\textbf{Properties:}
\begin{enumerate} 
\item For every $l,l' \in \nats$ 
with $l > l'$, 
${\cal A}_{2^{-l'}} \subset {\cal A}_{2^{-l}}$.
\item For every $\epsilon_1 \in \AQ(\Hyp) \cap [\pivot^\sharp + \alpha, \pivot^{\sharp} + \frac{1}{\alpha})$, 
there exists $\epsilon'_1 \in {\cal A}_{\alpha} \cap [\epsilon_1 - \alpha, \epsilon_1]$.
\end{enumerate}

$\bullet$ {\bf Approximation parameters.} For every fixed non-empty $\cal A_\alpha$, pick $\zeta, \beta > 0$ to satisfy: 
$$\zeta < \frac{\alpha}{2} \land \frac{1}{2} \min\braces{\abs{\epsilon_1 - \epsilon_1'}: \epsilon_1, \epsilon_1' \text{ are distinct values in } {\cal A}_\alpha},$$
or any $\zeta \in (0,\alpha)$ if $|{\cal A}_\alpha|=1$, and 
\begin{equation} 
\label{eqn:beta-constraint}
\beta < \frac{1}{2} \min\braces{\abs{\imod(\epsilon_1) - \imod(\epsilon_1')}: \imod(\epsilon_1), \imod(\epsilon_1') \text{ are distinct values, and } \epsilon_1,\epsilon_1' \in {\cal A}_\alpha},
\end{equation}
or any $\beta > 0$ if $|\{ \imod(\epsilon_1) : \epsilon_1 \in {\cal A}_\alpha \}|=1$.

$\bullet$ {\bf Candidate Monotone Subsets.} For any $\alpha>0$, fix $\zeta = \zeta(\alpha)$, as defined above and consider any admissible approximation parameter $\beta>0$. We consider subsets  
$\Hyp_{\alpha, \beta}$ of $\Hyp$, satisfying the following. 

Let $\epsilon_{(1)} < \epsilon_{(2)} \ldots < \epsilon_{(m)}$ denote all ordered elements of ${\cal A}_\alpha$. Starting with $i=m$, pick 
\begin{align} 
& h_i \in \Hyp: \quad \E_Q(h_i) \in [\epsilon_{(i)} - \zeta, \epsilon_{(i)}] \text { and } 
\imod(\epsilon_{(i)}) - \beta \leq \E_P(h_i) \leq \min\braces{\imod(\epsilon_{(i)}), \E_P(h_{i+1})}\!, \text{ if feasible, else pick}
\\ & h_i \in \Hyp: \quad \E_Q(h_i) \in [\epsilon_{(i)} - \zeta, \epsilon_{(i)}] \text { and } 
\E_P(h_i) < \min\braces{\imod(\epsilon_{(i)}) + \beta, \E_P(h_{i+1})},
\end{align}
where we define $\E_P(h_{m+1})=\infty$ for simplicity. 
We then define $\Hyp_{\alpha,\beta} \doteq \{h_1,\ldots,h_m\}$.

\begin{remark}[Proof Structure]\label{rem:proof-structure}
Note that if such an assignment of $\Hyp_{\alpha, \beta}$ exists, it is by construction monotone (i.e., satisfies Definition \ref{defn:monotonic-transfer-problem} (a)). We therefore will first show that it exists, followed by arguing that the collection $\braces{\Hyp_{\alpha, \beta}}, \alpha, \beta\to 0$ necessarily satisfies pseudo-denseness (Definition \ref{defn:monotonic-transfer-problem} (b)). 
\end{remark}

We start with the following key lemma. 

\begin{lemma} \label{lem:EQ-EP-zeta-eta-choice}
Consider any $\epsilon_1 \in \AQ(\Hyp), \epsilon_1 > \pivot^\sharp$, and let $\epsilon_2$ satisfy
$\mod(\epsilon_1, \epsilon_2) = \epsilon_1$. Then for any $\zeta, \eta > 0$, $\exists h$ such that  
$$\E_Q(h) \in [\epsilon_1 - \zeta, \epsilon_1] \text{ and } \E_P(h) \in [\imod(\epsilon_1) -\eta, \epsilon_2].$$
\end{lemma}
\begin{proof} 
First notice that 
$\Hyp_P(\epsilon_2) \cap \Hyp_Q(\epsilon_1) \setminus \Hyp_Q(\epsilon_1 -\zeta)$ is non-empty: 
if it were empty, then $\mod(\epsilon_1, \epsilon_2) \leq \epsilon_1 -\zeta$ (contradicting $\mod(\epsilon_1,\epsilon_2)=\epsilon_1$) since by Corollary~\ref{cor:pivotal-value} the strong modulus equals $\sup\braces{\E_Q(h): h\in \Hyp_P(\epsilon_2) \cap \Hyp_Q(\epsilon_1)}$. Hence, we can define: 
$$\epsilon_2' = \sup\braces{\E_P(h): h \in \Hyp_P(\epsilon_2) \cap \Hyp_Q(\epsilon_1) \setminus \Hyp_Q(\epsilon_1 -\zeta)}.$$
It holds that $\mod(\epsilon_1, \epsilon_2') = \epsilon_1$, since $\mod(\epsilon_1, \epsilon_2') \leq \epsilon_1$ by definition, whereas 
$$\Hyp_P(\epsilon_2') \cap \Hyp_Q(\epsilon_1) \text{ contains } \Hyp_P(\epsilon_2) \cap \Hyp_Q(\epsilon_1) \setminus \Hyp_Q(\epsilon_1 -\zeta),$$
implying $\mod(\epsilon_1, \epsilon_2') \geq \sup \braces{\E_Q(h): h \in \Hyp_P(\epsilon_2) \cap \Hyp_Q(\epsilon_1) \setminus \Hyp_Q(\epsilon_1 -\zeta)} = \epsilon_1$. 

We therefore have $\imod(\epsilon_1) \leq \epsilon_2' \leq \epsilon_2$, since by Lemma~\ref{lem:imod-link-to-strong-mod}, $\imod(\epsilon_1)$ is the infimum of  
$\braces{\epsilon: \mod(\epsilon_1, \epsilon) = \epsilon_1}$. 

It follows that $\exists h \in \Hyp_P(\epsilon_2) \cap \Hyp_Q(\epsilon_1) \setminus \Hyp_Q(\epsilon_1 - \zeta)$ 
with $\E_P(h) \geq \epsilon_2' - \eta$, establishing the claim.
\end{proof}

\begin{corollary}
\label{cor:EQ-EP-zeta-beta-choice}
For any $\epsilon_1 \in \AQ(\Hyp)$ with $\epsilon_1 > \pivot^\sharp$, and any $\zeta, \eta >0$, $\exists h \in \Hyp$ such that 
$$\E_Q(h) \in [\epsilon_1 - \zeta, \epsilon_1] \text { and } 
\E_P(h) \in \left[\imod(\epsilon_1)-\eta,  \imod(\epsilon_1) + \eta\right].$$
\end{corollary}
\begin{proof} 
Pick $\epsilon_2 \in (\imod(\epsilon_1), \imod(\epsilon_1) + \eta)$. 
By definition of $\imod(\epsilon_1)$ and monotonicity of $\mod(\cdot)$,  
we have $\mod(\epsilon_2) \geq \epsilon_1$.
Therefore, by Assumption~\ref{assum:no-gap-above-pivot-sharp}, 
$\mod(\epsilon_1,\epsilon_2) = \min\{ \epsilon_1, \mod(\epsilon_2) \} = \epsilon_1$.
Now apply Lemma~\ref{lem:EQ-EP-zeta-eta-choice}. 
\end{proof} 

The above immediately implies a first simple recursive condition for choices of $h_i$. 

\begin{corollary}[Conditional choice of $h_i$] \label{cor:cond-choice-h-i}
Consider the construction of $\Hyp_{\alpha, \beta}$. 
For any $i \in [m-1]$, suppose a choice of $h_{(i+1)}$ was made and satisfies $\E_P(h_{(i+1)}) \geq \imod(\epsilon_{(i)})$. Then the choice of $h_i$ in $\Hyp_{\alpha,\beta}$ is also feasible.
\end{corollary}
\begin{proof} 
First, suppose $\E_P(h_{(i+1)}) > \imod(\epsilon_{(i)})$. 
Letting $0 < \eta < \min\braces{ \beta, \E_P(h_{(i+1)}) - \imod(\epsilon_{(i)})}$, 
the statement is obtained by Corollary~\ref{cor:EQ-EP-zeta-beta-choice}.

Now suppose $\E_P(h_{(i+1)}) = \imod(\epsilon_{(i)})$. Write $\epsilon_2 \doteq \imod(\epsilon_{(i)})$, and notice that 
$\mod(\epsilon_2) \geq \E_Q(h_{i+1}) \geq \epsilon_1$. Thus, by Assumption \ref{assum:no-gap-above-pivot-sharp}, we have $\mod(\epsilon_{(i)}, \epsilon_2) = \epsilon_{(i)}$. Lemma \ref{lem:EQ-EP-zeta-eta-choice} therefore holds for the pair $\epsilon_{(i)}$, $\epsilon_2 \doteq \imod(\epsilon_{(i)})$.  
\end{proof}

Finally we verify that $\E_P(h_{(i+1)}) < \imod(\epsilon_{(i)})$ cannot actually happen, i.e., the above covers all cases. 

\begin{lemma} \label{lem:approx-from-above}
Let $\epsilon_1, \epsilon_1'\in \AQ(\Hyp)$, satisfying $\epsilon_1' > \epsilon_1 > \pivot^\sharp$.
Then, for any choice of $0< \zeta < \epsilon_1' - \epsilon_1$, it holds for all 
$h\in \Hyp_Q(\epsilon_1')\setminus \Hyp_Q(\epsilon_1' - \zeta)$ that 
$\E_P(h) \geq \imod(\epsilon_1)$. 
\end{lemma}
\begin{proof} 
Pick any such $h\in \Hyp_Q(\epsilon_1')\setminus \Hyp_Q(\epsilon_1' - \zeta)$ and write $\epsilon_2 \doteq \E_P(h)$. We have that 
$\mod(\epsilon_2) \geq \E_Q(h) \geq \epsilon_1' - \zeta \geq \epsilon_1$, hence, by definition of the inverse, 
we have $\epsilon_2 \geq \imod(\epsilon_1)$. 
\end{proof}

\begin{corollary}[$\Hyp_{\alpha, \beta}$ is well-defined.]\label{cor:construction-is-sound}
Fix any $\alpha, \beta > 0$. All choices of $h_i$ in the construction of $\Hyp_{\alpha, \beta}$ are feasible. 
\end{corollary} 
\begin{proof} 
By Corollary \ref{cor:EQ-EP-zeta-beta-choice}, $h_m$ can be chosen to satisfy the conditions of the construction. The induction then proceeds top-down from $i=(m-1)$ to $i=1$ by applying Lemma \ref{lem:approx-from-above} 
(with  $\epsilon_1' = \epsilon_{(i+1)}$ and $\epsilon_1 = \epsilon_{(i)}$) and Corollary \ref{cor:cond-choice-h-i}. 
\end{proof}

Referring back to Remark \ref{rem:proof-structure}, it is left to show that the collection $\Hyp_{\alpha, \beta}$ satisfies \emph{pseudo-denseness} (Definition \ref{defn:pseudo-dense-monotone-sets}). 

\begin{lemma}[Pseudo-denseness]
There exists a sequence of subsets $\braces{\Hyp_k}, k \in  \nats,$ of $\Hyp$, where each $\Hyp_k = \Hyp_{\alpha, \beta}$ for some admissible choice of $\alpha, \beta$, such that $\braces{\Hyp_k}$ is pseudo-dense in $\Hyp$. 
\end{lemma}
\begin{proof} 
Assume the conclusion of Corollary \ref{cor:construction-is-sound} that $\Hyp_{\alpha, \beta}$ is well defined for all admissible pairs $\alpha, \beta>0$. In what follows, for every admissible $\alpha$, we let $\beta(\alpha)$ denote either (i) the largest element of 
$\{2^{-l}\land \alpha: l \in \nats\}$ that is admissible for $\alpha$ (as per \eqref{eqn:beta-constraint}) if $|\{ \imod(\epsilon_1) : \epsilon_1 \in {\cal A}_\alpha \}|>1$, or (ii) $\beta = \alpha$ otherwise. We henceforth use the short notation $\Hyp_k, k \in \nats,$ for $\Hyp_{\alpha, \beta}$, $\alpha = 2^{-k}$, and $\beta = \beta(\alpha)$.

We consider the two aspects of pseudo-denseness separately.

$\bullet$ Pick any $\epsilon_1 \in \AQ(\Hyp)$ with $\epsilon_1 > \pivot^\sharp$. Then for any $\tau >0$, {for any $0< \alpha \leq (\tau/2) \land  (1/\epsilon_1)$}, pick $\epsilon_1' \in  {\cal A}_\alpha \cap [\epsilon_1 - \alpha, \epsilon_1]$. Now, {for any admissible $\beta>0$}, by construction there exists $h \in \Hyp_{\alpha, \beta}$ satisfying 
$$\epsilon_1 \geq \epsilon_1' \geq \E_Q(h) \geq \epsilon_1' - \alpha\geq \epsilon_1 - 2\alpha \geq \epsilon_1 - \tau.$$

Thus, there exists $k_0 = k_0(\tau, \epsilon_1)$ such that for all $k \geq k_0$, $\braces{\E_Q(h): h \in \Hyp_k} \cap [\epsilon_1-\tau, \epsilon_1] \neq \emptyset$. 

$\bullet$ Pick any $\epsilon_2 >0$, {such that $\mod(\epsilon_2) > \pivot^\sharp$}. Let $\epsilon_1 \doteq \mod(\epsilon_2)$; by definition $\epsilon_1\in \AQ(\Hyp)$. Then for any $\tau >0$, {let $0< \alpha_\tau \leq (\tau/2) \land  (1/\epsilon_1)$}, and pick $\epsilon_1' \in  {\cal A}_{\alpha_\tau} \cap [\epsilon_1 - \alpha_\tau, \epsilon_1]$. Define $\epsilon_2' \doteq \imod(\epsilon_1')$. Since $\imod(\cdot)$ is non-decreasing, we necessarily have 
$\epsilon_2' \leq \imod(\epsilon_1) \leq \epsilon_2$. 
We consider a couple cases. In all that follows, for simplicity we define for any $\Hyp' \subset \Hyp$, and $\epsilon>0$, 
$$\mod(\epsilon; \Hyp', \Hyp) \doteq 
\sup \braces{\E_Q(h): h \in \Hyp', \E_P(h) \leq \epsilon}.$$

-- Suppose $\epsilon_2' < \epsilon_2$. {Consider any $0 < \alpha < \alpha_\tau$ and any admissible $\beta$ (for $\alpha$) with $0<\beta \leq \epsilon_2 - \epsilon_2'$}. By construction there exists $h \in \Hyp_{\alpha, \beta}$ such that 
$$\E_Q(h)\geq  \epsilon_1' - \alpha \geq \epsilon_1 - \tau = \mod(\epsilon_2) - \tau,$$
while $\E_P(h) \leq \imod(\epsilon_1') + \beta \doteq  \epsilon_2' + \beta \leq \epsilon_2$. Hence $\mod(\epsilon_2; \Hyp_{\alpha, \beta}, \Hyp) \geq \mod(\epsilon_2) -\tau$ as witnessed by $h$. 

Noting that, for any sufficiently small $\alpha \in \{2^{-k} : k \in \nats\}$, 
we have $\beta(\alpha) \leq \epsilon_2 - \epsilon'_2$, 
it follows that there exists $k_0 = k_0(\tau,\epsilon_2)$ such that every $k \geq k_0$ has $\mod(\epsilon_2;\Hyp_k,\Hyp) \geq \mod(\epsilon_2) - \tau$.

-- Suppose $\epsilon_2' = \imod(\epsilon_1) = \epsilon_2$. Then, since $\mod(\epsilon_2) = \epsilon_1 \geq \epsilon_1'$, we have by Assumption \ref{assum:no-gap-above-pivot-sharp} that 
$\mod(\epsilon_1', \epsilon_2) = \epsilon_1'$. It follows by Lemma \ref{lem:EQ-EP-zeta-eta-choice} that {for any $\alpha \leq \alpha_\tau$ and for any admissible $\beta>0$ (for $\alpha$)}, there exists $h \in \Hyp$ such that 
$$\E_Q(h)\geq  \epsilon_1' - \alpha \geq \epsilon_1 - \tau = \mod(\epsilon_2) - \tau,$$
while 
$$\imod(\epsilon_1') - \beta \leq \E_P(h) \leq \epsilon_2 = \epsilon_2' \doteq \imod(\epsilon_1').$$
Hence some such $h$ would have been included in $\Hyp_{\alpha, \beta}$ (see first condition on the choice of $h_i$ in the \emph{Candidate Monotone Subsets} construction). It follows that $\mod(\epsilon_2; \Hyp_{\alpha, \beta}, \Hyp) \geq \mod(\epsilon_2) -\tau$ as witnessed by this $h$. 
Therefore, for every $k \in \nats$ with $2^{-k} \leq \alpha_\tau$, 
it holds that $\mod(\epsilon_2;\Hyp_k,\Hyp) \geq \mod(\epsilon_2)-\tau$.
\end{proof}

\section{Lower Bounds for the Strong Modulus}\label{app:lowerBoundStrong}

\begin{proof}[Proof of Theorem \ref{thm:strong-modulus-learning-lower-bound}]
As some parts of this proof are identical to that of the 
weak modulus lower bound (proof of Theorem~\ref{thm:mod-class-lower-bound}), 
we mainly discuss here the parts that are different,
and for the sake of brevity we simply reference the 
appropriate portions of the proof of Theorem~\ref{thm:mod-class-lower-bound} 
in the portions that would be identical.

As in the proof of Theorem~\ref{thm:mod-class-lower-bound}, 
we will establish both of the claims simultaneously, 
which we refer to as claims (i) and (ii), respectively, 
via appropriately unified notation.
For claim (ii), let $d = \dim_{\Hyp}-2$, $\kappa_0 = \kappa$.
For claim (i), instead let $d = 1$ and $\kappa_0 = 1$.
In either case, let $x_Q,x_P,x_1,\ldots,x_d$ be a shatterable
subset of $\X$ under $\Hyp$.
These points will form the support of marginals $P_X$, $Q_X$.

We will construct a family of distribution pairs $(P_{\sigma},Q_{\sigma}) \in \Sigma_{\Hyp}(g,\beta_P,\beta_Q)$, $\sigma \in \{-1,1\}^d$, 
such that the claimed lower bound holds for some choice of $(P,Q)$ from among these.
For each $\sigma \in \{-1,1\}^d$, 
define $P_{\sigma}$, $Q_{\sigma}$, as follows: 
let $\eta_{P,\sigma}$, $\eta_{Q,\sigma}$ denote the corresponding regression functions 
$\EE_{P_{\sigma}}[Y|X=x]$, $\EE_{Q_{\sigma}}[Y|X=x]$, respectively.
Let 
$\epsilon_P = c_0 \cdot \left(\frac{d}{n_P}\right)^{\frac{1}{2-\beta_P}}$, 
$\epsilon_Q = \left(\frac{d}{n_Q}\right)^{\frac{1}{2-\beta_Q}}$, 
and $\epsilon = c_1 \cdot g(\epsilon_Q,\epsilon_P)$, 
for some $c_0$, $c_1 < 1$ to be defined so that $\epsilon_P,\epsilon < 1/2$.
\begin{itemize}
\item \emph{Distribution} $Q_{\sigma}$: We set $Q_{\sigma} = Q_X \times Q^{\sigma}_{Y|X}$, where $Q_X(x_Q) = 1 - \frac{1}{\kappa_0} \epsilon^{\beta_Q} - g(\epsilon_Q,\epsilon_P)$,
$Q_X(x_P) = g(\epsilon_Q,\epsilon_P)$, 
$Q_X(x_i) = \frac{1}{d \kappa_0} \epsilon^{\beta_Q}$ for all $i \geq 1$.
The conditional $Q^{\sigma}_{Y|X}$ is fully determined by 
$\eta_{Q,\sigma}(x_P) = \eta_{Q,\sigma}(x_Q) = 1$, 
and $\eta_{Q,\sigma}(x_i) = 1/2 + (\sigma_i/2)\cdot \epsilon^{1-\beta_Q}$, $i \geq 1$.
\item \emph{Distribution} $P_{\sigma}$: We set $P_{\sigma} = P_X \times P^{\sigma}_{Y|X}$, where
$P_X(x_P) = 1 - \epsilon_P^{\beta_P}$, $P_X(x_Q)=0$, $P_X(x_i) = \frac{1}{d} \epsilon_P^{\beta_P}$, $i \geq 1$.
The conditional $P^{\sigma}_{Y|X}$ is fully determined by 
$\eta_{P,\sigma}(x_P) = 1$ and $\eta_{P,\sigma}(x_i) = 1/2+(\sigma_i/2)\cdot \epsilon_P^{1-\beta_P}$, $i \geq 1$.
\end{itemize}

- \emph{Verifying that } $(P_{\sigma},Q_{\sigma}) \in \Sigma_{\Hyp}(g,\beta_P,\beta_Q)$.
For any $\sigma \in \{-1,1\}^d$, 
for any $a,b \in \{0,1\}$, 
let $h_{\sigma}^{(a,b)} \in \Hyp$ have each $h_{\sigma}^{(a,b)}(x_i) = \ind{ \sigma_i = 1 }$, 
and $h_{\sigma}^{(a,b)}(x_Q)=a$, $h_{\sigma}^{(a,b)}(x_P)=b$.
Since $x_Q,x_P,x_1,\ldots,x_d$ are shattered, such $h_{\sigma}^{(a,b)}$ exist in $\Hyp$.
Moreover, since the distributions $P_{\sigma},Q_{\sigma}$ are all supported on $x_Q,x_P,x_1,\ldots,x_d$, 
we may restrict $\Hyp$ to just these functions without loss of generality.
For each $\sigma$, let $h_{\sigma} = h_{\sigma}^{(1,1)}$ denote a Bayes classifier 
(noting that the same function is Bayes for both $P_{\sigma}$ and $Q_{\sigma}$).

We first note that, under $(P_{\sigma},Q_{\sigma})$, 
any $h_{\sigma'}^{(0,b)}$ (i.e., $a = 0$) has $\E_{Q_{\sigma}}(h_{\sigma'}^{(0,b)}) \geq \frac{1}{2}$,
and likewise any $h_{\sigma'}^{(a,0)}$ (i.e., $b = 0$) has $\E_{P_{\sigma}}(h_{\sigma'}^{(a,0)}) \geq \frac{1}{2}$.
Let $\dist(\sigma,\sigma')$ denote the 
Hamming distance between $\sigma$, $\sigma'$ (as in Proposition~\ref{prop:tsy25}).
We then have that
\begin{align} 
& \E_{Q_{\sigma}}(h_{\sigma'}^{(1,b)}) = \ind{ b = 0 } Q_X(x_P) + \dist(\sigma,\sigma') \cdot \frac{1}{d \kappa_0} \epsilon^{\beta_Q} \epsilon^{1-\beta_Q} = \ind{ b = 0 } Q_X(x_P) + \frac{\dist(\sigma,\sigma')}{d \kappa_0} \cdot \epsilon, 
\\ & \text{ while } Q_X(h_{\sigma'}^{(1,b)} \neq h_{\sigma}) = \ind{ b = 0 } Q_X(x_P) + \frac{\dist(\sigma,\sigma')}{d \kappa_0} \cdot \epsilon^{\beta_Q}.
\end{align}
Noting that $x \mapsto x^{\beta_Q}$ is concave, we have
\begin{align}
\left( \ind{ b = 0 } Q_X(x_P) + \frac{\dist(\sigma,\sigma')}{d \kappa_0} \cdot \epsilon \right)^{\beta_Q}
& \geq \left( \ind{ b = 0 } Q_X(x_P) \right)^{\beta_Q} + \left( \frac{\dist(\sigma,\sigma')}{d \kappa_0} \right)^{\beta_Q} \cdot \epsilon^{\beta_Q}
\\ & \geq \ind{ b = 0 } Q_X(x_P) + \frac{\dist(\sigma,\sigma')}{d \kappa_0} \cdot \epsilon^{\beta_Q},
\end{align}
where the last inequality follows from $\dist(\sigma,\sigma') \leq d$ and $\kappa_0 \geq 1$.
Therefore, $Q_{\sigma}$ satisfies BCC (Definition~\ref{def: Bernstein noise condition})
with parameters $(1,\beta_Q)$.
Similarly, 
\begin{align}
\E_{P_{\sigma}}(h_{\sigma'}^{(a,1)}) = \frac{\dist(\sigma,\sigma')}{d} \cdot \epsilon_P, \text{ while } P_X(h_{\sigma'}^{(a,1)} \neq h_{\sigma}) = \frac{\dist(\sigma,\sigma')}{d} \cdot \epsilon_P^{\beta_P}
\leq \left( \frac{\dist(\sigma,\sigma')}{d} \right)^{\beta_P} \cdot \epsilon_P^{\beta_P},
\end{align}
so that $P_{\sigma}$ also satisfies BCC (Definition~\ref{def: Bernstein noise condition}) holds with parameters $(1,\beta_P)$.

Next we verify that $\mod_{P_{\sigma},Q_{\sigma}}(\epsilon_1,\epsilon_2) \leq g(\epsilon_1,\epsilon_2)$ for every $0< \epsilon_1,\epsilon_2 \leq 1/4$.
Since $h^{(1,1)}_{\sigma}$ has minimal 
$\E_{Q_{\sigma}}$ and $\E_{P_{\sigma}}$
(among all possible functions), 
we have $\pivot = 0$ for $(P,Q) = (P_{\sigma},Q_{\sigma})$,
and hence 
Proposition~\ref{prop:pivotal-equivalence} implies
$\mod_{P_{\sigma},Q_{\sigma}}(\epsilon_1,\epsilon_2) = \sup\{ \E_{Q_{\sigma}}(h) : h \in \Hyp_{Q_{\sigma}}(\epsilon_1) \cap \Hyp_{P_{\sigma}}(\epsilon_2) \}$.
In particular, since $\epsilon_1,\epsilon_2 \leq 1/4$, 
we have already argued above that any 
$h^{(a,0)}_{\sigma'}$ or $h^{(0,b)}_{\sigma'}$ 
is \emph{not} in $\Hyp_{Q_{\sigma}}(\epsilon_1) \cap \Hyp_{P_{\sigma}}(\epsilon_2)$, 
so that we can restrict our focus on functions 
$h_{\sigma'} \doteq h^{(1,1)}_{\sigma'}$.

First, in case (i), 
we have  
$\E_{Q_{\sigma}}(h_{-\sigma}) = g(\epsilon_Q,\epsilon_P)$
and $\E_{P_{\sigma}}(h_{-\sigma}) = \epsilon_P$.
Thus, for any $\epsilon_1 < g(\epsilon_Q,\epsilon_P)$ 
or $\epsilon_2 < \epsilon_P$, 
$\mod(\epsilon_1,\epsilon_2) = 0 \leq g(\epsilon_1,\epsilon_2)$.
On the other hand, for $1/4 \geq \epsilon_1 \geq g(\epsilon_Q,\epsilon_P)$ and $1/4 \geq \epsilon_2 \geq \epsilon_P$, 
we have $\mod(\epsilon_1,\epsilon_2) = \E_{Q_{\sigma}}(h_{-\sigma}) = g(\epsilon_Q,\epsilon_P)$.
If $\epsilon_1 \geq \epsilon_Q$, 
admissibility of $g$ implies 
$g(\epsilon_Q,\epsilon_P) \leq g(\epsilon_1,\epsilon_2)$.
Otherwise, if $\epsilon_1 < \epsilon_Q$, 
we have $g(\epsilon_Q,\epsilon_P) \leq \epsilon_1 < \epsilon_Q$, 
and the admissibility property (from Lemma~\ref{lem:gaps-are-open-intervals}) 
implies $g(\epsilon_1,\epsilon_P) = g(\epsilon_Q,\epsilon_P)$, 
so that $\mod(\epsilon_1,\epsilon_2) = g(\epsilon_1,\epsilon_P) \leq g(\epsilon_1,\epsilon_2)$.

Next, consider case (ii).
For any $\sigma' \in \{0,1\}^d$, 
we have (by the assumed properties of $f$)
\begin{equation}
    \E_{Q_{\sigma}}(h_{\sigma'})
    \leq \frac{c_1}{\kappa_0} \cdot \frac{\dist(\sigma,\sigma')}{d} \cdot f(\epsilon_P)
    \leq f\!\left( \frac{\dist(\sigma,\sigma')}{d} \epsilon_P \right)
    = f\!\left( \E_{P_{\sigma}}(h_{\sigma'}) \right).
\end{equation}
Therefore, for any $0 < \epsilon_1,\epsilon_2 \leq 1/4$, 
\begin{align}
\mod(\epsilon_1,\epsilon_2) 
& = \max\!\left\{ \E_{Q_\sigma}(h_{\sigma'}) : \E_{Q_\sigma}(h_{\sigma'}) \leq \epsilon_1, \E_{P_\sigma}(h_{\sigma'}) \leq \epsilon_2 \right\}
\\ & \leq \epsilon_1 \land \max\!\left\{ f(\E_{P_\sigma}(h_{\sigma'}) : \E_{P_\sigma}(h_{\sigma'}) \leq \epsilon_2 \right\}
= \min\{\epsilon_1,f(\epsilon_2)\} 
= g(\epsilon_1,\epsilon_2).
\end{align}

Next we argue the claim that each of these $(P_{\sigma},Q_{\sigma})$ pairs exhibits a gap 
between the strong and weak modulus.
Specifically, note that (since $P_X(x_Q)=0$)
$\mod(\epsilon_P) \geq Q_X(x_Q) \geq \frac{1}{2}$.
Moreover, since $Q_X(x_P) = g(\epsilon_Q,\epsilon_P)$, 
we have $\ddot{\epsilon}_Q \geq g(\epsilon_Q,\epsilon_P)$.
On the other hand, 
$\mod(\epsilon_Q,\epsilon_P) = \epsilon < g(\epsilon_Q,\epsilon_P)$.
Therefore, $\mod(\epsilon_Q,\epsilon_P) < \min\{ \ddot{\epsilon}_Q, \mod(\epsilon_P) \}$.

Finally, we note that the claimed lower bounds on $\E_Q(\hhat)$ 
follow identically (given the above construction and choice of $\epsilon$)
to the corresponding portion of the proof of Theorem~\ref{thm:mod-class-lower-bound} 
(namely, the reduction to a packing, KL bounds in terms of $n_P$ and $n_Q$, 
and the use of Proposition~\ref{prop:tsy25}).
For brevity, we refer the reader to the details 
of these steps in the proof of Theorem~\ref{thm:mod-class-lower-bound}.
\end{proof}
\end{document}